\definecolor{darkblue}{rgb}{0.0,0.5,0.5}
\definecolor{blue}{rgb}{0.0,0.5,0.68}
\definecolor{myblue}{RGB}{0,0,255}
\newcommand{\diag}{\mathop{\mathrm{diag}}}
\newtheorem{lemma}{Lemma}
\newtheorem{theorem}{Theorem}
\newtheorem{remark}{Remark}
\newtheorem{definition}{Definition}
\newtheorem{assumption}{Assumption}
\newtheorem{corollary}{Corollary}
\DeclareMathOperator{\Tr}{Tr}
\DeclareMathOperator{\myIm}{Im}
\newcommand{\minew}[1]{{\color{black}{#1}}}
\journal{Transportation Research Part C: Emerging Technologies}
\begin{document}

\begin{frontmatter}

%% Title, authors and addresses

%% use the tnoteref command within \title for footnotes;
%% use the tnotetext command for the associated footnote;
%% use the fnref command within \author or \address for footnotes;
%% use the fntext command for the associated footnote;
%% use the corref command within \author for corresponding author footnotes;
%% use the cortext command for the associated footnote;
%% use the ead command for the email address,
%% and the form \ead[url] for the home page:
%%
%% \title{Title\tnoteref{label1}}
%% \tnotetext[label1]{}
%% \author{Name\corref{cor1}\fnref{label2}}
%% \ead{email address}
%% \ead[url]{home page}
%% \fntext[label2]{}
%% \cortext[cor1]{}
%% \address{Address\fnref{label3}}
%% \fntext[label3]{}

\title{{\fontfamily{lmss}\selectfont Truncated tensor Schatten $p$-norm based approach for spatiotemporal traffic data imputation with complicated missing patterns}}

\author{Tong Nie}
\author{Guoyang Qin}
\author{Jian Sun\corref{cor1}}
\ead{sunjian@tongji.edu.cn}

\address{Department of Traffic Engineering \& Key Laboratory of Road and Traffic Engineering, Ministry of Education, Tongji University, Shanghai, China}

\cortext[cor1]{Corresponding author. Address: Cao’an Road 
4800, Shanghai 201804, China}

\begin{abstract}
Rapid advances in sensor, wireless communication, cloud computing and data science have brought unprecedented amount of data to assist transportation engineers and researchers in making better decisions. However, traffic data in reality often has corrupted or incomplete values due to detector and communication malfunctions. Data imputation is thus required to ensure the effectiveness of downstream data-driven applications.
%Missing data is a pervasive problem in real-world intelligent transportation systems that impedes the effectiveness of downstream data-driven applications. 
To this end, numerous %matrix/
tensor-based methods treating the imputation problem as the low-rank tensor completion (LRTC) have been attempted in previous works. To tackle rank minimization, which is at the core of the LRTC, most of aforementioned methods utilize the tensor nuclear norm (NN) as a convex surrogate for the minimization. However, the over-relaxation issue in NN refrains it from desirable performance in practice. In this paper, we define an innovative nonconvex truncated Schatten $p$-norm for tensors (TSpN) to approximate tensor rank and impute missing spatiotemporal traffic data under the %low-rank tensor completion (LRTC) 
LRTC framework. We model traffic data into a third-order tensor structure of \textit{time intervals}$\times$\textit{locations (sensors)}$\times$\textit{days} and introduce four complicated missing patterns, including random missing and three fiber-like missing cases according to the tensor mode-$n$ fibers. Despite nonconvexity of the objective function in our model, we derive the global optimal solutions by integrating the alternating direction method of multipliers (ADMM) with generalized soft-thresholding (GST). In addition, we design a truncation rate decay strategy to deal with varying missing rate scenarios. Comprehensive experiments are finally conducted using four different types of real-world spatiotemporal traffic datasets, which demonstrate that the proposed LRTC-TSpN method performs well under various missing cases even with high rates of data loss, meanwhile outperforming other state-of-the-art tensor-based imputation models in almost all scenarios. In general, the performance of the proposed method achieves up to a 52\% improvement in mean absolute error (MAE) compared to baseline methods even when the missing ratio is as high as 90\%. Moreover, both theoretical and empirical evidences indicate that our method has robust and efficient convergence performances.

\end{abstract}

\begin{keyword}
Spatiotemporal traffic data, missing data imputation, low-rank tensor completion, truncated tensor Schatten $p$-norm, fiber-like missing, generalized soft-thresholding
\end{keyword}

\end{frontmatter}

\section{Introduction}

In recent years, rapid advances in sensor, wireless communication, cloud computing and data science have catalyzed the development of intelligent transportation systems (ITS) and urban computing. An unprecedented amount of spatiotemporal traffic data has been generated, collected and stored by virtue of loop sensors, surveillance cameras, connected and autonomous vehicles, smart cards and smartphones with Bluetooth in distributed transportation information systems, assisting engineers and researchers in making better decisions in domains such as traffic state estimation, proactive traffic control, and demand prediction \citep{guo2019urban,du2020traffic,cipriani2021traffic,mousavizadeh2021real,rempe2022estimation}. 
%Spatiotemporal traffic data is spatially distributed time series with properties being multi-granular, heterogeneous, and dynamic. 
However, traffic data in reality often has corrupted or missing values due to detector failure, low sensor coverage and communication disruptions \citep{du2020missing}, which can adversely affect data usability and subsequent traffic management and control. Data imputation is thus required in the flow of the traffic data applications. Since replacement of missing values can, in fact, be arbitrary, some assumptions have to be made on how the data came to be missing to ensure the correct prediction of missing values \citep{sportisse2020imputation}. To this end, a large number of studies are found to leverage prior knowledge to restore unavailable information from observations. Among these studies, the low-rank tensor completion (LRTC) method, which treats data as a tensor with assumed low rank or low dimensionality, is proved to be superior, thereby stimulating extensive studies of this tensor learning-based imputation method \citep{chen2020nonconvex,chen2021scalable,deng2021graph,chen2021Autoregressive}.
As traffic data is characteristic of considerable correlations and similarities, or algebraically, high linear dependence and low rank, due to its day-to-day recurrence and neighborhood similarity, it's desirable to cast traffic data imputation as an LRTC problem on the assumption of low rank property. 
% This means that incomplete spatiotemporal data with undesirable qualities may jeopardize the effectiveness of data usage, thus exerting adverse impact on establishing proper traffic management and control strategies. Therefore, imputing plausible values of missing data effectively is a prerequisite for establishing highly trustworthy ITS. 
% Since spatiotemporal traffic data has intrinsic correlations and similarities (e.g., day-to-day recurrence and neighborhood similarity), which are usually manifested as multi-dimensional linear dependence in algebra, filling the missing value with a low-rank or low-dimensional solution is justified to be rational. 

Organizing traffic data as tensors, a high-order generalization of matrix, has found favor with utilization and capture of multi-modal correlations (e.g., time of day, day of week and location mode) simultaneously and processing of information on each mode of tensor efficiently, which enables effective and efficient data imputation. However, extending matrix to tensor makes the definition of tensor rank, which is essential to the LRTC method, other than favorable. Taking two most prevailing definitions of tensor rank, i.e., CP-rank and $n$-rank (a.k.a., tucker rank) \citep{kolda2009tensor}, for instance, 
% In order to exploit this data structure to establish an LRTC model, a natural thought is to extend the matrix rank to high-order tensor case. 
% Tensor rank is pseudo rank established on the extended notion of matrix rank, and two most prevailing definitions of tensor rank are CP-rank and $n$-rank (a.k.a., tucker rank) \citep{kolda2009tensor}. 
%However, unlike the matrix case, these tensor rank definitions are intractable, as 
the determination of CP-rank is verified to be NP-hard and the $n$-rank is a difficult non-convex problem \citep{haastad1990tensor}. The intractability of these definitions has led researchers to define various convex surrogates to approximate the original problem. Having been proved to be the tightest convex envelope of nonconvex rank \citep{liu2013tensor}, tensor nuclear norm (NN) becomes the most prominent tucker rank approximation. However, it is still prone to over-shrinkage and deviation from the actual rank values. To overcome the drawback, many researchers have recently designed novel nonconvex rank functions to substitute for NN and have shown the strength of nonconvex rank approximation \citep{yao2018large,chen2020nonconvex,yang2021novel}. Among these works, \cite{chen2020nonconvex} adopts a truncated nuclear norm (TNN) for tensors to handle the traffic data imputation task, and introduce a truncation rate parameter to globally control the numbers of contributing singular values, named LRTC-TNN. Despite the nonconvexity of the objective function of TNN, its truncated singular values are still subject to the over-relaxation issue, which can refrain the method from capturing the real low-rank mode. To further overcome the drawbacks of these NN-based imputation method, studies from computer vision and information communities including \cite{nie2012low},  \cite{nie2012robust},  \cite{feng2016image}, and \cite{chen2018human} propose a new matrix Schatten $p$-norm (SpN) and its truncated version to restore images affected by noises and randomly missing pixels, where SpN is demonstrated to be a better nonconvex rank approximation function than NN. Nevertheless, their matrix SpN-based methods can not be applied to traffic data imputation tasks directly as they are mainly tailored for RGB images with fixed 'color' channel. 

Moreover, unlike visual data imputation that mainly faces completely random pixels loss \citep{liu2013tensor}, spatiotemporal traffic data often suffers from more complex missing patterns with spatial-temporal correlations. In real traffic datasets, more "clustered" and "structured" missing situations are observed and can be basically classified into three scenarios: (1) one or several days' records of some sensors are all missing; (2) a specific duration of records of all sensors are missing; (3) records are missing due to out-of-service detectors in regular hours everyday. These extreme missing scenarios complicate the data imputation and may disable the low-rank matrix completion based method as a matrix can never be recovered under low-rank assumptions if there are whole rows (or columns) of a matrix missing \citep{tan2013tensor}, hence the huge challenge of recovering such structured missing patterns and evaluating corresponding impacts of dataset features on imputation accuracy. To tackle these existing issues for tensor completion-based traffic data imputation, we introduce a novel truncated tensor Schatten $p$-norm (TSpN), which can be viewed as extension of TNN as well as matrix truncated SpN, to conduct an accurate low-rank tensor completion (LRTC-TSpN) for spatiotemporal traffic data. The primary contributions of this work are summarized as follows: 
\begin{itemize}
\item We define a novel TSpN for tensors that jointly absorbs the merits of both TNN and SpN. By keeping the principal singular values unchanged to avoid over-shrinkage and conducting nonconvex regularization on remaining singular values, the proposed TSpN stands for a better tensor rank surrogate with more accuracy and flexibility, which can substitute for TNN in this sense.
\item We deduce the numerical solution to the nonconvex LRTC-TSpN problem by combining the alternating direction method of multipliers (ADMM) with generalized soft-thresholding (GST). We also theoretically prove that the proposed iterative algorithm guarantees to convergence to the global optimum. To the best of our knowledge, we are the first to propose tensor truncated Schatten $p$-norm and corresponding solving algorithms for its minimization problem. 
\item To deal with scenarios with varying missing rates effectively, we design a simple truncation rate decay strategy to automatically assign a suitable truncation rate parameter for model. This simple but enlightening technique compensates for the deficiency of previous methods when dealing with diverse missing ratios, especially for the extreme loss cases.
\item To fully evaluate the generalization ability of our method, we introduce three highly structured fiber-like missing patterns according to tensor mode-$n$ fibers. Furthermore, we conduct comprehensive experiments on four different types of spatiotemporal traffic datasets with multiple missing scenarios compared with some state-of-the-art models, and the results clearly reveals the superiority of our method.
\end{itemize} 

The rest of this article is organized as follows. Section \ref{Literature review} reviews related works focused on traffic data imputation. Section \ref{Notations and Problem Definitions} gives some basic notions of tensor and defines the imputation problem. In \minew{Section} \ref{methodology}, we formulate the LRTC-TSpN model and derive our solving algorithm. In \minew{Section} \ref{experiments}, we conduct experiments on different datasets and present discussions about the results. Section \ref{conclusions} concludes this paper and gives prospects of future research directions.

\section{Literature review}\label{Literature review}
We summarizes some of the existing studies of traffic data imputation according to the types of model, missing patterns and types of test data in \minew{Table} \ref{review}, which can be come down to three major classes according to the form of models: matrix-based, tensor-based, and deep learning methods.

\begin{sidewaystable}[hp]
  \centering
  \caption{Summary of existing traffic data imputation methods}
  \renewcommand\arraystretch{1.5}
  %\scalebox{0.6}{
  \resizebox{\textwidth}{!}{
    \begin{tabular}{cc|lp{33.835em}l}
    \toprule
    \multicolumn{2}{c}{\multirow{2}[2]{*}{Type of models}} & \multicolumn{1}{c}{\multirow{2}[2]{*}{Method}} & \multicolumn{1}{c}{\multirow{2}[2]{*}{Missing patterns}} & \multicolumn{1}{c}{\multirow{2}[2]{*}{Data type}} \\
    \multicolumn{2}{c}{} &       & \multicolumn{1}{c}{} &  \\
    \midrule
    \multicolumn{1}{c|}{\multirow{11}[6]{*}{Matrix based}} & \multicolumn{1}{c}{\multirow{5}[2]{*}{Matrix factorization}} & Sparsity regularized matrix factorization \citep{zhang2009spatio} & Random missing, row loss, column loss & OD \\
    \multicolumn{1}{c|}{} & \multicolumn{1}{c}{} & LSTM and Graph Laplacian regularized matrix factorization \citep{yang2021real}& \multicolumn{1}{l}{Random missing, fiber mode-1 missing} & Speed \\
    \multicolumn{1}{c|}{} & \multicolumn{1}{c}{} & Joint matrix factorization \citep{jia2021missing} & \multicolumn{1}{l}{Random missing} & Congestion data \\
    \multicolumn{1}{c|}{} & \multicolumn{1}{c}{} & Bayesian temporal matrix factorization \citep{chen2021bayesian}& \multicolumn{1}{l}{Random missing, fiber mode-0 missing} & Speed, passenger flow \\
    \multicolumn{1}{c|}{} & \multicolumn{1}{c}{} & Spatial-temporal regularized matrix factorization \citep{wang2018traffic} & Random loss, time-mode loss, space-mode loss & OD, speed, volume \\
\cmidrule{2-5}    \multicolumn{1}{c|}{} & \multicolumn{1}{c}{\multirow{2}[2]{*}{Matrix completion}} & Fixed point continuation with approximate SVD  (nuclear norm) \citep{asif2016matrix} & \multicolumn{1}{l}{Random missing} & Speed \\
    \multicolumn{1}{c|}{} & \multicolumn{1}{c}{} & Lagrangian multiplier method (Schatten p-norm) \cite{yu2020urban} & \multicolumn{1}{l}{Random missing} & Speed \\
\cmidrule{2-5}    \multicolumn{1}{c|}{} & \multicolumn{1}{c}{\multirow{3}[2]{*}{Statistical learning}} & Modified k-nearest neighbor method \citep{tak2016data} & \multicolumn{1}{l}{Random missing, groups of random missing} & Speed \\
    \multicolumn{1}{c|}{} & \multicolumn{1}{c}{} & Probabilistic/Bayesian based PCA \citep{qu2009ppca,li2013efficient} & Random missing, groups of random missing & Volume \\
    \multicolumn{1}{c|}{} & \multicolumn{1}{c}{} & Multi-output Gaussian processes \citep{rodrigues2018multi} & Random missing, missing with temporal dependency & Speed \\
    \midrule
    \multicolumn{1}{c|}{\multirow{14}[4]{*}{Tensor based}} & \multicolumn{1}{c}{\multirow{9}[2]{*}{Tensor factorization}} & Bayesian tensor factorization \citep{chen2019bayesian} & \multicolumn{1}{l}{Random missing, fiber mode-0 missing} & Speed \\
    \multicolumn{1}{c|}{} & \multicolumn{1}{c}{} & Tucker decomposition \citep{tan2013tensor} & \multicolumn{1}{l}{Random missing, extreme missing} & Volume \\
    \multicolumn{1}{c|}{} & \multicolumn{1}{c}{} & SVD-combined tucker decomposition \citep{chen2018spatial} & \multicolumn{1}{l}{Random missing, fiber mode-0 missing} & Speed \\
    \multicolumn{1}{c|}{} & \multicolumn{1}{c}{} & Spatial-temporal regularized CP factorization \citep{zhou2015spatio} & \multicolumn{1}{l}{Time-mdoe loss, space-mode loss} & OD \\
    \multicolumn{1}{c|}{} & \multicolumn{1}{c}{} & Tensor train decomposition \citep{zhang2021tensor} & \multicolumn{1}{l}{Random missing} & OD,speed, passenger flow \\
    \multicolumn{1}{c|}{} & \multicolumn{1}{c}{} & Iterative tensor decomposition \citep{zhang2019missing} & \multicolumn{1}{l}{Random missing, extreme missing} & Volume \\
    \multicolumn{1}{c|}{} & \multicolumn{1}{c}{} & Dynamic tensor decomposition \citep{tan2016short} & \multicolumn{1}{l}{Random missing, extreme missing} & Volume \\
    \multicolumn{1}{c|}{} & \multicolumn{1}{c}{} & Soft-thresholding of Tucker core \citep{goulart2017traffic} & \multicolumn{1}{l}{Fiber mode-1 missing} & Speed \\
    \multicolumn{1}{c|}{} & \multicolumn{1}{c}{} & Graph-tensor singular value decompositions \citep{deng2021graph} & Random missing, time-mode loss, space-mode loss & Volume \\
\cmidrule{2-5}    \multicolumn{1}{c|}{} & \multicolumn{1}{c}{\multirow{4}[2]{*}{Tensor completion}} & Truncated tensor nuclear norm minimization \citep{chen2020nonconvex} & \multicolumn{1}{l}{Random missing, fiber mode-0 missing} & Speed, occupancy, passenger flow \\
    \multicolumn{1}{c|}{} & \multicolumn{1}{c}{} & Truncated tensor nuclear norm with autoregressive regularization \citep{chen2021Autoregressive} & Random missing, fiber mode-0 missing, fiber mode-1 missing & \multicolumn{1}{p{14.835em}}{Speed, passenger flow, volume} \\
    \multicolumn{1}{c|}{} & \multicolumn{1}{c}{} & Tensor nuclear norm with linear unitary transformation \citep{chen2021scalable} & \multicolumn{1}{l}{Random missing, fiber mode-0 missing} & Speed \\
    \multicolumn{1}{c|}{} & \multicolumn{1}{c}{} & Tensor nuclear norm minimization \citep{ran2016tensor} & \multicolumn{1}{l}{Random missing, extreme missing} & Volume \\
    \midrule
    \multicolumn{1}{c|}{\multirow{5}[6]{*}{Deep learning}} & \multicolumn{1}{c}{\multirow{2}[2]{*}{Generative neural network}} & Generative adversarial network \citep{zhang2021gated} & \multicolumn{1}{l}{Random missing, groups of random missing} & Speed \\
    \multicolumn{1}{c|}{} & \multicolumn{1}{c}{} & Variational autoencoder \citep{boquet2019missing} & \multicolumn{1}{l}{Random missing, groups of random missing} & Speed \\
\cmidrule{2-5}    \multicolumn{1}{c|}{} & \multicolumn{1}{c}{\multirow{2}[2]{*}{Feedforward neural network}} & Denoising autoencoder \citep{duan2016efficient} & \multicolumn{1}{l}{Random missing} & Volume \\
    \multicolumn{1}{c|}{} & \multicolumn{1}{c}{} & Convolutional neural network \citep{zhuang2019innovative} & \multicolumn{1}{l}{Random missing} & Volume \\
\cmidrule{2-5}    \multicolumn{1}{c|}{} & \multicolumn{1}{c}{Multimodal neural network} & Graph convolutional bidirectional recurrent neural network \citep{zhang2021customized} & \multicolumn{1}{l}{Random missing} & Speed, volume \\
    \midrule
    \multicolumn{2}{c|}{\multirow{5}[2]{*}{Hybrid approach}} & Prophet and random forest \citep{li2020spatiotemporal} & Random missing, groups of random missing, fiber mode-0 missing & Volume \\
    \multicolumn{2}{c|}{} & Rough set and fuzzy neural network \citep{tang2021missing} & \multicolumn{1}{l}{Random missing, groups of random missing} & Volume \\
    \multicolumn{2}{c|}{} & Fuzzy c-means and genetic algorithm \citep{tang2015hybrid} & \multicolumn{1}{l}{Random missing} & Volume \\
    \multicolumn{2}{c|}{} & Matrix completion and adaptive KNN \citep{chen2017ensemble} & Random missing, groups of random missing & Volume \\
    \multicolumn{2}{c|}{} & Matrix completion and graph regularized self-representation \citep{chen2018graph}& Random missing, groups of random missing, mixed missing & Volume \\
    \bottomrule
    \end{tabular}}%}%
  \label{review}%
\end{sidewaystable}%

Conducting imputation using matrix methods needs to organize traffic data into a $time~of~days \times locations$ matrix, and usually utilize the low matrix rank assumptions \citep{zhang2009spatio,asif2016matrix}. Some studies use spatial and temporal consistency to combine with matrix factorization, which shows better performance than traditional methods. For example, \cite{wang2018traffic} models the Toeplitz matrix as a temporal regularizer, Graph Laplacian matrix as a spatial regularizer and incorporates them into the objective function of matirx factorization. Moreover, other researchers employ statistical learning skills to learn or fit a distribution from data: \cite{qu2009ppca} propose a probabilistic principal component analysis to improve the original PCA models; \cite{rodrigues2018multi} use multi-output Gaussian process to consider the uncertainty of data. However, matrix pattern only performs well in random missing and low loss ratio situations since matrix models cannot fully leverage the spatial and temporal information simultaneously and capture multi-dimensional correlations.

Deep learning methods mainly rely on historical data to learn the latent representations, thereby reconstructing or generating recovered data samples. \cite{duan2016efficient}
firstly exploits a denoising auto-encoder models and design an efficient algorithm to train the neural network hierarchically; \cite{zhang2021customized} devises an ingenious network architecture named graph convolutional bidirectional recurrent neural network to achieve network-scale online data imputation. Despite these approaches display competitive performance, however, large amount of high-quality historical datasets are essential for model training. And tuning the hyper-parameters of neural network is considered to be laborious.

Recent years, tensor-based approaches have been extensively researched for modeling and analyzing traffic data \citep{tan2016short,asif2016matrix,chen2019bayesian}. For the first time, \cite{tan2013tensor} models traffic data into a fourth-order tensor structure of $link \times week \times day \times hour$ and develops a tucker decomposition based imputation method; \cite{chen2019bayesian} introduce a Bayesian Gaussian CP decomposition to fill missing entries. These tensor factorization methods usually require predefined Tucker rank or CP rank, however, in reality, obtaining a accurate tensor rank from restricted observations has proven difficult \citep{goulart2017traffic}. LRTC-based approach has been adopted to cope with different missing cases more recently and shows better imputation performance than other matrix/tensor factorization methods \citep{ran2016tensor,chen2020nonconvex,chen2021Autoregressive}. This kind of \minew{model} often \minew{takes} advantage of the prior knowledge that the spatiotemporal traffic data is supposed to be low-rank, and then \minew{introduces} a tensor rank surrogate (e.g., nuclear norm) to conduct minimization of rank function. 

In general, according to \minew{Table} \ref{review}, previous spatiotemporal traffic data imputation methods mainly pay attention to limited or specific data loss situations, which are insufficient for modeling more complex real-world cases, while at the same time, the majority of them conduct imputation task solely on particular data \minew{types}. Additionally, existing LRTC-based approaches are dependent on NN or TNN minimization, which has been illuminated to be biased (more discussions will be showed in \minew{Section} \ref{Notations and Problem Definitions}). Therefore, it is essential to develop a more accurate and universal tensor rank surrogate that overcomes the deficiency of NN and TNN, and consequently, can be deployed to impute traffic data under multiple missing patterns with various kinds of spatiotemporal data.

\section{Notations and problem definitions}
\label{Notations and Problem Definitions}

In this section, we first introduce some basic notations and preliminaries of tensor. After that, we organize the spatiotemporal traffic data into a tensor structure according to its properties. Then we define the tensor completion based imputation model and complicated missing patterns in the context of spatiotemporal traffic data.

\subsection{Notations}
In  this work, we adopt the same notations as Kolda and Bader’s review on tensor \citep{kolda2009tensor}. Scalars are denoted by lowercase letters, e.g., $a$,  vectors are denoted by boldface lowercase letters, e.g., $\mathbf{a}\in\mathbb{R}^{M}$, and matrices are represented by boldface \minew{capital} letters, e.g., $\mathbf{A}\in\mathbb{R}^{M\times N}$. \minew{An} $N$th-order tensor is denoted by calligraphic letter, e.g., $\mathcal{A}\in\mathbb{R}^{I_1\times I_2\times\cdots\times I_N}$, whose $\left(i_1,i_2,\cdots,i_N\right)-th$ entry is $a_{i_1i_2\cdots i_N}$. A fiber is defined by fixing every index but one, which is the higher-order analogue of matrix columns and rows. The mode-$n$ fibers $a_{i_1\cdots i_{n-1}:{\ i}_{n+1}{\ \cdots i}_N}$ can be obtained by fixing the dims of  $i_1,i_2\cdots i_N$ except $i_n$.
The inner product of two $N$th-order tensors is defined by $\left\langle\mathcal{A},\mathcal{B}\right\rangle=\sum_{i_1,i_2,\cdots,i_N}a_{i_1,i_2,\cdots,i_N}b_{i_1,i_2,\cdots,i_N}$, and the corresponding tensor Frobenius norm is $\parallel\mathcal{A}\parallel_F=\sqrt{\left\langle\mathcal{A},\mathcal{A}\right\rangle}$.
The mode-$n$ unfolding (a.k.a, matricization \minew{or} flattening) of a tensor $\mathcal{A}\in\mathbb{R}^{I_1\times I_2\times\cdots\times I_N}$ is denoted by $\mathbf{A}_{(n)}=\operatorname{Unfold}_n(\mathcal{A})$, where $\mathbf{A}_{(n)}\in \mathbb{R}^{I_n\times \prod_{i\neq n}I_i}$. Specifically, the tensor entry $\left(i_1,i_2,\cdots,i_N\right)$ is mapped to matrix entry $\left(i_n,j\right)$, where
\begin{equation} \notag
    j=1+\sum_{\substack{k=1 \\ k\neq n}}^N\left(i_k-1\right)J_k,\quad with \, J_k=\prod_{\substack{m=1 \\ m\neq n}}^{k-1}I_m.
\end{equation}

Meanwhile, the inverse operation of unfold is defined as $\mathcal{A}= \operatorname{Fold}_n(\mathbf{A}_{(n)})$, which restores a matrix to the corresponding tensor in the $n$-th mode.

\subsection{Traffic data imputation and tensor completion}
\label{Traffic Data Imputation and Tensor Completion}
Spatiotemporal traffic data (e.g., volume, speed, ODs) has intrinsic multi-mode correlations and similarities, which are the basic prerequisite for accurate data recovery \citep{qu2009ppca}. In temporal domain, influenced by users' daily routine, there exists strong periodic and recurrent pattern, and these traffic time series often \minew{show} stability over time \citep{xie2018accurate}; in spatial domain, local similarities are ubiquitous due to the inherent upstream-downstream relationship between these adjacent locations. In order to intuitively display these particular features of spatiotemporal traffic data, we give some visualizations using a traffic volume dataset (more details will be discussed in \minew{Section} \ref{experiments}) in \minew{Figure} \ref{traffic data}. 
\begin{figure}[!hb]
\centering
\subfigure['Days' mode correlations]{
\centering
\includegraphics[scale=0.53]{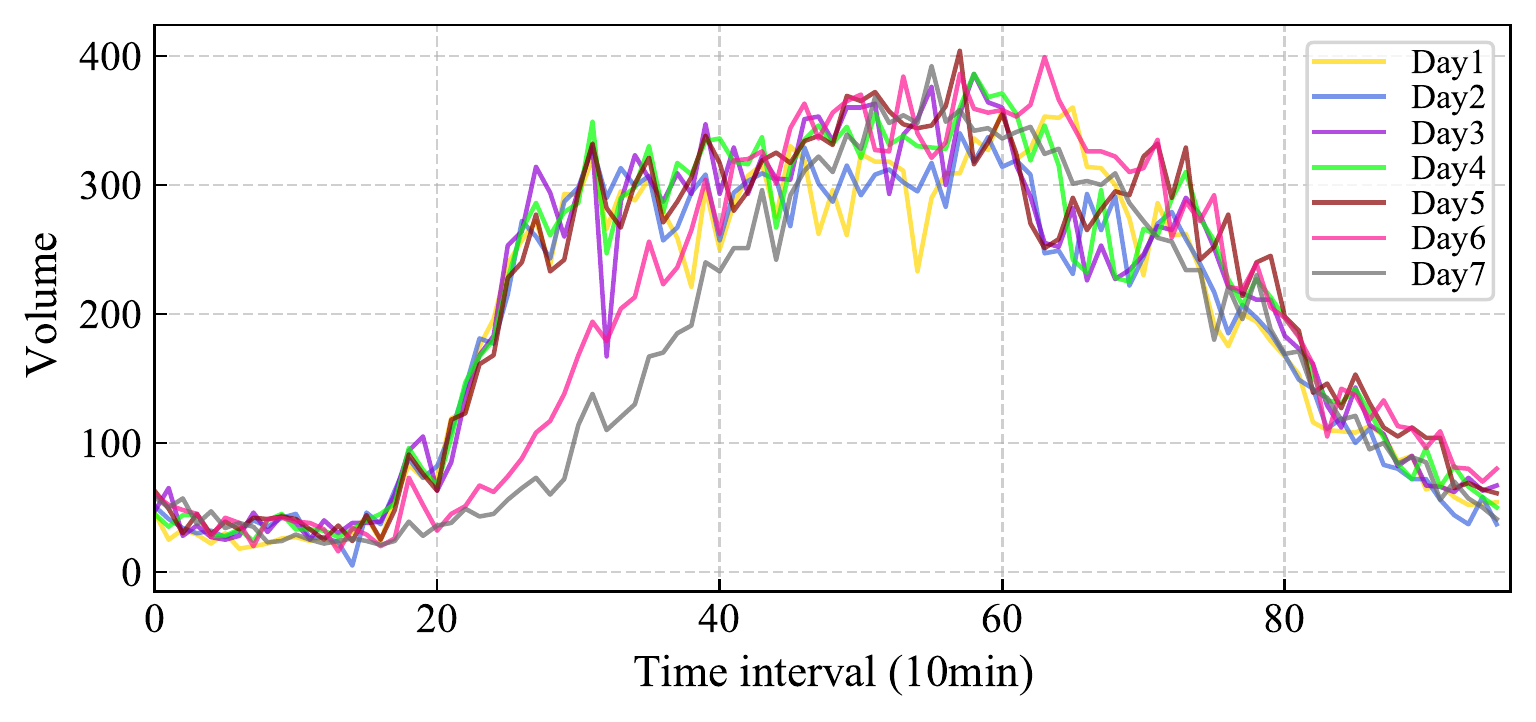}
}
\subfigure['Links' mode correlations]{
\centering
\includegraphics[scale=0.53]{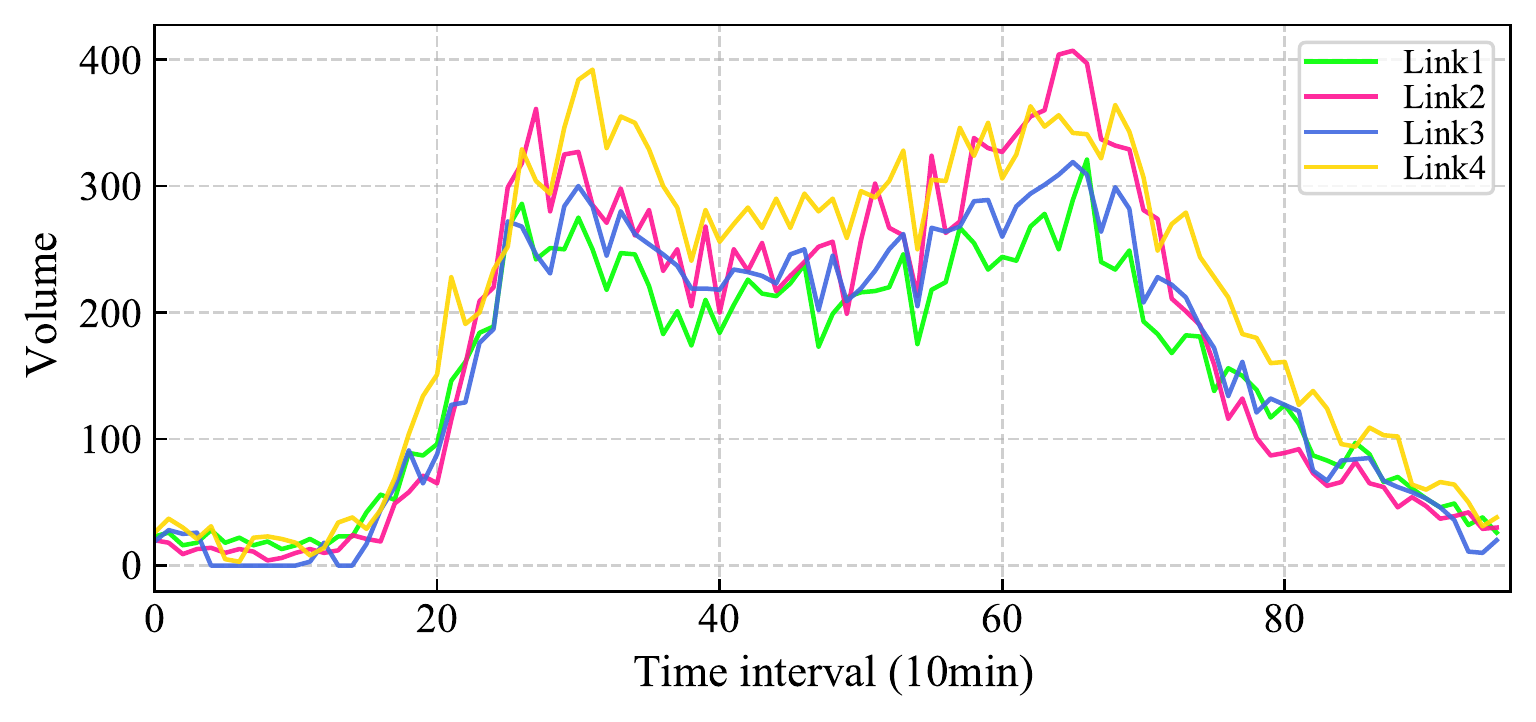}
}
\subfigure['Intervals' mode correlations]{
\centering
\includegraphics[scale=0.53]{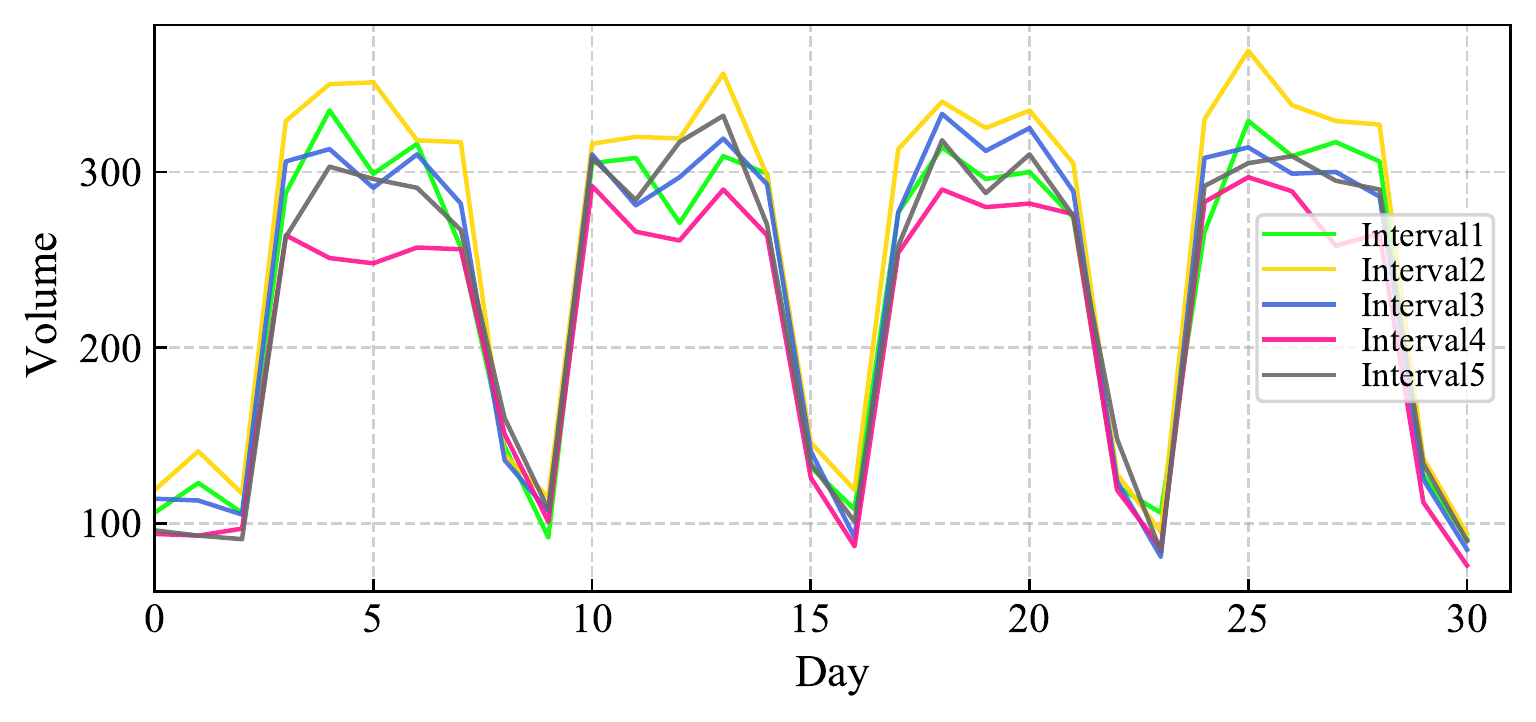}
}
\subfigure[Low-rank properties]{
\centering
\includegraphics[scale=0.53]{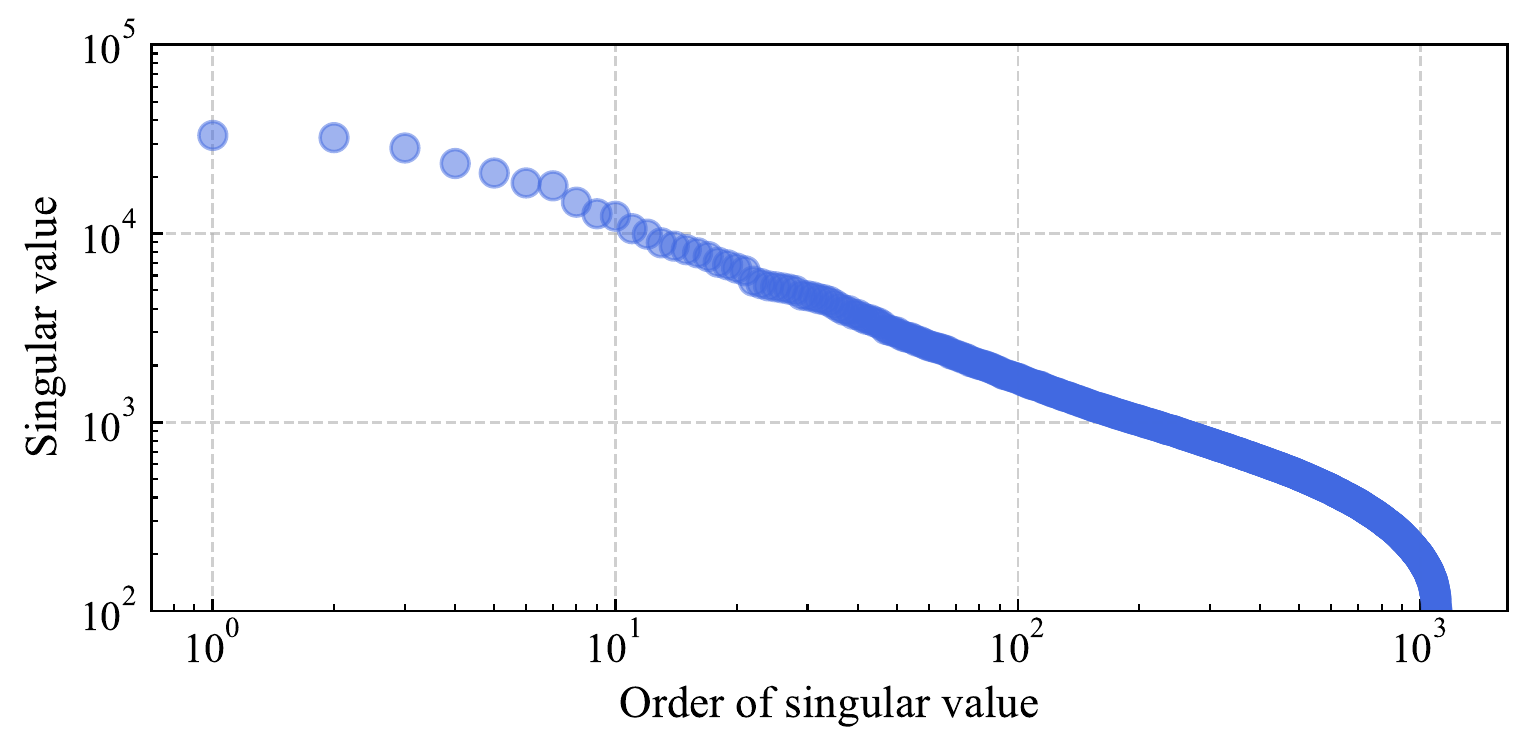}
}
\caption{Visualization of spatiotemporal correlations and low-rank properties (a) Traffic volume for a week of a randomly selected sensor. (b) Traffic volume within a day of four randomly selected sensors. (c) Traffic volume for a month of five consecutive time intervals. (d) Log-log plot of singular values.}
\label{traffic data}
\end{figure}
As shown in \minew{Figure} \ref{traffic data} (a), (b) and (c), spatiotemporal traffic data has remarkable multi-dimension correlations, e.g., day-to-day recurrence, neighborhood similarity, within-day regularity. 
Apart from the multi-mode similarities and correlations, another noticeable information in \minew{Figure} \ref{traffic data} (d) is that the singular values are dominated by only a few large ones, which is consistent with our 'low-rank' assumptions. Information loss along each mode of 'days', 'links' and 'intervals' can be possibly recovered by exploiting hidden correlations of other modes as well as imposing a low-rank restriction.

These aforementioned attributes of spatiotemporal traffic data \minew{enable} us to leverage low-rank tensor models to restore missing data effectively. In this paper, we organize the multivariate traffic time series as a third-order tensor structure, i.e. \textit{time intervals}$\times$\textit{locations (sensors)}$\times$\textit{days}. This three-dimensional data structure simultaneously captures the integrated spatial-temporal information, thus making it more efficient to impute missing values. 

Low-rank tensor recovery problem has been widely studied recently in machine learning, computer vision, and transportation data science communities to cope with image denoising, signal processing, and estimating missing values in visual and traffic data, etc \citep{tan2013tensor,gu2014weighted,chen2018human}. This tensor learning model can be stated as : given a linear map $\mathcal{G}:\mathbb{R}^{I_1\times I_2\times\cdots\times I_N}\rightarrow\mathbb{R}^q\:\text{with}\:q\le\prod_{i=1}^{N}I_i$ and given $\mathbf{b}\in\mathbb{R}^q$, determine the tensor $\mathcal{X}$ that fulfills the linear measurements $\mathcal{G}(\mathcal{X})=\mathbf{b}$ and minimizes a predefined function of the tensor rank.
Specially, when a subset of the entries of the tensor is given as the partially observed data, and the rest entries are treated as missing values to be imputed, this model becomes the standard tensor completion problem:
\minew{
\begin{equation}
    \begin{aligned}
    \min _{\mathcal{X}}~&\operatorname{f_{rank}}(\mathcal{X}) \\ \text { s.t.}~& \mathcal{P}\odot\mathcal{X}=\mathcal{P}\odot\mathcal{T}, \\
    \end{aligned}
    \label{equ:TC}
\end{equation}
}
where $f_{rank}$ is the rank function defined for tensor, $\mathcal{X}$ is the target spatiotemporal traffic tensor needed to be obtained, $\mathcal{T}$ is the observed tensor with missing entries, \minew{$ \odot$ denotes the Hadamard product, $\mathcal{P}$ is the mask tensor indicating the observed entries, given by:

\begin{equation}
    \mathcal{P}=\left\{\begin{array}{l}1,~\text{if } t_{ijk} \text{ is observed}, \\
    0,~\text{otherwise}. \\    
    \end{array}\right. \\
\end{equation}}

% $\Omega=\{(i,j,k)|\text{when} ~t_{ijk} ~\text{is observed}\}$ is the set of observed entries.

\minew{And we let $\mathcal{P}^{\perp}$ denote the complementary tensor of $\mathcal{P}$, satisfying $\mathcal{P}\odot\mathcal{X}+\mathcal{P}^{\perp}\odot\mathcal{X}=\mathcal{X}$}. Our goal is to design a more universal and flexible rank function for tensors in \minew{Eq.} \eqref{equ:TC} to overcome the above deficiencies of NN/TNN, so that we can better exploit this low-rank model to impute missing data in traffic systems more precisely and cope with various data loss cases.

\subsection{Complicated missing patterns}
In this work, we consider four kinds of data missing patterns, i.e., element-wise random missing (RM) and structured $n$-th mode fiber-like missing (FM-$n$). The majority of experiments have conducted mainly on random missing pattern in previous studies using either matrix or tensor based approach \citep{qu2009ppca,chen2017ensemble,asif2016matrix,li2013efficient,duan2016efficient,zhou2015spatio}. However, in real-world ITS, this case is not common to see because the input data is statistically aggregated within a specific time interval, and it is hard to detect and identify occurrence of this instantaneous loss (e.g., transmission failure) under the circumstances \citep{zhang2019missing}. More recently, researchers have tried some different non-random missing or so-called clustered missing patterns that are more difficult to perform imputation \citep{tan2013tensor,ran2016tensor,li2020spatiotemporal,yang2021real,chen2021Autoregressive}. Most of their works mainly discuss these more intricate modes in temporal domain, e.g., continuous and complete (or partially complete) missing within several days (or intervals) for a specific location, which can be successfully recovered by their time series attributes. Apart from this 'temporal missing', there also exits 'spatial missing' pattern, e.g., all detectors are unavailable for a given time span due to the power outage.

\begin{figure}[!htb]
\centering
\subfigure[Schematic diagram of random missing and fiber mode-$n$ missing]{
\centering
\includegraphics[scale=0.6]{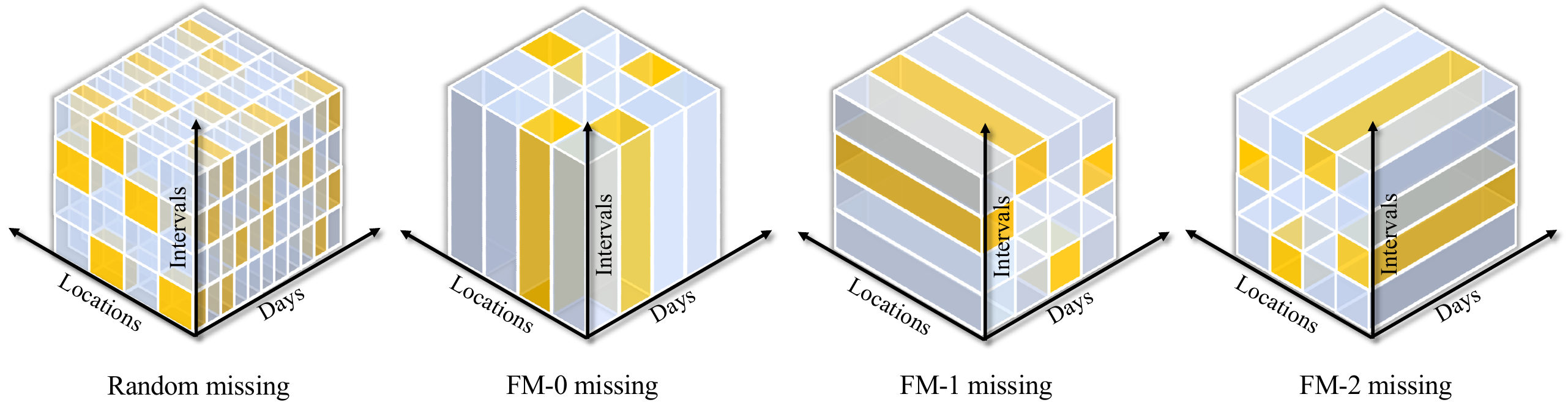}
}
\subfigure[Example time-space matrices]{
\centering
\includegraphics[scale=0.6]{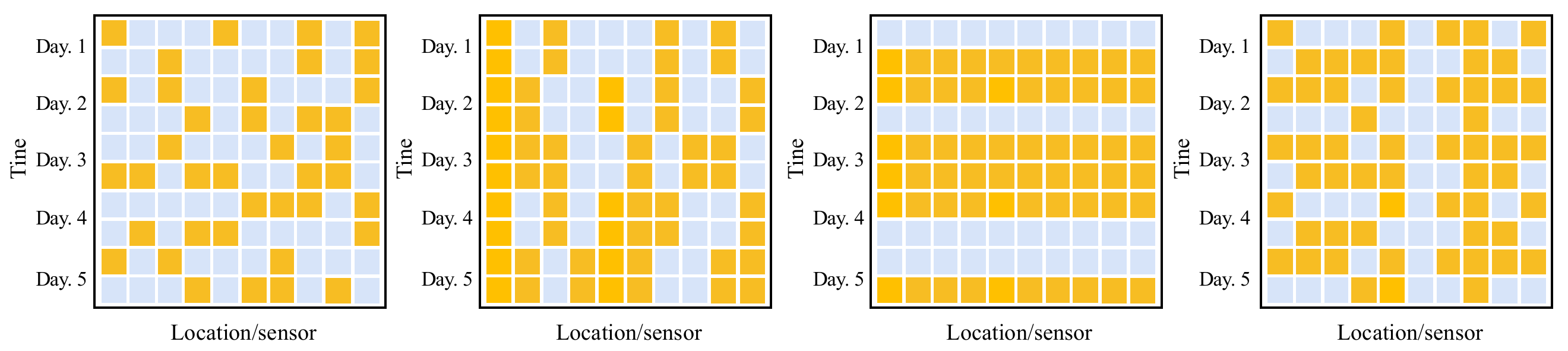}
}
\caption{Complicated missing patterns. (a) The first tensor denotes all of its channels (entries) and each of the last three tensors is displayed by its mode-$n$ fibers. (b) Example matrices of each missing scenario generating by converting third-order tensors into spatiotemporal traffic matrices of $time \times location$. Note that blue channels (fibers) are observations, while yellow ones denote missing values.}
\label{FM}
\end{figure}

To summarized these cases, we give a general definition of complicated missing patterns according to the structure of third-order tensor described in \minew{Section} \ref{Traffic Data Imputation and Tensor Completion}, as can been seen in \minew{Figure} \ref{FM}. In the light of these schematics, random missing case is completely element-wise loss as defined in previous works, and structured fiber mode-$n$ missing is generated through the two-by-two combinations of tensor mode-$n$ fibers. The latter is divided into three conditions that can be described as : (1) 'Intervals' mode fiber-like missing (FM-0), which illustrates a temporal missing pattern, is caused by adverse weather, breakdown of wireless connections or apparatus maintenance; (2) 'Locations' mode fiber-like missing (FM-1), which denotes a spatial missing pattern, can be explained by lack of electricity for successive sensors or malfunction of Internet Data Center; (3) 'Days' mode fiber-like missing (FM-2) illuminates a spatial-temporal mixture missing situation that they are offline (do not operate) at regular time intervals everyday for specific sensors. Note that in order to test our models' performance without loss of generality, we randomly select some fibers and artificially generate these missing scenarios, while ones can gather these blank entries (partially) together to simulate more realistic scenes.

\section{Methodology}\label{methodology}

In this section, we first define a new tensor truncated Schatten $p$-norm and give some explanations about this generalized norm. Since then, we incorporate the TSpN into the LRTC function to establish traffic data imputation model. Despite the minimization is a hard nonconvex problem, we individually solve each subproblem under the framework of ADMM and theoretically prove its global optimality and convergence.
Specifically, the $\mathbf{M}$-Subproblem is solved by GST, and the closed-form solution to $\mathcal{X}$-Subproblem is obtained by gradient method, respectively.

\subsection{Model formulation}
Schatten $p$-norm for matrix and tensor have been studied in prior research. We first briefly review two basic definitions of the Schatten $p$-norm in \minew{Definition} \ref{definition1} and \minew{Definition} \ref{definition2}. Then in \minew{Definition} \ref{definition3}, we introduce the formulation of proposed TSpN in detail.
\begin{definition}
\label{definition1}
(Matrix Schatten p-norm \citep{nie2012low}). The extended Schatten p-norm $(0<p<\infty)$ of a matrix $\mathbf{X}\in\mathbb{R}^{n\times m}$ is defined as :
\begin{equation}
    \Vert \mathbf{X}\Vert_{S_p}=\left(\sum_{i=1}^{\min\{n,m\}}\sigma_i^p \right)^{\frac{1}{p}}=\left(\Tr\left(\left(\mathbf{X}^\mathsf{T}\mathbf{X}\right)^{\frac{p}{2}}\right)\right)^{\frac{1}{p}},
    \label{sp_matrix}
\end{equation}
where $\sigma^i$ is the $i$-th singular value of $\mathbf{X}$, $\Tr(\cdot)$ signifies the trace operation.
\end{definition}
When $p=1$, the Schatten 1-norm becomes the well-known trace norm(a.k.a, nuclear norm); When p$\rightarrow$0, the Schatten 0-norm is exactly the rank of matrix.

\begin{definition}
\label{definition2}
(Tensor Schatten p-norm \citep{gao2020robust}). For a $N$-th mode tensor $\mathcal{X}\in\mathbb{R}^{I_1\times I_2\times\cdots\times I_N}$, its Schatten p-norm is defined as follow :
\begin{equation}
    \Vert \mathcal{X}\Vert_{S_p}=\left(\sum_{k=1}^{N}\alpha_k\Vert \mathbf{X}_{\left(k\right)}\Vert_{S_p}^p\right)^{\frac{1}{p}},
    \label{sp_tensor}
\end{equation}
where $\alpha_k$ satisfies $\alpha_k\geq0$ and $\Sigma_{k=1}^N\alpha_k=1, 0<p<1$.
\end{definition}
In fact, just like the other tensor norm, e.g., NN for tensor in \citep{liu2013tensor}, this definition is the weighted sum of the matrix Schatten $p$-norm of each unfolding matrix, which in essence is a pseudo-norm.

\begin{definition}
\label{definition3}
(Truncated Tensor Schatten p-norm) : Motivated by the truncated tensor nuclear norm proposed in \cite{chen2020nonconvex}, we give the definition of Truncated tensor Schatten p-norm:
\begin{equation}
    \Vert \mathcal{X}\Vert_{\theta,S_p}=\left(\sum_{k=1}^{N}\alpha_k\Vert \mathbf{X}_{\left(k\right)}\Vert_{\theta,S_p}^p\right)^{\frac{1}{p}}=
    \left(\sum_{k=1}^N\alpha_k\left(\sum_{i=r_k+1}^{\min\{I_k,\prod_{i\neq k} I_i\}}\sigma_i^p\left(\mathbf{X}_{\left(k\right)}\right)\right)\right)^\frac{1}{p},
    \label{t_sp_tensor}
\end{equation}
the truncation for each unfolding is assigned separately by a global truncation rate:
\begin{equation}\label{truncation rate}
    r_k=\lceil\theta\cdot\min\{I_k,\prod_{i\neq k} I_i\}\rceil,\quad\forall k \in \{1,2,\ldots,N\},
\end{equation}
where $0\leq\theta\leq1$, is the truncation rate, $\lceil\cdot\rceil$ means the smallest integer that is no less than the target value, and the singular values $\sigma_i$ are in non-ascending order.
\end{definition}
Under this definition, only the \textbf{smallest} $(\min\{I_k,\prod_{i\neq k} I_i\}-r)$ singular values contribute to the Schatten $p$-norm. When optimizing this tensor rank function, the largest $r$ singular values remain the same and merely minimizing those smaller ones. The reason for truncation operation is that the larger singular values convey the primary information, like the periodic and major trend components of traffic flow, which represent the low-rank parts; and the smaller ones can be treated as the irregular fluctuations caused by local stochastic events, which make the data not strictly satisfies the low-rank pattern. Minimizing the residual parts instead of treating all values equally resembles the idea of local non-parametric regression which has been widely adopted in traffic data imputation and prediction applications, and it shows competitive performance in comparison with global regression methods \citep{habtemichael2016short,chen2017ensemble,zhang2009spatio}. From another point of view, \cite{li2020spatiotemporal} justifies the integration of recurrent temporal parts and residual spatial parts to fill missing values. The key idea of the truncated tensor Schatten $p$-norm is that the $p$ value in the norm can be fine-tuned to deal with different rank components, which enables this model to be applied to various types of data sources flexibly. 

Moreover, the SpN serves as a tighter rank surrogate than NN when $p\to0$ due to its non-convex property. To give a straightforward illustration of the SpN's superiority over NN, we compare the matrix rank ($\mathcal{L}_0$ norm) with different rank surrogates. As shown in \minew{Figure} \ref{norm_comparison}, the SpN is always closer to the $\mathcal{L}_0$ norm (rank function) than the $\mathcal{L}_1$ norm (NN) so that it becomes a better approximation for rank. This illustrates that by setting different $p$ values, we are capable of balancing the models' accuracy and efficiency. 
\begin{figure}[!ht]
  \centering
  \includegraphics[scale=0.7]{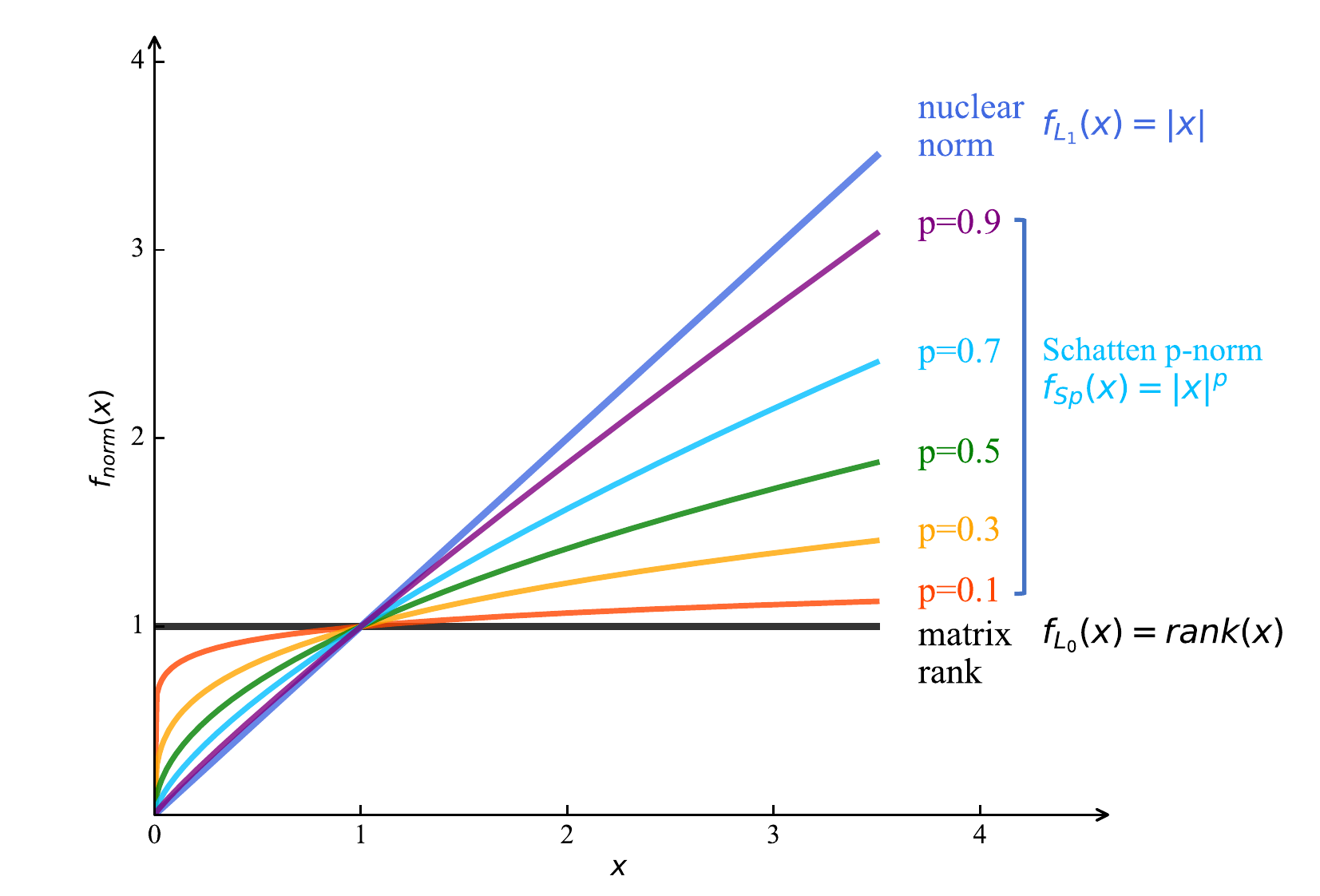}
  \caption{Comparison of the matrix rank function, nuclear norm and Schatten $p$-norm with different $p$ values. $x$ substitutes for singular value variables, $f_{norm}(x)$ denotes the continuous norm functions satisfying $norm(X) = \sum f_{norm}(x)$.}
  \label{norm_comparison}
\end{figure}

Using the truncated Schatten $p$-norm as the rank function in \minew{Eq.} \eqref{equ:TC}, the proposed LRTC-TSpN minimization model is formulated as: 
\minew{
\begin{equation} 
    \begin{aligned}
    \min_{\mathcal{X}}\Vert\mathcal{X}\Vert_{\theta,S_p}^p &=\min_{\mathcal{X}}\sum_{k=1}^{3}\alpha_k\Vert\mathbf{X}_{\left(k\right)}\Vert_{\theta,S_p}^p\\ 
    \text { s.t.}~ \mathcal{P}\odot\mathcal{X}&=\mathcal{P}\odot\mathcal{T}. \\
    \end{aligned}
    \label{equ:LRTC-TSpN}
\end{equation}
}
Note that we only concern about third-order tensor as mentioned above. Although \minew{Eq.} \eqref{equ:LRTC-TSpN} is a concise expression, this problem is challenging to solve because the objective function in \eqref{equ:LRTC-TSpN} is nonconvex due to the Schatten $p$-norm and truncation, however, we develop an efficient algorithm which combines the ADMM with GST to obtain the optimal solution.

\subsection{Iterative solving framework via ADMM}
\label{admm_solve}
Before solving this model, a problem can not be negligible is the interdependence among the unfolding matrices $\mathbf{X}_{(k)}$ due to the sharing of the same entries. In order to optimize these variables independently, we introduce a series of auxiliary matrix variables $\mathbf{M}_{(k)}$ to convert the coupling to equality constraints, and the model in \minew{Eq.} \eqref{equ:LRTC-TSpN} changes to the following form:
\minew{
\begin{equation}
    \begin{aligned}
    \min_{\mathcal{X}}&\sum_{k=1}^{3}\alpha_k\Vert\mathbf{M}_{k}\Vert_{\theta,S_p}^p\\
    \text { s.t.}~&\left\{\begin{array}{l}\mathbf{X}_{(k)}=\mathbf{M}_{k},~k=1,2,3, \\
    \mathcal{P}\odot\mathcal{X}=\mathcal{P}\odot\mathcal{T}. \\    
    \end{array}\right. \\
    \end{aligned}
    \label{equ:LRTC-TSpN_cvrt}
\end{equation}}
\minew{Eq.} \eqref{equ:LRTC-TSpN_cvrt} leads to a typical constrained minimization problem. We adopt the Augmented Lagrange Multiplier (ALM) \citep{powell1969method} method to relax the equality constrains, and the corresponding augmented Lagrange function is defined as:
\begin{equation}
    \mathcal{L}_\mu(\mathcal{X},\mathbf{M}_{k},\mathbf{Z}_{k})=
    \sum_{k=1}^3\left(\alpha_k\Vert\mathbf{M}_{\left(k\right)}\Vert_{\theta,S_p}^p+\big\langle\mathbf{Z}_{k},\mathbf{X}_{(k)}-\mathbf{M}_{k}\big\rangle+\frac{\mu_k}{2}\Vert\mathbf{X}_{(k)}-\mathbf{M}_{k}\Vert_F^2\right),
    \label{equ:ALM}
\end{equation}
where $\mathbf{Z}_{k}$ is the Lagrange multiplier (dual variable), $\mu_k$ is a positive penalty scalar, and $\langle\cdot\rangle$ is the inner product of matrix. Then, the ADMM \citep{boyd2011distributed} iterative framework is utilized to alternately minimize each of the optimal primal and dual variable until convergence.The iterative scheme of ADMM can formulate as follows:
\begin{equation}
    \begin{aligned} 
    \left\{\begin{array}{l}\mathbf{M}_k^{j+1}=\operatorname{arg}\min_{\mathbf{M}_{k}}\mathcal{L}_\mu(\mathcal{X}^j,\mathbf{M}_{k},\mathbf{Z}_{k}^j) \\
    \mathcal{X}^{j+1}=\operatorname{arg}\min_{\mathcal{X}}\mathcal{L}_\mu(\mathcal{X},\mathbf{M}_{k}^{j+1},\mathbf{Z}_{k}^j) \\    
    \mathbf{Z}_k^{j+1}=\mathbf{Z}_k^{j}+\mu_k(\mathbf{X}_{(k)}^{j+1}-\mathbf{M}_{k}^{j+1}).
    \end{array}\right. \\
    \end{aligned}
    \label{equ:ALM_update}
\end{equation}
For $j$-th iteration, we optimize each subproblem to find their optimal solution, then the ADMM scheme guarantees the final solution to converge to global optimum. From \minew{Eqs.} \eqref{equ:ALM} and \eqref{equ:ALM_update} we see that the penalty parameter $\mu_k$ serves as the learning rate for Lagrange dual variable as well as the penalty in \minew{the} objective function. In theory, there needs to be a sufficiently large $\mu_k$ to control the degree of constraint violation in ALM. However, when $\mu_k$ is getting larger, the condition number of Hessian matrix also becomes larger, which will result in the difficulties of calculation; when $\mu_k$ grows too slowly, the convergence speed may be affected significantly. Therefore, it is rational to set a small $\mu_k$ at first and increase \minew{it} with moderate steps to balance the convergence and accuracy.

\subsubsection{M-subproblem}
According to the optimization subproblem in \minew{Eqs.} \eqref{equ:ALM} and \eqref{equ:ALM_update}, $\mathbf{M}_k\,(k=1,2,3)$ are independent. For term $\mathbf{M}_k^{j+1}$, the original problem is converted to:
\begin{equation}
\begin{aligned}
    \mathbf{M}_k^{j+1}=&\operatorname{arg}\min_{\mathbf{M}_k}~\alpha_k\Vert\mathbf{M}_{\left(k\right)}\Vert_{\theta,S_p}^p+\big\langle\mathbf{Z}_{k}^j,\mathbf{X}_{(k)}^j-\mathbf{M}_{k}\big\rangle+\frac{\mu_k}{2}\Vert\mathbf{X}_{(k)}^j-\mathbf{M}_{k}\Vert_F^2,\\
    =&\operatorname{arg}\min_{\mathbf{M}_k}~\alpha_k\Vert\mathbf{M}_{\left(k\right)}\Vert_{\theta,S_p}^p -\big\langle\mathbf{Z}_{k}^j,\mathbf{M}_{k}\big\rangle +
    \frac{\mu_k}{2}\big\langle\mathbf{X}_{(k)}^j-\mathbf{M}_{k},\mathbf{X}_{(k)}^j-\mathbf{M}_{k}\big\rangle,\\
    =&\operatorname{arg}\min_{\mathbf{M}_k}~\alpha_k\Vert\mathbf{M}_{\left(k\right)}\Vert_{\theta,S_p}^p-\big\langle\mathbf{Z}_{k}^j,\mathbf{M}_{k}\big\rangle +
    \frac{\mu_k}{2}\big\langle\mathbf{M}_{k},\mathbf{M}_{k}\big\rangle -\mu_k\big\langle\mathbf{X}_{(k)}^j,\mathbf{M}_{k}\big\rangle,\\
    =&\operatorname{arg}\min_{\mathbf{M}_k}~\alpha_k\Vert\mathbf{M}_{\left(k\right)}\Vert_{\theta,S_p}^p+\frac{\mu_k}{2}\big\langle\mathbf{M}_{k},\mathbf{M}_{k}\big\rangle -\mu_k\big\langle\mathbf{X}_{(k)}^j+\frac{\mathbf{Z}_{k}^j}{\mu_k},\mathbf{M}_{k}\big\rangle,\\
    =&\operatorname{arg}\min_{\mathbf{M}_k}~\alpha_k\Vert\mathbf{M}_{\left(k\right)}\Vert_{\theta,S_p}^p+\frac{\mu_k}{2}\Vert\mathbf{M}_{k}\Vert_F^2 -\mu_k\big\langle\mathbf{X}_{(k)}^j+\frac{\mathbf{Z}_{k}^j}{\mu_k},\mathbf{M}_{k}\big\rangle+\frac{\mu_k}{2}\Vert\mathbf{X}_{(k)}^j+\frac{\mathbf{Z}_{k}^j}{\mu_k}\Vert_F^2,\\
    =&\operatorname{arg}\min_{\mathbf{M}_k}~\frac{\alpha_k}{\mu_k}\Vert\mathbf{M}_{\left(k\right)}\Vert_{\theta,S_p}^p+ \frac{1}{2}\Vert\mathbf{M}_{k}-(\mathbf{X}_{(k)}^j+\frac{\mathbf{Z}_{k}^j}{\mu_k})\Vert_F^2,\\
\end{aligned}
\label{equ:M-sub}
\end{equation}
However, the Schatten $p$-norm regularization in \minew{Eq.} \eqref{equ:M-sub} leads to a complicated nonconvex problem, whose global optimum is hard to obtain via general method (e.g., gradient based method). Fortunately, we derive the desired solution efficiently using the following theorems.
\begin{lemma}
\label{Lemma1}
(Von Neumann \citep{mirsky1975trace}) For any two matrices $\mathbf{A},\mathbf{B}\in\mathbb{R}^{m\times n}$, $\sigma(\mathbf{A})$ and $\sigma(\mathbf{B})$ are the \textbf{ordered} singular values of $\mathbf{A}$ and $\mathbf{B}$ with the \textbf{same order}, then $\Tr(\mathbf{A}^\mathsf{T}\mathbf{B})\leq\Tr(\sigma(\mathbf{A})^\mathsf{T}\sigma(\mathbf{B}))$ holds. The equality occurs only if there exist unitaries $\mathbf{U}$ and $\mathbf{V}$ that simultaneously singular value decompose $\mathbf{A}$ and $\mathbf{B}$, i.e., $\mathbf{A}=\mathbf{U}\diag(\sigma(\mathbf{A}))\mathbf{V}^\mathsf{T}$, $\mathbf{B}=\mathbf{U}\diag(\sigma(\mathbf{B}))\mathbf{V}^\mathsf{T}$.
\end{lemma}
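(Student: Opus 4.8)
\textbf{Proof proposal for Lemma~\ref{Lemma1} (von Neumann trace inequality).}

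The plan is to prove the inequality $\Tr(\mathbf{A}^\mathsf{T}\mathbf{B})\le\sum_i\sigma_i(\mathbf{A})\sigma_i(\mathbf{B})$ by reducing it to a statement about doubly stochastic matrices. First I would write full singular value decompositions $\mathbf{A}=\mathbf{U}_A\,\Sigma_A\,\mathbf{V}_A^\mathsf{T}$ and $\mathbf{B}=\mathbf{U}_B\,\Sigma_B\,\mathbf{V}_B^\mathsf{T}$, so that
\begin{equation}\notag
\Tr(\mathbf{A}^\mathsf{T}\mathbf{B})=\Tr\!\left(\Sigma_A\,(\mathbf{U}_A^\mathsf{T}\mathbf{U}_B)\,\Sigma_B\,(\mathbf{V}_B^\mathsf{T}\mathbf{V}_A)\right).
\end{equation}
Setting $\mathbf{P}=\mathbf{U}_A^\mathsf{T}\mathbf{U}_B$ and $\mathbf{Q}=\mathbf{V}_B^\mathsf{T}\mathbf{V}_A$, both orthogonal, this becomes $\sum_{i,j}\sigma_i(\mathbf{A})\,\sigma_j(\mathbf{B})\,P_{ij}Q_{ij}$. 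The next step is to observe that the matrix with entries $P_{ij}Q_{ij}$ has row sums and column sums bounded by $1$ in absolute value (by Cauchy--Schwarz applied to the rows/columns of the orthogonal matrices $\mathbf{P}$ and $\mathbf{Q}$), hence its entrywise absolute value is dominated entrywise-in-the-majorization-sense by a doubly stochastic matrix $\mathbf{D}$.

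The core of the argument is then a Birkhoff--von Neumann / rearrangement step: since $\sigma(\mathbf{A})$ and $\sigma(\mathbf{B})$ are sorted in the same (non-increasing) order, the bilinear form $\sum_{i,j}\sigma_i(\mathbf{A})\sigma_j(\mathbf{B})D_{ij}$ over all doubly stochastic $\mathbf{D}$ is maximized at a permutation matrix (extreme point of the Birkhoff polytope), and among permutations the maximum of $\sum_i \sigma_i(\mathbf{A})\sigma_{\pi(i)}(\mathbf{B})$ is attained at $\pi=\mathrm{id}$ by the classical rearrangement inequality. Chaining these bounds gives $\Tr(\mathbf{A}^\mathsf{T}\mathbf{B})\le\sum_i\sigma_i(\mathbf{A})\sigma_i(\mathbf{B})=\Tr(\sigma(\mathbf{A})^\mathsf{T}\sigma(\mathbf{B}))$, which is the claim. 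I would remark that strictly speaking one should treat the case $m\neq n$ by padding with zero singular values, which does not affect the argument.

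For the equality characterization, I would argue that equality forces $\mathbf{D}$ to be supported on the diagonal (wherever the products $\sigma_i(\mathbf{A})\sigma_i(\mathbf{B})$ are positive), which in turn forces $\mathbf{P}$ and $\mathbf{Q}$ to be (signed) permutations respecting the multiplicities of the singular values; one then shows that $\mathbf{U}_B$ and $\mathbf{V}_B$ can be modified within these multiplicity blocks so that $\mathbf{U}_A=\mathbf{U}_B=:\mathbf{U}$ and $\mathbf{V}_A=\mathbf{V}_B=:\mathbf{V}$ simultaneously diagonalize both matrices. The main obstacle I anticipate is precisely this equality part: handling repeated singular values and the sign/orientation freedom in the SVD carefully is fiddly, whereas the inequality itself is a short and standard consequence of Birkhoff's theorem plus rearrangement. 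Since the paper only uses the inequality (and the equality condition merely to identify when the GST update is tight), I would keep the equality discussion brief and cite \citep{mirsky1975trace} for the full details.
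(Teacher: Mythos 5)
The paper does not prove this lemma at all: it is imported verbatim from \citet{mirsky1975trace} and used as a black box, so there is no in-paper argument to compare against. Your sketch of the inequality is the standard and correct route: reduce $\Tr(\mathbf{A}^\mathsf{T}\mathbf{B})$ via the two SVDs to a bilinear form $\sum_{i,j}\sigma_i(\mathbf{A})\sigma_j(\mathbf{B})$ weighted by products of entries of two orthogonal matrices, observe via Cauchy--Schwarz that the absolute values of these weights form a doubly substochastic matrix dominated by a doubly stochastic one, and finish with Birkhoff's theorem plus the rearrangement inequality. The only slip is index bookkeeping: with $\mathbf{P}=\mathbf{U}_A^\mathsf{T}\mathbf{U}_B$ and $\mathbf{Q}=\mathbf{V}_B^\mathsf{T}\mathbf{V}_A$ the weight on $\sigma_i(\mathbf{A})\sigma_k(\mathbf{B})$ is $P_{ik}Q_{ki}$, not $P_{ij}Q_{ij}$; the Cauchy--Schwarz bound still goes through (rows of $\mathbf{P}$ against columns of $\mathbf{Q}$), so this is cosmetic. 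Your padding remark for $m\neq n$ is also the right fix.

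One caution on scope: you say the paper ``only uses the inequality,'' but the proof of Theorem~\ref{Theorem1} explicitly invokes the equality characterization to conclude $\mathbf{Q}=\mathbf{U}$ and $\mathbf{R}=\mathbf{V}$, i.e.\ to pin down the singular vectors of the optimal $\mathbf{M}$. So if the goal were a self-contained proof of everything the paper relies on, the equality case (repeated singular values, block rotation freedom in the SVD) could not be waved away. Deferring it to the cited reference is reasonable, but it is the part of the lemma that actually carries weight downstream, not a decorative addendum.
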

According to Lemma \ref{Lemma1}, we convert the problem in \minew{Eq.} \eqref{equ:M-sub} to a tractable form based on the following theorem:
\begin{theorem}
For any $\lambda>0$, and $\mathbf{M},\mathbf{Y}\in\mathbb{R}^{m\times n}$, denote the full singular value decomposition of $\mathbf{Y}$ \minew{by} $\mathbf{Y}=\mathbf{U}\mathbf{\Sigma}\mathbf{V}^\mathsf{T}$, with $\Sigma = \diag(\sigma_1,\sigma_2,\ldots,\sigma_{\min\{m,n\}})$. The optimal solution of
\begin{equation}
    \min_{\mathbf{M}}~\lambda\Vert\mathbf{M}\Vert_{\theta,S_p}^p+\frac{1}{2}\Vert\mathbf{M}-\mathbf{Y}\Vert_F^2
\label{equ:thm1}
\end{equation}
is given by $\mathbf{M}_{opt}=\mathbf{U}\mathbf{\Delta}\mathbf{V}^\mathsf{T}$, where the diagonal matrix $\mathbf{\Delta}=\diag(\sigma_1,\ldots,\sigma_r,\delta_{r+1},\delta_{\min\{m,n\}})$ is arranged in the \textbf{same order} (e.g., non-ascending order) as $\mathbf{\Sigma}$, with the truncation $r$ calculated by \minew{Eq.} \eqref{truncation rate} and the $i$-th diagonal element $\delta_i~(i>r)$ given by the following problem: 
\begin{equation}
    \begin{aligned}
    \min_{\delta_i\geq 0}&\sum_{i=r+1}^{\min\{m,n\}}\left(\lambda\delta_i^p+\frac{1}{2}\left(\delta_i-\sigma_i\right)^2\right), \\
    \text { s.t.}~&\delta_i \geq \delta_j, \text{ for}~i\leq j, i=r+1,\ldots,\min\{m,n\}.
    \end{aligned}
    \label{equ:thm1_}
\end{equation}
\label{Theorem1}
\end{theorem}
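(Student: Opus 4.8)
The plan is to reduce the matrix-valued optimization in Eq.~\eqref{equ:thm1} to a scalar problem on the singular values, exploiting the unitary invariance of the Frobenius norm together with the von Neumann trace inequality (Lemma~\ref{Lemma1}). First I would expand the objective as
\[
\lambda\Vert\mathbf{M}\Vert_{\theta,S_p}^p+\tfrac12\Vert\mathbf{M}-\mathbf{Y}\Vert_F^2
=\lambda\sum_{i=r+1}^{\min\{m,n\}}\sigma_i^p(\mathbf{M})+\tfrac12\Vert\mathbf{M}\Vert_F^2-\langle\mathbf{M},\mathbf{Y}\rangle+\tfrac12\Vert\mathbf{Y}\Vert_F^2 .
\]
The first, second and fourth terms depend on $\mathbf{M}$ only through its singular values $\sigma_i(\mathbf{M})$ (since $\Vert\mathbf{M}\Vert_F^2=\sum_i\sigma_i^2(\mathbf{M})$ and the truncated Schatten $p$ term is a function of the $\sigma_i(\mathbf{M})$), so for any fixed multiset of singular values of $\mathbf{M}$, only the inner product $\langle\mathbf{M},\mathbf{Y}\rangle=\Tr(\mathbf{M}^\mathsf{T}\mathbf{Y})$ varies with the choice of singular vectors of $\mathbf{M}$. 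By Lemma~\ref{Lemma1}, $\Tr(\mathbf{M}^\mathsf{T}\mathbf{Y})\le\sum_i\sigma_i(\mathbf{M})\,\sigma_i(\mathbf{Y})$ with equality precisely when $\mathbf{M}$ and $\mathbf{Y}$ share the singular-vector pair $(\mathbf{U},\mathbf{V})$ in the same order. Since we are minimizing and $-\langle\mathbf{M},\mathbf{Y}\rangle$ appears with a negative sign, this shows any optimal $\mathbf{M}$ must be of the form $\mathbf{M}=\mathbf{U}\mathbf{\Delta}\mathbf{V}^\mathsf{T}$ with $\mathbf{\Delta}=\diag(\delta_1,\dots,\delta_{\min\{m,n\}})$, $\delta_i\ge0$, and the residual problem becomes
\[
\min_{\delta_i\ge0}\ \lambda\sum_{i=r+1}^{\min\{m,n\}}\delta_i^p+\tfrac12\sum_{i=1}^{\min\{m,n\}}(\delta_i-\sigma_i)^2 .
\]

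Next I would observe that this separates across the index $i$. For $i\le r$ the $\delta_i$-term is just $\tfrac12(\delta_i-\sigma_i)^2$, minimized at $\delta_i=\sigma_i$, which reproduces the untruncated block of $\mathbf{\Delta}$. For $i>r$ each coordinate contributes $\lambda\delta_i^p+\tfrac12(\delta_i-\sigma_i)^2$, giving exactly the objective of Eq.~\eqref{equ:thm1_}. It then remains to justify the ordering constraint $\delta_i\ge\delta_j$ for $i\le j$ in the truncated block: because $\sigma_{r+1}\ge\sigma_{r+2}\ge\cdots$ and each scalar map $\sigma\mapsto\operatorname*{arg\,min}_{\delta\ge0}\{\lambda\delta^p+\tfrac12(\delta-\sigma)^2\}$ (the generalized soft-thresholding operator used by GST) is monotonically nondecreasing in $\sigma$, the unconstrained per-coordinate minimizers are already in non-ascending order, so imposing the constraint is without loss and the stated decomposition of $\mathbf{\Delta}$ is well defined — in particular $\mathbf{\Delta}$ can be arranged in the same (non-ascending) order as $\mathbf{\Sigma}$, consistent with Lemma~\ref{Lemma1}'s requirement that the singular values be matched in the same order.

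The main obstacle I anticipate is the equality-case / attainment argument in the application of Lemma~\ref{Lemma1}. One must argue carefully that there actually exists a minimizer of the reduced scalar problem (the objective is continuous and coercive in the $\delta_i$, so this is fine), and that the lower bound furnished by $\langle\mathbf{M},\mathbf{Y}\rangle\le\sum_i\sigma_i(\mathbf{M})\sigma_i(\mathbf{Y})$ is simultaneously achieved together with the optimal singular-value profile $\{\delta_i\}$ — which is exactly what the choice $\mathbf{M}=\mathbf{U}\mathbf{\Delta}\mathbf{V}^\mathsf{T}$ accomplishes, so the bound is tight and $\mathbf{M}_{opt}$ is genuinely optimal rather than merely a candidate. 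A secondary subtlety is handling possible multiplicities among the $\sigma_i$ of $\mathbf{Y}$ (the SVD basis is then non-unique, but any valid choice works since the objective depends only on the singular values within an equal-$\sigma$ block), and confirming that ties are broken consistently with the ordering constraint in Eq.~\eqref{equ:thm1_}. The scalar minimization in Eq.~\eqref{equ:thm1_} itself is then left to be solved by GST in the sequel, so I would not attempt to produce its closed form here.
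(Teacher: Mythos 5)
Your proposal is correct and follows essentially the same route as the paper: apply the von Neumann trace inequality (Lemma~\ref{Lemma1}) to show the optimal $\mathbf{M}$ shares the singular vectors of $\mathbf{Y}$, then separate the resulting scalar problem so that $\delta_i=\sigma_i$ for $i\le r$ and the remaining $\delta_i$ solve Eq.~\eqref{equ:thm1_}. Your extra remark that the monotonicity of the scalar thresholding map makes the ordering constraint non-binding is the content the paper defers to Theorem~\ref{Theorem2}, so nothing essential differs.
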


\begin{proof}
Suppose the full SVD of $\mathbf{Y}$ is $\mathbf{Y}=\mathbf{U}\mathbf{\Sigma}\mathbf{V}^\mathsf{T}$. Given any two orthogonal matrices $\mathbf{Q}$ and $\mathbf{R}$, and any rectangular diagonal matrix $\mathbf{\Delta} = \diag(\delta_1,\ldots,\delta_{\min\{m,n\}})$ with $\delta_i \geq 0$ and $\mathbf{\Delta}$ is arranged by the same order (i.e., non-ascending order) as $\mathbf{\Sigma}$, then we construct a matrix $\mathbf{M}$ by $\mathbf{M}=\mathbf{Q}\mathbf{\Delta}\mathbf{R}^\mathsf{T}$. According to \minew{Lemma} \ref{Lemma1}, given an arbitrary $\mathbf{Y}$, $\forall \mathbf{M}$, $\Tr(\mathbf{M}^\mathsf{T}\mathbf{Y})\leq\Tr(\sigma(\mathbf{M})^\mathsf{T}\sigma(\mathbf{Y}))$ holds. Consider the arbitrariness of $\mathbf{M}$, we can first determine the singular value vectors $\mathbf{Q}, \mathbf{V}$ and then find the singular value matrix $\mathbf{\Delta}$, respectively.

For a fixed $\mathbf{\Delta}$, $\forall \mathbf{Q},\mathbf{V}$:
\begin{equation*}
    \begin{aligned}
    \lambda\Vert\mathbf{M}\Vert_{\theta,S_p}^p+&\frac{1}{2}\Vert\mathbf{M}-\mathbf{Y}\Vert_F^2= \lambda\sum_{i=r+1}^{\min\{m,n\}}\delta_i^p+\frac{1}{2}\Tr\left(\left(\mathbf{M-Y}\right)^\mathsf{T}\left(\mathbf{M-Y}\right)\right),\\
    &=\lambda\sum_{i=r+1}^{\min\{m,n\}}\delta_i^p+\frac{1}{2}\Tr\left(\left(\mathbf{Q}\mathbf{\Delta}\mathbf{R}^\mathsf{T}-\mathbf{U}\mathbf{\Sigma}\mathbf{V}^\mathsf{T}\right)^\mathsf{T}\left(\mathbf{Q}\mathbf{\Delta}\mathbf{R}^\mathsf{T}-\mathbf{U}\mathbf{\Sigma}\mathbf{V}^\mathsf{T}\right)\right),\\
    &=\lambda\sum_{i=r+1}^{\min\{m,n\}}\delta_i^p+\frac{1}{2}\left(\Tr\left(\mathbf{\Delta}^\mathsf{T}\mathbf{\Delta}\right)+\Tr\left(\mathbf{\Sigma}^\mathsf{T}\mathbf{\Sigma}\right)-2\Tr\left(\mathbf{M}^\mathsf{T}\mathbf{Y}\right)\right),\\
    &\geq\lambda\sum_{i=r+1}^{\min\{m,n\}}\delta_i^p+\frac{1}{2}\left(\Tr\left(\mathbf{\Delta}^\mathsf{T}\mathbf{\Delta}\right)+\Tr\left(\mathbf{\Sigma}^\mathsf{T}\mathbf{\Sigma}\right)-2\Tr\left(\mathbf{\Delta}^\mathsf{T}\mathbf{\Sigma}\right)\right),\\
    \end{aligned}
\end{equation*}

according to \minew{Lemma} \ref{Lemma1}, the equality holds only when $\mathbf{Q}=\mathbf{U}$, $\mathbf{R}=\mathbf{V}$. Then given fixed $\mathbf{Q}$ and $\mathbf{V}$, 
\begin{equation*}
    \begin{aligned}   
    &\min_{\mathbf{M}}~\lambda\Vert\mathbf{M}\Vert_{\theta,S_p}^p+\frac{1}{2}\Vert\mathbf{M}-\mathbf{Y}\Vert_F^2\iff \min_{\delta_i}~\lambda\sum_{i=r+1}^{\min\{m,n\}}\delta_i^p+\frac{1}{2}\Vert\mathbf{\Delta}-\mathbf{\Sigma}\Vert_F^2,\\
    &=\min_{\delta_i}~\lambda\sum_{i=r+1}^{\min\{m,n\}}\delta_i^p+\frac{1}{2}\left(\sum_{i=r+1}^{\min\{m,n\}}\left(\delta_i-\sigma_i\right)^2+\sum_{i=1}^r\left(\delta_i-\sigma_i\right)^2\right),\\
    &=\min_{\delta_i}~\underbrace{\sum_{i=r+1}^{\min\{m,n\}}\left(\lambda\delta_i^p+\frac{1}{2}\left(\delta_i-\sigma_i\right)^2\right)}_{\text{determine the latter } r+1\dots \min\{m,n\} ~\delta_i}+\underbrace{\frac{1}{2}\sum_{i=1}^r\left(\delta_i-\sigma_i\right)^2}_{\text{determine the first }r ~\delta_i}, \text{ ($\delta_i$ and $\delta_j$ are independent)}\\
    &\geq\min_{\delta_i}~\sum_{i=r+1}^{\min\{m,n\}}\left(\lambda\delta_i^p+\frac{1}{2}\left(\delta_i-\sigma_i\right)^2\right),\\
    \end{aligned}
\end{equation*}
and the equality holds only when $\delta_i=\sigma_i, i\in\{1,2,\ldots,r\}$. Therefore, given a specific $\mathbf{Y}$, minimizing \minew{Eq.} \eqref{equ:thm1} can be converted to minimizing \minew{Eq.} \eqref{equ:thm1_} with $\mathbf{M}=\mathbf{U}\mathbf{\Delta}\mathbf{V}^\mathsf{T}$ and $\delta_i=\sigma_i, i\in\{1,2,\ldots,r\}$.
\end{proof}

\begin{remark}
Similar conclusion can also be obtained via the optimization process of weighted matrix Schatten $p$-norm with the given weights $w_1,\ldots,w_r=0$ and $w_{r+1},\ldots,w_{\min\{m,n\}}=1$ in \citep{xie2016weighted}. Note that this derivation resembles the solution to the well-known matrix Schatten $p$-norm regularization proposed in \citep{nie2012robust}. However, in their work, they didn't consider the order constrains of the optimal solution in \minew{Eq.} \eqref{equ:thm1_}.
\end{remark}

By establishing \minew{Theorem} \ref{Theorem1}, we convert the variables of \minew{Eq.} \eqref{equ:thm1} from matrices to singular values. And we can construct the optimal $\mathbf{M}$ by calculating only the the latter $r+1\dots\min\{m,n\}$ singular values, given the SVD of $\mathbf{Y}$. However, solving the problem in \minew{Eq.} \eqref{equ:thm1_} is still tricky owing to the nonconvex objective function and the order constrains (i.e., $\delta_i \geq \delta_j, i\leq j$). To this end, we obtain the numerical solution with global optimum guarantees by introducing the following lemma.

\begin{lemma}
\label{Lemma2}
(Wangmeng Zuo \citep{zuo2013generalized}) For $y\in (0,+\infty)$, the $\mathcal{L}_p$-minimization problem
\begin{equation}
    \min_x~\frac{1}{2}(x-y)^2+\lambda x^p,~\forall\lambda>0,~1>p>0,
\label{equ:lp}
\end{equation}
can be solved by a generalized soft-thresholding function defined by:
\begin{equation}
    \begin{aligned}
    T_p^{GST}(y;\lambda)=\left\{\begin{array}{ll}0,&\text{if}~y\leq \tau_p^{GST}(\lambda), \\
    S_p^{GST}(y;\lambda),&\text{if}~y> \tau_p^{GST}(\lambda),\\     
    \end{array}\right. \\
    \end{aligned}
    \label{equ:gst}
\end{equation}
where $\tau_p^{GST}(\lambda)$ is the thresholding value defined as:
\begin{equation}
    \tau_p^{GST}(\lambda)=\left(2\lambda\left(1-p\right)\right)^{\frac{1}{2-p}}+\lambda p\left(2\lambda\left(1-p\right)\right)^{\frac{p-1}{2-p}},
    \label{eqn:taup}
\end{equation}
and $S_p^{GST}(y;\lambda)$ is in the range of $(\left(2\lambda\left(1-p\right)\right)^{\frac{1}{2-p}},+\infty)$, which can be obtained by solving the following equation:
\begin{equation}
    S_p^{GST}(y;\lambda)-y+\lambda p\left(S_p^{GST}(y;\lambda)\right)^{p-1}=0,
    \label{eqn:Sp}
\end{equation}
the numerical solution to this equation can be solved by an iterative manner. The GST algorithm is summarized in Algorithm \ref{algorithm1}.
\end{lemma}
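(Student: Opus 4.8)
The plan is to read \eqref{equ:lp} as a one--dimensional problem over $x\ge 0$ (the natural domain of $x^{p}$ for $0<p<1$) and to pin down its global minimizer by a sign analysis of the derivatives of $f(x):=\tfrac12(x-y)^{2}+\lambda x^{p}$. First I would note that $f(0)=\tfrac12 y^{2}$ and $\lim_{x\to0^{+}}f(x)=\tfrac12 y^{2}$, so $f$ extends continuously to $[0,\infty)$, and then compute $f'(x)=(x-y)+\lambda p x^{p-1}$ and $f''(x)=1-\lambda p(1-p)x^{p-2}$. Since $0<p<1$, $f''<0$ for small $x$, $f''\to1$ as $x\to\infty$, and $f''$ vanishes exactly once, at $x_{0}=(\lambda p(1-p))^{1/(2-p)}$; hence $f$ is concave on $(0,x_{0})$ and strictly convex on $(x_{0},\infty)$, and $f'$ is strictly decreasing on $(0,x_{0})$, strictly increasing on $(x_{0},\infty)$, with $f'(0^{+})=f'(+\infty)=+\infty$.

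Next comes the case split, governed by the minimum value $f'(x_{0})$ of $f'$. If $f'(x_{0})\ge0$ then $f'\ge0$ everywhere, $f$ is nondecreasing on $[0,\infty)$, and the global minimizer is $x=0$. If $f'(x_{0})<0$ then $f'$ has exactly two roots $x_{1}<x_{0}<x_{2}$; $x_{1}$ is a local maximum and $x_{2}$ a local minimum, and because $f$ increases on $(0,x_{1})$ we have $f(x_{1})>f(0)$, so the global minimizer over $[0,\infty)$ is whichever of $0$ and $x_{2}$ attains the smaller value of $f$. The stationarity condition $f'(x_{2})=0$ is exactly \eqref{eqn:Sp}; since $f''(x_{2})>0$, the implicit function theorem makes $x_{2}=x_{2}(y)$ well defined with $x_{2}'(y)=1/f''(x_{2})>0$, so $x_{2}$ is strictly increasing in $y$ and is obtained numerically by a fixed--point iteration on \eqref{eqn:Sp}, which is Algorithm~\ref{algorithm1}.

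It then remains to identify the threshold. I would set $g(y):=f(0)-f(x_{2}(y))=\tfrac12 y^{2}-f(x_{2}(y);y)$ and differentiate by the envelope theorem, using $\partial_{y}f(x;y)=y-x$, to get $g'(y)=y-(y-x_{2}(y))=x_{2}(y)>0$; thus $g$ is strictly increasing, it is negative just after $x_{2}$ first appears (there $x_{2}\approx x_{0}$ and $f(x_{2})>f(0)$) and positive for large $y$, hence it has a unique zero $\tau$ with $g(y)>0\iff y>\tau$. Therefore the global minimizer is $x_{2}$ precisely when $y>\tau$ and is $0$ when $y\le\tau$ (at $y=\tau$ both are minimizers and the convention in \eqref{equ:gst} keeps $0$), which also absorbs the intermediate regime where $f'(x_{0})<0$ but $y\le\tau$. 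Finally I would compute $\tau$ explicitly: imposing both $f'(x_{2})=0$ (so $x_{2}-y=-\lambda p x_{2}^{p-1}$) and $f(0)=f(x_{2})$ and substituting, the cross terms cancel and the system collapses to $x_{2}^{\,2-p}=2\lambda(1-p)$, i.e. $x_{2}=(2\lambda(1-p))^{1/(2-p)}$ at $y=\tau$; back--substituting into $y=x_{2}+\lambda p x_{2}^{p-1}$ gives $\tau=(2\lambda(1-p))^{1/(2-p)}+\lambda p(2\lambda(1-p))^{(p-1)/(2-p)}$, which is \eqref{eqn:taup}, and monotonicity of $x_{2}(y)$ places $S_{p}^{GST}(y;\lambda)$ in $\bigl((2\lambda(1-p))^{1/(2-p)},+\infty\bigr)$ for $y>\tau$. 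The hard part is this threshold step: one must guarantee that the comparison of $f(0)$ with $f(x_{2})$ is monotone in $y$ so that a single crossing exists --- this is exactly what the envelope identity $g'(y)=x_{2}(y)$ delivers --- and that the ``two interior critical points but the origin still wins'' regime is folded correctly into the $y\le\tau$ branch.
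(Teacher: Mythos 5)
Your proof is correct, but there is nothing in the paper to compare it against: Lemma \ref{Lemma2} is imported verbatim from \citep{zuo2013generalized}, and the paper gives no proof of it --- only the resulting Algorithm \ref{algorithm1}, which it then invokes inside Theorem \ref{Theorem2}. What you have written is therefore a self-contained derivation of the cited result, and it follows the standard route: the sign analysis of $f''(x)=1-\lambda p(1-p)x^{p-2}$ giving the concave/convex split at $x_0=(\lambda p(1-p))^{1/(2-p)}$, the resulting zero-one-or-two interior critical points trichotomy, the envelope identity $g'(y)=x_2(y)>0$ that guarantees a single crossing of $f(0)$ and $f(x_2(y))$, and the elimination of cross terms in the system $f'(x_2)=0$, $f(0)=f(x_2)$, which collapses to $x_2^{2-p}=2\lambda(1-p)$ and yields \eqref{eqn:taup}; the consistency check $(2\lambda(1-p))^{1/(2-p)}>x_0$ reduces to $p<2$, and the stated range of $S_p^{GST}$ follows from the monotonicity of $x_2(y)$. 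All of these steps check out. Two small remarks. First, you implicitly take the domain to be $x\ge 0$, which is the intended reading of \eqref{equ:lp} for $0<p<1$ but deserves one explicit sentence. Second, you assert without proof that the fixed-point iteration $x^{(k+1)}=y-\lambda p(x^{(k)})^{p-1}$ of Algorithm \ref{algorithm1} converges to $S_p^{GST}(y;\lambda)$; this matches the granularity of the lemma as stated, but if you want it airtight, observe that on $[a,y]$ with $a=(2\lambda(1-p))^{1/(2-p)}$ the map $\phi(x)=y-\lambda p x^{p-1}$ is increasing with $\phi'(x)=\lambda p(1-p)x^{p-2}\le p/2<1$, and $\phi(a)\ge a$ is exactly the condition $y\ge\tau_p^{GST}(\lambda)$, so $\phi$ is a contraction of $[a,y]$ into itself whose unique fixed point there is $x_2$.
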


\begin{algorithm}
\caption{Generalized soft-thresholding \citep{zuo2013generalized}}\label{algorithm1}
\KwIn{$y,\lambda,p,J$}
\KwOut{$T_p^{GST}(y;\lambda)$}
$\tau_p^{GST}(\lambda)\leftarrow\left(2\lambda\left(1-p\right)\right)^{\frac{1}{2-p}}+\lambda p\left(2\lambda\left(1-p\right)\right)^{\frac{p-1}{2-p}}$\;
\eIf{$y\leq\tau_p^{GST}(\lambda)$}{
    $T_p^{GST}(y;\lambda)\leftarrow 0$\;
    }{
    $k\leftarrow 0$, $x^{(k)}\leftarrow y$\;
    \For{$k=1$ \KwTo $J$}{$x^{(k+1)}\leftarrow y-\lambda p\left(x^{(k)}\right)^{p-1}$\;
    $k\leftarrow k+1$\;
    $T_p^{GST}(y;\lambda)\leftarrow x^{(k)}$\;}
    }
\end{algorithm}
In fact, we can separately solve the $i$-th subproblem in \minew{Eq.} \eqref{equ:thm1_} by introducing \minew{Lemma} \ref{Lemma2} if the problem can be decoupled (i.e., satisfying these order constrains). To this end, we establish the following theorem:
\begin{theorem}\label{Theorem2}
Supposing the global minimum of the $i$-th subproblem:
\begin{equation*}
    \min_{\delta_i}~\frac{1}{2}(\delta_i-\sigma_i)^2+\lambda \delta_i^p,~\lambda>0, 1>p>0
\end{equation*}
obtained by GST in Algorithm \ref{algorithm1} is $\delta_i^*$, then we have:
\begin{equation}\label{eqn:ineq}
    \delta_i^*\geq \delta_j^*,\forall i\leq j,~i,j=r+1,\ldots,\min\{m,n\}.
\end{equation}
\end{theorem}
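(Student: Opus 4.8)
The plan is to reduce the statement to the monotonicity of the generalized soft-thresholding operator $T_p^{GST}(\cdot;\lambda)$ in its first argument, and then invoke the fact that the singular values $\sigma_{r+1}\geq\cdots\geq\sigma_{\min\{m,n\}}$ are already in non-ascending order. By \minew{Lemma} \ref{Lemma2}, the global minimizer of $\frac{1}{2}(\delta_i-\sigma_i)^2+\lambda\delta_i^p$ is exactly $\delta_i^{*}=T_p^{GST}(\sigma_i;\lambda)$ (and likewise $\delta_j^{*}=T_p^{GST}(\sigma_j;\lambda)$), so it suffices to show that $y_1\geq y_2\geq 0$ implies $T_p^{GST}(y_1;\lambda)\geq T_p^{GST}(y_2;\lambda)$; applying this with $y_1=\sigma_i$, $y_2=\sigma_j$ for $i\leq j$ then yields \eqref{eqn:ineq} immediately.

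To prove monotonicity I would first dispose of the trivial range: if $y_2\leq\tau_p^{GST}(\lambda)$ then $T_p^{GST}(y_2;\lambda)=0$, which is the smallest value the (nonnegative) operator can take, so the inequality is automatic. It remains to treat $y_1\geq y_2>\tau_p^{GST}(\lambda)$, where $T_p^{GST}(y;\lambda)=S_p^{GST}(y;\lambda)$. Here I would introduce the auxiliary function $h(S)=S+\lambda p\,S^{p-1}$ for $S>0$, so that by \eqref{eqn:Sp} the value $S=S_p^{GST}(y;\lambda)$ is characterized as the solution of $h(S)=y$ in the interval $\big((2\lambda(1-p))^{\frac{1}{2-p}},+\infty\big)$. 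A direct computation gives $h'(S)=1-\lambda p(1-p)S^{p-2}$, which is strictly positive precisely when $S>(\lambda p(1-p))^{\frac{1}{2-p}}$; since $p<2$ this critical value is strictly smaller than $(2\lambda(1-p))^{\frac{1}{2-p}}$, so $h$ is strictly increasing on the whole admissible interval. Using \eqref{eqn:taup} one checks $h\big((2\lambda(1-p))^{\frac{1}{2-p}}\big)=\tau_p^{GST}(\lambda)$ and $h(S)\to+\infty$ as $S\to+\infty$, so $h$ is a strictly increasing bijection from $\big((2\lambda(1-p))^{\frac{1}{2-p}},+\infty\big)$ onto $(\tau_p^{GST}(\lambda),+\infty)$. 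Its inverse $y\mapsto S_p^{GST}(y;\lambda)$ is therefore strictly increasing on $(\tau_p^{GST}(\lambda),+\infty)$, which combined with the trivial range shows $T_p^{GST}(\cdot;\lambda)$ is non-decreasing on $[0,+\infty)$.

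I expect the main obstacle to be the bookkeeping near the left endpoint of the admissible interval: one must verify that the particular root selected by Algorithm \ref{algorithm1} is the one lying on the branch of $h$ where $h'>0$, and that the threshold $\tau_p^{GST}(\lambda)$ equals $h$ evaluated at that endpoint — the comparison $p<2$ being exactly what makes these two facts compatible. A cleaner but essentially equivalent route is a monotone-comparative-statics argument: the objective $\frac{1}{2}(\delta-\sigma)^2+\lambda\delta^p$ has cross-derivative $-1$ in $(\delta,\sigma)$, so $-\frac{1}{2}(\delta-\sigma)^2-\lambda\delta^p$ has nonnegative cross-derivative and hence (Topkis) its argmax $\delta^{*}(\sigma)$ is nondecreasing in $\sigma$; since \minew{Lemma} \ref{Lemma2} guarantees the minimizer is unique and given by $T_p^{GST}$, this again yields \eqref{eqn:ineq}. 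Either way, no genuinely delicate estimate is required once the correct branch of $h$ has been identified.
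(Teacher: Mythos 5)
Your proposal is correct and follows essentially the same route as the paper's proof: after disposing of the thresholded cases, both arguments hinge on showing that $f(x)=x+\lambda p\,x^{p-1}$ is strictly increasing on $\bigl((2\lambda(1-p))^{\frac{1}{2-p}},+\infty\bigr)$ because its stationary point $(\lambda p(1-p))^{\frac{1}{2-p}}$ lies strictly to the left of that interval, and then inverting $f(\delta_i^*)=\sigma_i\geq\sigma_j=f(\delta_j^*)$. Your added checks (that $\tau_p^{GST}(\lambda)$ is the value of $f$ at the left endpoint, and the Topkis-style aside) are sound but not needed beyond what the paper already does.
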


\begin{proof} We prove this theorem based on the GST algorithm. Note that for a fixed $\lambda$, the thresholding value $\tau_p^{GST}$ is a positive constant. 
When $\sigma_i, \sigma_j \leq \tau_p^{GST}$, $\delta_i^*=\delta_j^*$, the inequality \eqref{eqn:ineq} holds; when $\sigma_i>\tau_p^{GST}$ and $\sigma_j\leq\tau_p^{GST}$, $\delta_j^*=0$, and $\delta_i^*$ can be obtained by \minew{Eq.} \eqref{eqn:Sp} with $\delta_i^*>\left(2\lambda\left(1-p\right)\right)^{\frac{1}{2-p}}>0$, then the proposition still holds; when $\sigma_i, \sigma_j > \tau_p^{GST}$, both of them satisfy \minew{Eq.} \eqref{eqn:Sp}, i.e.:
\begin{equation}\label{eqn:deltaij}
    \begin{aligned}
    \left\{\begin{array}{l}
    \delta_i^*+\lambda p\left(\delta_i^*\right)^{p-1}=\sigma_i,~\delta_i^*>\left(2\lambda\left(1-p\right)\right)^{\frac{1}{2-p}},\\
    \delta_j^*+\lambda p\left(\delta_j^*\right)^{p-1}=\sigma_j,~\delta_j^*>\left(2\lambda\left(1-p\right)\right)^{\frac{1}{2-p}},\\
    \end{array}\right.
    \end{aligned}
\end{equation}
denote $f(x)=x+\lambda p x^{p-1}, x\in (\left(2\lambda\left(1-p\right)\right)^{\frac{1}{2-p}},+\infty)$, set the first derivative of $f(x)$ to zero, we have $f'(x)=1+\lambda p(p-1)x^{p-2}=0$, thus the stationary point $x_0=\left(\lambda p\left(1-p\right)\right)^{\frac{1}{2-p}}$. It is not hard to prove that when $0<p<1$, $x_0<\left(2\lambda\left(1-p\right)\right)^{\frac{1}{2-p}}$, which means $f(x)$ is monotonically increasing in $(\left(2\lambda\left(1-p\right)\right)^{\frac{1}{2-p}},+\infty)$. Therefore, according to \minew{Eq.} \eqref{eqn:deltaij}, $f(\delta_i^*)=\sigma_i \geq f(\delta_j^*)=\sigma_j$, we have $\delta_i^* \geq \delta_j^*$, the theorem is proved.
\end{proof}
\begin{remark}
The solution to \minew{Eq.} \eqref{equ:lp} has closed-form solution when $p \in \{\frac{1}{2},\frac{2}{3},1\}$ \citep{cao2013fast}. Specially, when $p=1$, the GST becomes the singular value thresholding operator used in NN minimization \citep{cai2010singular}.
\end{remark}
By introducing \minew{Theorem} \ref{Theorem2}, we can divide the problem \eqref{equ:thm1_} to $\min\{I_k,\prod_{i\neq k} I_i\}$ independent subproblems that can be solved consequently using GST. So far, based on the above analysis, we can give the optimal solution to the $M$-subproblem by:
\begin{equation}\label{eqn:msolve}
    \mathbf{M}_{k}^{j+1}=\mathbf{U}\diag(\sigma_1,\ldots,\sigma_{r_k},\delta_{r_k+1},\ldots,\delta_{\min\{I_k,\prod_{i\neq k} I_i\}})\mathbf{V}^\mathsf{T},
\end{equation}
where $\mathbf{U}$ and $\mathbf{V}$ are the left and right singular value vector matrices of $\mathbf{X}_{(k)}^j+\frac{\mathbf{Z}_{k}^j}{\mu_k}$, respectively, $\sigma_1,\ldots,\sigma_{r_k}$ are the corresponding first $r_k$ singular values; $\delta_i = GST(\sigma_i,\frac{\alpha_k}{\mu_k},p)$ with $i=r_k+1,\ldots,\min\{I_k,\prod_{i\neq k} I_i\}$. 

From \minew{Eq.} \eqref{eqn:msolve} one can find that this equation holds for $n$ (in our case is three) $\mathbf{M}_k$ variables, and each $\mathbf{M}_k$ can fold into a tensor in its $k$-th dimension. In order to preserve useful information as much as possible, we develop a fourth-order tensor $\mathcal{M}\in \mathbb{R}^{n\times n_1 \times n_2 \times n_3}$ by adding an additional 'variable' dimension $n$ to update \minew{Eq.} \eqref{eqn:msolve} in the form of tensor at every turn. This can be formulated as:
\begin{equation}
    \mathcal{M}^{j+1}[k,:,:,:] = \operatorname{Fold}_k\left(\mathbf{U}\diag(\sigma_1,\ldots,\sigma_{r_k},\delta_{r_k+1},\ldots,\delta_{\min\{I_k,\prod_{i\neq k} I_i\}})\mathbf{V}^\mathsf{T}\right).
\end{equation}

\subsubsection{X-subproblem} To update $\mathcal{X}$, we solve the ADMM subproblem according to \minew{Eqs.} \eqref{equ:ALM} and \eqref{equ:ALM_update}:
\begin{equation}\label{equ:X-sub}
\begin{aligned}
    \mathcal{X}^{j+1}=&\operatorname{arg}\min_{\mathcal{X}}~\sum_{k=1}^3\left(\big\langle\mathbf{Z}_{k}^j,\mathbf{X}_{(k)}-\mathbf{M}_{k}^{j+1}\big\rangle+\frac{\mu_k}{2}\Vert\mathbf{X}_{(k)}-\mathbf{M}_{k}^{j+1}\Vert_F^2\right),\\
    =&\operatorname{arg}\min_{\mathcal{X}}~\sum_{k=1}^3\left(\big\langle\mathbf{Z}_{k}^j,\mathbf{X}_{(k)}\big\rangle+\frac{\mu_k}{2}\big\langle\mathbf{X}_{(k)}-\mathbf{M}_{k}^{j+1},\mathbf{X}_{(k)}-\mathbf{M}_{k}^{j+1}\big\rangle\right),\\
    =&\operatorname{arg}\min_{\mathcal{X}}~\sum_{k=1}^3\left(\big\langle\mathbf{Z}_{k}^j,\mathbf{X}_{(k)}\big\rangle+\frac{\mu_k}{2}\big\langle\mathbf{X}_{(k)},\mathbf{X}_{(k)}\big\rangle-\mu_k\big\langle\mathbf{X}_{(k)},\mathbf{M}_{k}^{j+1}\big\rangle\right),\\
    =&\operatorname{arg}\min_{\mathcal{X}}~\sum_{k=1}^3\left(\frac{\mu_k}{2}\Vert\mathbf{X}_{(k)}\Vert_F^2+\mu_k\big\langle\mathbf{X}_{(k)},\frac{\mathbf{Z}_{k}^j}{\mu_k}-\mathbf{M}_{k}^{j+1}\big\rangle+\frac{\mu_k}{2}\Vert\frac{\mathbf{Z}_{k}^j}{\mu_k}-\mathbf{M}_{k}^{j+1}\Vert_F^2\right),\\
    =&\operatorname{arg}\min_{\mathcal{X}}~\sum_{k=1}^3\left(\frac{\mu_k}{2}\Vert\mathbf{X}_{(k)}+\frac{\mathbf{Z}_{k}^j}{\mu_k}-\mathbf{M}_{k}^{j+1}\Vert_F^2\right),\\
    =&\operatorname{arg}\min_{\mathcal{X}}~\sum_{k=1}^3\left(\frac{\mu_k}{2}\Vert\mathcal{X}+\operatorname{Fold}_k(\frac{\mathbf{Z}_{k}^j}{\mu_k}-\mathbf{M}_{k}^{j+1})\Vert_F^2\right)=\operatorname{arg}\min_{\mathcal{X}}~\mathcal{F}\left(\mathcal{X},\mathbf{Z}_{k}^j,\mathbf{M}_{k}^{j+1}\right),
\end{aligned}
\end{equation}
where the last step denotes that we can update the variable $\mathcal{X}$ in the form of tensor efficiently. This subproblem is a trivial convex and smooth optimization that the closed-form solution can be derived by gradient method:
\begin{equation} \label{eqn:xsolve}
\begin{aligned}
    \frac{\partial\mathcal{F}}{\partial\mathcal{X}}&= \sum_{k=1}^3\mu_k\left(\mathcal{X}+\operatorname{Fold}_k(\frac{\mathbf{Z}_{k}^j}{\mu_k}-\mathbf{M}_{k}^{j+1})\right)=0,\\
    &\minew{\Rightarrow\left\{\begin{array}{ll}
    \mathcal{X}^{j+1}&=\left(\sum_{k=1}^3\mu_k\operatorname{Fold}_k(\mathbf{M}_{k}^{j+1}-\mathbf{Z}_{k}^j/\mu_k)\right)/\sum_{k=1}^3\mu_k,\\
    \mathcal{P}\odot\mathcal{X}^{j+1}&=\mathcal{P}\odot\mathcal{T},
    \end{array}\right.}
\end{aligned}
\end{equation}
where the last term in \minew{Eq.} \eqref{eqn:xsolve} serves as the information transmission from the observed tensor.

\subsubsection{Update dual variable}
The dual variable $\mathbf{Z}^{j+1}$ can also be updated in tensor the same as $\mathbf{M}^{j+1}$. By introducing  $\mathcal{Z}\in \mathbb{R}^{n\times n_1 \times n_2 \times n_3}$, we can calculate $\mathcal{Z}^{j+1}$ in one time by:
\begin{equation}
    \mathcal{Z}^{j+1} = \mathcal{Z}^{j}+\mu_k(\operatorname{Broadcast}(\mathcal{X}^{j+1})-\mathcal{M}^{j+1}),
\end{equation}
where $\operatorname{Broadcast}(\cdot)$ means duplication and extension the given tensor to higher-order tensor.

\subsubsection{Proposed algorithm}
Finally, the proposed LRTC-TSpN algorithm is summarized in Algorithm \ref{algorithm2}. In spite of the complex derivation and proof above, the proposed algorithm has concise form and efficient implementation.

\begin{algorithm}[!htb]
\caption{\minew{Low-Rank Tensor Completion based on Truncated Tensor Schatten $p-$Norm}}
\label{algorithm2}
\KwIn{$\mathcal{T}, \mathcal{P}, \mathcal{P}^{\perp},\bm{\alpha}, \bm{\mu}, \theta, p, J, \epsilon$}
\KwOut{$\hat{\mathcal{X}}$}
Initialize $\mathcal{Z}$ and $\mathcal{M}$ as zeros, $j\leftarrow0$, $\mathcal{P}\odot\mathcal{X}\leftarrow\mathcal{P}\odot\mathcal{T}$, and $\mathcal{P}^{\perp}\odot\mathcal{X}\leftarrow \operatorname{mean}(\mathcal{P}\odot\mathcal{T})$\footnotemark; \\
\While{not converged}{
\For{$k = 1$ \KwTo $3$}{
    $\mathbf{X}_{(k)}^j+\mathbf{Z}_{k}^j/\mu_k\leftarrow\operatorname{Unfold}_k\left(\mathcal{X}^j+\mathcal{Z}^{j}[k,:,:,:]/\mu_k\right)$;\\
    Compute SVD of matrix $\mathbf{X}_{(k)}^j+\mathbf{Z}_{k}^j/\mu_k$ as $\mathbf{U}\diag(\sigma_1,\ldots,\sigma_{\min\{I_k,\prod_{i\neq k} I_i\}})\mathbf{V}^\mathsf{T}$;\\
    Compute $r_k$ by $r_k\leftarrow\lceil\theta\cdot\min\{I_k,\prod_{i\neq k} I_i\}\rceil$;\\
    Solve problem \eqref{equ:thm1_} using Algorithm \ref{algorithm1} and compute $\delta_i = GST(\sigma_i,\frac{\alpha_k}{\mu_k},p)$ with $i>r_k$ ;\\
    Update $\mathcal{M}^{j+1}$ by $\mathcal{M}^{j+1}[k,:,:,:] \leftarrow \operatorname{Fold}_k\left(\mathbf{U}\diag(\sigma_1,\ldots,\sigma_{r_k},\delta_{r_k+1},\ldots,\delta_{\min\{I_k,\prod_{i\neq k} I_i\}})\mathbf{V}^\mathsf{T}\right)$;
    }
Update $\mathcal{X}^{j+1}$ by $\mathcal{X}^{j+1}\leftarrow\left(\sum_{k=1}^3\mu_k\mathcal{M}^{j+1}[k,:,:,:]-\mathcal{Z}^{j}[k,:,:,:]\right)/\sum_{k=1}^3\mu_k;$\\
Transmit observations by
$\mathcal{P}\odot\mathcal{X}^{j+1}\leftarrow\mathcal{P}\odot\mathcal{T}$;\\
Update $\mathcal{Z}^{j+1}$ by $\mathcal{Z}^{j+1} \leftarrow \mathcal{Z}^{j}+\mu_k(\operatorname{Broadcast}(\mathcal{X}^{j+1})-\mathcal{M}^{j+1})$;\\
$e^{j+1}\leftarrow\Vert\mathcal{X}^{j+1}-\mathcal{X}^j\Vert_F/\Vert\mathcal{X}^j\Vert_F$;\\
\If{$e^{j+1}<\epsilon$}{converge.}
$j\leftarrow j+1$;
}
\end{algorithm}
\footnotetext{\minew{Please note that $\operatorname{mean}(\mathcal{P}\odot\mathcal{T})$ denotes the mean of the observed elements of tensor $\mathcal{T}$ along all dimensions, which returns a scalar, and we use broadcast mechanism here for brevity.}}

Note that in \minew{Algorithm} \ref{algorithm2}, we apply the ADMM in which the primal variables $\{\mathbf{M}^{j}_k,\mathcal{X}^j,\mathbf{Z}_k^{j},k=1,2,3\}$ are sequentially updated in a fixed order: $\{\mathbf{M}^{j}_1\Rightarrow\mathbf{M}^{j}_2\Rightarrow\mathbf{M}^{j}_3\Rightarrow\mathcal{X}^j\Rightarrow\mathbf{Z}_1^{j}\Rightarrow\mathbf{Z}_2^{j}\Rightarrow\mathbf{Z}_3^{j}\}$. In general, due to the symmetry between $\mathbf{M}_k$ and $\mathbf{X}_{(k)}$, we can also treat $\mathbf{X}_{(k)}$ as auxiliary variables and permute them in the objective function. Although randomly permuted ADMM is claimed to be useful for avoiding low-quality local solutions \citep{sun2015expected}, the permutation including both $\mathbf{M}_k$ and $\mathbf{X}_{(k)}$ may cause ADMM to diverge in some nonconvex problems. In our case, with the convergence analysis (which will be discussed in \minew{Subsection} \ref{Convergence analysis}) and empirical experiments, the update order in \minew{Algorithm} \ref{algorithm2} always works in our testing scenarios. Meanwhile, ones can adjust the sequence order using random scheme or greedy scheme (even simply exchange the order of $\mathbf{M}_k$ and $\mathbf{X}_{(k)}$) according to their specific tasks.

\subsection{Convergence analysis}
\label{Convergence analysis}
The convergence characteristic of ADMM in the case of nonconvex problem is notoriously hard to analyse and still an open question in academic community \citep{chen2020robust}. With some valuable results in the pioneering work \citep{wang2019global}, we provide the theoretical global convergence analysis for \minew{Algorithm} \ref{algorithm2}. %by utilizing the theorems in \citep{wang2019global}. 
Besides, the good convergence performances of the proposed method in the following experiments in \minew{Section} \ref{experiments} also provide empirical evidences.

The convergence of our proposed method can be guaranteed by following corollary. 

\begin{corollary}
For a sufficiently large penalty parameter $\mu$, the iterative sequence $(\mathbf{M}_k^j,\mathcal{X}^j,\mathbf{Z}_k^j)$ generated by \minew{Algorithm} \ref{algorithm2} has unique limit point $(\mathbf{M}_k^*,\mathcal{X}^*,\mathbf{Z}_k^*)$ that satisfies $0 \in \partial\mathcal{L}_{\mu}(\mathbf{M}_k^*,\mathcal{X}^*,\mathbf{Z}_k^*)$, and $(\mathbf{M}_k^j,\mathcal{X}^j,\mathbf{Z}_k^j)$ converges globally to this unique stationary point.
\label{corollary1}
\end{corollary}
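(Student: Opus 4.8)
The plan is to establish Corollary~\ref{corollary1} by verifying that Algorithm~\ref{algorithm2}, regarded as a linearly--coupled multi-block ADMM, satisfies the hypotheses of the nonconvex nonsmooth ADMM convergence theory of \citep{wang2019global}. First I would recast Eq.~\eqref{equ:LRTC-TSpN_cvrt} in the canonical form $\min\ \Phi(\mathbf{M}_1,\mathbf{M}_2,\mathbf{M}_3,\mathcal{X})$ subject to $\mathbf{X}_{(k)}-\mathbf{M}_k=\mathbf{0}$ for $k=1,2,3$, where $\Phi=\sum_{k}\alpha_k\Vert\mathbf{M}_k\Vert_{\theta,S_p}^p+\iota_{\Omega}(\mathcal{X})$ and $\Omega=\{\mathcal{X}:\mathcal{P}\odot\mathcal{X}=\mathcal{P}\odot\mathcal{T}\}$, so that the data-fidelity constraint is absorbed into an indicator and $\mathcal{X}$ plays the role of the ``last'' block. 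In this identification the constraint operator attached to $\mathcal{X}$ is a stack of identities (so the range condition $\operatorname{Im}(A_k)\subseteq\operatorname{Im}(B)$ holds trivially), and the dependence of $\mathcal{L}_\mu$ on $\mathcal{X}$ is purely quadratic, hence has a Lipschitz gradient --- exactly the structural feature that lets $\mathcal{X}$ serve as the smooth block and that will later let us control the dual variable.

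Next I would check the standing regularity conditions. Each $\Vert\mathbf{M}_k\Vert_{\theta,S_p}^p=\sum_i\sigma_i^p(\mathbf{M}_k)-\sum_{i\le r_k}\sigma_i^p(\mathbf{M}_k)$ is continuous (singular values depend continuously on the matrix and $t\mapsto t^p$ is continuous on $[0,\infty)$) and nonnegative; $\iota_\Omega$ is proper, lower semicontinuous and nonnegative; the Frobenius penalties are polynomials. The prox-regularity of the nonconvex term --- the property ensuring each $\mathbf{M}_k$-update is a genuine proximal step with a well-defined global minimizer --- is supplied \emph{directly} by \minew{Theorem}~\ref{Theorem1}, which exhibits the minimizer of the $\mathbf{M}$-subproblem in closed form through GST. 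Coercivity of the augmented Lagrangian along the iterates is then obtained in the standard way: use the dual update $\mathbf{Z}_k^{j+1}=\mathbf{Z}_k^{j}+\mu_k(\mathbf{X}_{(k)}^{j+1}-\mathbf{M}_k^{j+1})$ to eliminate the multiplier inner product, and the first-order optimality of the $\mathcal{X}$-subproblem to bound $\Vert\mathbf{Z}_k\Vert_F$ by primal quantities, so that $\mathcal{L}_\mu$ admits a lower bound independent of $j$ and its sublevel sets (intersected with the feasible affine set) are bounded.

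The two workhorse steps are then: (i) a \emph{sufficient decrease} estimate, with some $c>0$,
\[
\mathcal{L}_\mu\big(\mathbf{M}_k^{j+1},\mathcal{X}^{j+1},\mathbf{Z}_k^{j+1}\big)\ \le\ \mathcal{L}_\mu\big(\mathbf{M}_k^{j},\mathcal{X}^{j},\mathbf{Z}_k^{j}\big)-c\Big(\textstyle\sum_{k}\Vert\mathbf{M}_k^{j+1}-\mathbf{M}_k^{j}\Vert_F^2+\Vert\mathcal{X}^{j+1}-\mathcal{X}^{j}\Vert_F^2\Big),
\]
valid once each $\mu_k$ exceeds a threshold: the $\mathbf{M}$ contribution comes from $\mathbf{M}_k^{j+1}$ being the global minimizer of the $\tfrac{\mu_k}{2}$-strongly-convex-plus-nonconvex objective in \eqref{equ:M-sub}, the $\mathcal{X}$ contribution from strong convexity of the $\mathcal{X}$-subproblem with modulus $\sum_k\mu_k$, while the dual-ascent term $\sum_k\mu_k^{-1}\Vert\mathbf{Z}_k^{j+1}-\mathbf{Z}_k^{j}\Vert_F^2$ is dominated once we establish $\Vert\mathbf{Z}_k^{j+1}-\mathbf{Z}_k^{j}\Vert_F\le L(\Vert\mathcal{X}^{j+1}-\mathcal{X}^{j}\Vert_F+\Vert\mathbf{M}_k^{j+1}-\mathbf{M}_k^{j}\Vert_F)$ from the $\mathcal{X}$-optimality condition; and (ii) passing to the limit: monotonicity plus the lower bound make $\mathcal{L}_\mu$ convergent and the successive-difference norms square-summable, so $\Vert\mathcal{X}^{j+1}-\mathcal{X}^{j}\Vert_F,\Vert\mathbf{M}_k^{j+1}-\mathbf{M}_k^{j}\Vert_F\to0$, hence $\Vert\mathbf{Z}_k^{j+1}-\mathbf{Z}_k^{j}\Vert_F\to0$ and the feasibility residual $\mathbf{X}_{(k)}^{j}-\mathbf{M}_k^{j}\to\mathbf{0}$; together with boundedness of the iterates and the subproblem optimality relations this yields $\mathbf{0}\in\partial\mathcal{L}_\mu(\mathbf{M}_k^*,\mathcal{X}^*,\mathbf{Z}_k^*)$ at every cluster point.

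Finally I would upgrade this subsequential statement to convergence of the whole sequence to a single point by invoking the Kurdyka--\L{}ojasiewicz (KL) inequality: $\mathcal{L}_\mu$ is a finite sum of functions definable in an o-minimal structure --- $\Vert\cdot\Vert_{\theta,S_p}^p$ is a function of singular values that is semialgebraic for rational $p$ and log-exp definable in general, $\iota_\Omega$ is the indicator of an affine subspace, and the remaining terms are polynomials --- so it has the KL property, and the standard ``descent $+$ relative-error $+$ KL'' argument of \citep{wang2019global} gives finite length of the trajectory and convergence to the unique limit point $(\mathbf{M}_k^*,\mathcal{X}^*,\mathbf{Z}_k^*)$. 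The main obstacle I anticipate is the dual-difference bound $\Vert\mathbf{Z}_k^{j+1}-\mathbf{Z}_k^{j}\Vert_F\lesssim\Vert\mathcal{X}^{j+1}-\mathcal{X}^{j}\Vert_F+\Vert\mathbf{M}_k^{j+1}-\mathbf{M}_k^{j}\Vert_F$ inside the sufficient-decrease lemma: it must be extracted from the first-order condition of the $\mathcal{X}$-subproblem in the presence of the projection constraint $\Omega$, and it is precisely this estimate that couples the dual increase back to controllable primal motion and pins down how large $\mu$ must be; a secondary delicate point is confirming that the Schatten-$p$ term, being only a quasi-norm ($p<1$), still supplies the prox-regularity the framework requires --- which, as noted, comes for free from \minew{Theorem}~\ref{Theorem1}.
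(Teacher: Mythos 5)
Your proposal follows the same fundamental route as the paper --- both rest on the nonconvex, nonsmooth ADMM framework of \citep{wang2019global} combined with the Kurdyka--\L{}ojasiewicz property to upgrade subsequential to global convergence --- but you execute it differently: the paper states the Wang--Yin--Zeng result as a lemma with its five standing assumptions (coercivity, the range condition, Lipschitz sub-minimization paths, objective regularity of $f$ and $h$), verifies each one for problem \eqref{equ:LRTC-TSpN_cvrt} with $h=0$ and identity coupling matrices, and then applies the lemma as a black box; you instead unroll the internal machinery (sufficient decrease, the dual-difference bound from the $\mathcal{X}$-optimality condition, square-summability of successive differences, then KL). Your unrolled version is more informative about \emph{why} the result holds and correctly identifies the dual-control estimate as the crux, and your remark that semialgebraicity of $\Vert\cdot\Vert_{S_p}^p$ requires rational $p$ (with definability otherwise) is more careful than the paper's blanket ``polynomial'' claim. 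Two points deserve caution. First, you absorb the data-fidelity constraint into the objective as $\iota_\Omega(\mathcal{X})$; this is the honest way to model what Algorithm \ref{algorithm2} actually does, but it is in tension with Assumption \ref{ass5} of the framework you invoke ($h$ differentiable with Lipschitz gradient), so you cannot simultaneously cite the theorem and keep the indicator --- you must either commit to the direct re-derivation throughout (handling the affine projection explicitly in the descent and dual estimates, as you flag) or drop the constraint from the analyzed objective as the paper implicitly does. Second, your claim that prox-regularity of the Schatten-$p$ term is ``supplied directly'' by Theorem \ref{Theorem1} is not right: that theorem gives a closed-form global minimizer of the proximal subproblem, but existence of a computable prox does not imply (restricted) prox-regularity, which is a local lower-$C^2$-type property needed for the sufficient-decrease inequality on the non-final blocks; the paper correctly sources this from Proposition 1 of \citep{wang2019global} together with the subdifferential characterization of \citep{watson1992characterization}, and you would need to do the same.
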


\minew{
\begin{proof} We give the proof of \minew{Corollary} \ref{corollary1} by introducing the following lemma.
\begin{lemma}
\label{Lemma3}
(Yu Wang \citep{wang2019global}) Consider a general optimization problem given by:
\begin{equation}
    \begin{aligned}
    \min_{\mathbf{X_0},\dots,\mathbf{X_p},\mathbf{Y}}&f(\mathbf{X_0},\mathbf{X_1},\dots,\mathbf{X_p})+h(\mathbf{Y})\\
    \text { s.t.}~&\mathbf{A_0X_0}+\cdots+\mathbf{A_pX_p}+\mathbf{BY}=\mathbf{C}, \\
    \end{aligned}
    \label{equ:lemma3}
\end{equation}
where $f:\mathbb{R}^{n\times m}\rightarrow\mathbb{R}$ is proper, continuous, and possibly nonsmooth, $h:\mathbb{R}^{q\times m}\rightarrow\mathbb{R}$ is proper and differentiable, both $f$ and $h$ could be nonconvex. Let $(\mathbf{X}^k,\mathbf{Y}^k,\mathbf{W}^k)$ be a sequence generated by the ADMM of problem \eqref{equ:lemma3}, $\mathbf{W}^k$ is the dual variable, and $\xi$ is a penalty parameter. Supposing that all the following assumptions hold:

\begin{assumption}
(coercivity) define the feasible set $\mathcal{F}:=\{(\mathbf{X},\mathbf{Y})\in\mathbb{R}^{(q+n)\times m}:\mathbf{AX}+\mathbf{BY}=\mathbf{C}\}$, if $(\mathbf{X},\mathbf{Y})\in\mathcal{F}$ and $\Vert(\mathbf{X},\mathbf{Y})\Vert\rightarrow\infty$, then the objective function $f+h$ is coercive over this set, i.e., $f+h\rightarrow\infty$;
\label{ass1}
\end{assumption}

\begin{assumption}
(feasibility) $\myIm(\mathbf{A})\subseteq\myIm(\mathbf{B})$, where $\myIm(\cdot)$ means the image of a matrix;
\label{ass2}
\end{assumption}

\begin{assumption}
(Lipschitz sub-minimization paths) for any fixed $\mathbf{X}, \operatorname{arg}\min_{\mathbf{Y}}\{f+h: \mathbf{BY}=\mathbf{U}\}$ has a unique minimizer $H$ and $H: \myIm(\mathbf{B})\rightarrow\mathbb{R}^{q\times m}$ defined by $H(\mathbf{U})\triangleq \operatorname{arg}\min_{\mathbf{Y}}\{f+h:\mathbf{BY}=\mathbf{U}\}$, is a Lipschitz continuous map; for $i=0,\dots,p, \operatorname{arg}\min_{\mathbf{X}_i}\{f+h:\mathbf{A}_i\mathbf{X}_i=\mathbf{U}\}$ has a unique minimizer $F_i$ and $F_i:\myIm(\mathbf{A})\rightarrow\mathbb{R}^{n_i\times m}$ defined by $F_i(\mathbf{U})\triangleq \operatorname{arg}\min_{\mathbf{X_i}}\{f+h:\mathbf{A}_i\mathbf{X}_i=\mathbf{U}\}$, is a Lipschitz continuous map;

\label{ass3}
\end{assumption}

\begin{assumption}
(objective-f regularity) the function $f$ has the form:
\begin{equation*}
    f(\mathbf{X}):=g(\mathbf{X})+\sum_{i=0}^{p}f_i(\mathbf{X_i}),
\end{equation*}
where: 
(1) $g(\mathbf{X})$ is differentiable and its gradient is Lipschitz continuous, with Lipschitz constant denoting $L_g$; (2) $f_0$ is lower semi-continuous and $f_i(\mathbf{X}_i)$ is restricted prox-regular for $i=1,\cdots,p$;
\label{ass4}
\end{assumption}

\begin{assumption}
(objective-h regularity) the function $h$ is differentiable and its gradient is Lipschitz continuous, with Lipschitz constant denoting $L_h$;
\label{ass5}
\end{assumption}
then, the ADMM algorithm converges subsequently for any sufficiently large $\xi$, i.e., it generates a sequence that is bounded with at least one limit point, and each limit point $(\mathbf{X}^*,\mathbf{Y}^*,\mathbf{W}^*)$ is a stationary point of $\mathcal{L}_{\xi}, i.e., 0 \in \partial\mathcal{L}_{\xi}(\mathbf{X}^*,\mathbf{Y}^*,\mathbf{W}^*)$. Additionally, if $\mathcal{L}_{\xi}$ has the Kurdyka-Łojasiewicz (KŁ) property, then $(\mathbf{X}^k,\mathbf{Y}^k,\mathbf{W}^k)$ converges globally to the unique limit point $(\mathbf{X}^*,\mathbf{Y}^*,\mathbf{W}^*)$.

\end{lemma}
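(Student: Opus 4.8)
The plan is to treat the augmented Lagrangian $\mathcal{L}_{\xi}$ as a Lyapunov (potential) function and verify the three ingredients of the abstract Kurdyka--\L ojasiewicz descent framework: sufficient decrease, a relative-error subgradient bound, and continuity of the potential along the iterates. The linchpin I would establish first is that the \emph{dual} iterate is controlled by the \emph{primal} iterates, since this is what prevents the dual ascent from destroying monotonicity. Because $\mathbf{Y}$ is updated last and $h$ is differentiable (Assumption \ref{ass5}), the optimality condition of the $\mathbf{Y}$-subproblem together with the dual update $\mathbf{W}^{k+1}=\mathbf{W}^{k}+\xi(\mathbf{A}\mathbf{X}^{k+1}+\mathbf{B}\mathbf{Y}^{k+1}-\mathbf{C})$ yields the identity $\mathbf{B}^{\mathsf{T}}\mathbf{W}^{k+1}=-\nabla h(\mathbf{Y}^{k+1})$. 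The feasibility hypothesis $\myIm(\mathbf{A})\subseteq\myIm(\mathbf{B})$ (Assumption \ref{ass2}) guarantees the dual residual stays in $\myIm(\mathbf{B}^{\mathsf{T}})$, so $\mathbf{B}^{\mathsf{T}}$ acts invertibly on the relevant subspace; combined with the $L_h$-Lipschitz continuity of $\nabla h$, this gives a bound of the form $\Vert\mathbf{W}^{k+1}-\mathbf{W}^{k}\Vert\le (L_h/\sigma_{\min}^{+}(\mathbf{B}))\,\Vert\mathbf{Y}^{k+1}-\mathbf{Y}^{k}\Vert$, where $\sigma_{\min}^{+}(\mathbf{B})$ is the smallest positive singular value of $\mathbf{B}$. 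This converts dual differences into primal differences and is what lets the whole descent estimate close.

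Next I would prove the sufficient-decrease inequality. The idea is to decompose the change of $\mathcal{L}_{\xi}$ over one full ADMM sweep into the per-block primal decreases, the $\mathbf{Y}$-decrease, and the dual increase. Each $\mathbf{X}_i$-update and the $\mathbf{Y}$-update minimize an augmented subproblem whose quadratic penalty term $\tfrac{\xi}{2}\Vert\cdot\Vert^2$ supplies strong convexity once $\xi$ exceeds a threshold; using the restricted prox-regularity of the $f_i$ (Assumption \ref{ass4}) and the Lipschitz gradients of $g$ and $h$, I can extract negative quadratic terms $-c_i\Vert\mathbf{X}_i^{k+1}-\mathbf{X}_i^{k}\Vert^2$ and $-c_Y\Vert\mathbf{Y}^{k+1}-\mathbf{Y}^{k}\Vert^2$ even without convexity. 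The dual step contributes a \emph{positive} term of order $(L_h^2/(\xi\,\sigma_{\min}^{+}(\mathbf{B})^2))\,\Vert\mathbf{Y}^{k+1}-\mathbf{Y}^{k}\Vert^2$ by the bound above, and for $\xi$ large enough this is dominated by the negative $\mathbf{Y}$-term. Summing the blocks produces
\begin{equation*}
\mathcal{L}_{\xi}(\mathbf{X}^{k+1},\mathbf{Y}^{k+1},\mathbf{W}^{k+1})\le \mathcal{L}_{\xi}(\mathbf{X}^{k},\mathbf{Y}^{k},\mathbf{W}^{k})-a\Big(\sum_{i=0}^{p}\Vert\mathbf{X}_i^{k+1}-\mathbf{X}_i^{k}\Vert^2+\Vert\mathbf{Y}^{k+1}-\mathbf{Y}^{k}\Vert^2\Big)
\end{equation*}
with $a>0$. \textbf{This is the main obstacle:} the constant $a$ and the precise threshold on $\xi$ depend simultaneously on $L_g$, $L_h$, $\sigma_{\min}^{+}(\mathbf{B})$, and the prox-regularity moduli, and obtaining a \emph{strictly positive} net decrease requires $\xi$ to beat all of these at once while keeping prox-regularity active on a bounded region.

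I would then extract boundedness and subsequential convergence. Telescoping the sufficient-decrease inequality shows $\mathcal{L}_{\xi}(\cdot)$ is nonincreasing; coercivity (Assumption \ref{ass1}) together with the Lipschitz sub-minimization-path assumption (Assumption \ref{ass3}) shows $\mathcal{L}_{\xi}$ is bounded below on the trajectory and that the iterates remain in a bounded set, so a convergent subsequence exists. The same telescoping yields $\sum_{k}\Vert(\mathbf{X},\mathbf{Y})^{k+1}-(\mathbf{X},\mathbf{Y})^{k}\Vert^2<\infty$, hence consecutive differences vanish and, via the dual bound, $\Vert\mathbf{W}^{k+1}-\mathbf{W}^{k}\Vert\to0$. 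Passing to the limit along a subsequence in the first-order optimality conditions of every subproblem, and using outer semicontinuity (closedness of the graph) of the limiting subdifferentials $\partial f_i$ together with continuity of the smooth parts $g$, $h$, I conclude $0\in\partial\mathcal{L}_{\xi}(\mathbf{X}^*,\mathbf{Y}^*,\mathbf{W}^*)$ at every limit point, establishing subsequential convergence to stationary points.

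Finally, for the global (whole-sequence) claim I would invoke the abstract KŁ machinery of Attouch--Bolte--Svaiter. Beyond sufficient decrease, which is already in hand, I would establish the relative-error bound $\operatorname{dist}(0,\partial\mathcal{L}_{\xi}(\mathbf{z}^{k+1}))\le b\,\Vert\mathbf{z}^{k+1}-\mathbf{z}^{k}\Vert$ by exhibiting an explicit subgradient of $\mathcal{L}_{\xi}$ at $\mathbf{z}^{k+1}=(\mathbf{X}^{k+1},\mathbf{Y}^{k+1},\mathbf{W}^{k+1})$ assembled from the gradient-Lipschitz residuals of $g,h$ and the dual bound of the first paragraph; continuity of $\mathcal{L}_{\xi}$ restricted to the sequence follows from convergence of the function values. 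Given the assumed KŁ property of $\mathcal{L}_{\xi}$, the ABS framework then forces finite trajectory length $\sum_{k}\Vert\mathbf{z}^{k+1}-\mathbf{z}^{k}\Vert<\infty$, which upgrades the subsequential limit to convergence of the entire sequence to a single stationary point $(\mathbf{X}^*,\mathbf{Y}^*,\mathbf{W}^*)$, completing the argument.
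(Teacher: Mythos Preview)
The paper does not prove this lemma; it is quoted verbatim as a result of \cite{wang2019global} and invoked as a black box inside the proof of Corollary~\ref{corollary1}. There is therefore no ``paper's own proof'' to compare against. What the paper actually does with Lemma~\ref{Lemma3} is verify, for the specific problem \eqref{equ:LRTC-TSpN_cvrt}, that Assumptions \ref{ass1}--\ref{ass5} are satisfied (identity constraint matrices, $h\equiv 0$, coercivity of $\sum_k\Vert\cdot\Vert_{S_p}^p$, restricted prox-regularity of $\Vert\cdot\Vert_{S_p}^p$, and the K\L{} property via semi-algebraicity), and then apply the lemma.

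That said, your sketch is a faithful outline of the argument in \cite{wang2019global}: the dual-control identity $\mathbf{B}^{\mathsf{T}}\mathbf{W}^{k+1}=-\nabla h(\mathbf{Y}^{k+1})$ from the last-block optimality plus the dual update, the use of $\myIm(\mathbf{A})\subseteq\myIm(\mathbf{B})$ to invert $\mathbf{B}^{\mathsf{T}}$ on the range, the sufficient-decrease estimate on $\mathcal{L}_\xi$ with the dual ascent absorbed for $\xi$ large, boundedness from coercivity, and the Attouch--Bolte--Svaiter K\L{} machinery for whole-sequence convergence are exactly the ingredients there. One technical point to be careful about if you flesh this out: restricted prox-regularity in Assumption~\ref{ass4} comes with an exclusion set, and the argument in \cite{wang2019global} needs the iterates to eventually avoid that set (or, equivalently, the subgradients encountered to stay bounded); your sketch implicitly assumes this when asserting a uniform prox-regularity modulus on a bounded region, so you would need to justify that the trajectory stays away from the exclusion set before the descent constants can be fixed.
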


Note that in the case of our nonconvex optimization problem as given in \minew{Eq.} \eqref{equ:LRTC-TSpN_cvrt}, $f(\mathbf{M})=\sum_{i=0}^{p}f_i(\mathbf{M}_i)=\sum_{k=1}^{3}\alpha_k\Vert\mathbf{M}_{k}\Vert_{\theta,S_p}^p,h(\mathbf{X}_{(k)})=0,\mathbf{C}=\mathbf{0}$, and the coefficient matrices are both identity matrices. The augmented Lagrange function $\mathcal{L}_{\mu}$ in \minew{Eq.} \eqref{equ:ALM} is a polynomial function, i.e., belonging to the category of semi-algebraic function, thus satisfying the KŁ inequality \citep{bolte2014proximal}. Then in order to guarantee the global convergence, we only need to verify whether the problem \eqref{equ:LRTC-TSpN_cvrt} holds \minew{Assumptions} \ref{ass1}-\ref{ass5}.

Consider the Schatten $p$-norm function is continuous and the linear constraints $\mathbf{M}_k=\mathbf{X}_{(k)}$ hold, when $\Vert(\mathbf{M}_k,\mathbf{X}_{(k)})\Vert \rightarrow \infty$, $f(\mathbf{M}) \rightarrow \infty$ obviously holds, thus \minew{Assumption} \ref{ass1} meets. \minew{Assumption} \ref{ass2} naturally holds \minew{because} $\mathbf{A}=\mathbf{I}$ and $\mathbf{B}=-\mathbf{I}$ in this case. 
As for \minew{Assumption} \ref{ass3},
we have derived the unique optimum for each ADMM subproblem in \minew{Subsection} \ref{admm_solve}, and both $\mathbf{I}$ and $-\mathbf{I}$ have full column rank with trivial null spaces, therefore, $F_i,H$ reduce to linear map operators. And for $\forall k_1,k_2\in\mathbb{N}$, we have:
\begin{equation*}
\begin{aligned}
    \Vert F_i(\mathbf{M}^{k_1})-F_i(\mathbf{M}^{k_2})\Vert=&\Vert F_i(\mathbf{M}^{k_1}-\mathbf{M}^{k_2})\Vert
    \leq\Vert F_i\Vert\Vert \mathbf{M}^{k_1}-\mathbf{M}^{k_2}\Vert,\\
    \Vert H(\mathbf{X}^{k_1})-H(\mathbf{X}^{k_2})\Vert=&\Vert H(\mathbf{X}^{k_1}-\mathbf{X}^{k_2})\Vert
    \leq\Vert H\Vert\Vert \mathbf{X}^{k_1}-\mathbf{X}^{k_2}\Vert,\\
\end{aligned}
\end{equation*}
thereby satisfying Lipschitz condition as well as \minew{Assumption} \ref{ass3}.
\minew{Assumption} \ref{ass5} holds because $h=0$. Hence, it remains to verify \minew{Assumption} \ref{ass4}.

For convenience, we ignore the truncation operation and simplify the objective function of problem \eqref{equ:LRTC-TSpN_cvrt} as $f(\mathbf{M})=\sum_{k=1}^{3}\Vert\mathbf{M}_{k}\Vert_{S_p}^p$.
Assumption \ref{ass4}.(1) can be satisfied by setting $g=0$. And $f_0=\Vert\mathbf{M}_{1}\Vert_{S_p}^p=\sum_i\sigma_i^p$, the $p$-th power of matirx Schatten $p$-norm, is lower bounded and continuous. Moreover, \cite{wang2019global} (Proposition 1, Section 2) proves that $\Vert \cdot\Vert_{Sp}^p$ is prox-regular with exclusion set being $S_m=\{\mathbf{M}|\forall \mathbf{N}\in\partial\Vert \mathbf{M}\Vert_{Sp}^p,\Vert \mathbf{N}\Vert_F>\rho\}$, for any positive parameter $\rho$, and the general sub-gradient of $\Vert \mathbf{M}\Vert_{Sp}^p$ can be given as \citep{watson1992characterization}:
\begin{equation*}
    \partial\Vert \mathbf{M}\Vert_{Sp}^p=\{\mathbf{U}^{(1)}\mathbf{DV}^{(1)\mathsf{T}}+\mathbf{U}^{(2)}\mathbf{TV}^{(2)\mathsf{T}}, \forall \mathbf{T} ~\textit{with} ~\sigma_1(\mathbf{T})\leq1\}.
\end{equation*}

Finally, according to \minew{Lemma} \ref{Lemma3}, we complete the proof.
\end{proof}
}

\subsection{Truncation rate decay strategy}
\label{decay strategy}
The truncation operator in our model plays a key role in capturing the low-rank pattern. In fact, the truncation rate $\theta$ controls the amount of information that we utilize to approximate the low-rank tensor. In practical applications, the singular values obtained from observations are used to estimate the true ones (from complete tensor). When the missing rate of observations is high, more information is needed to compute the Schatten $p$-norm. Intuitively, this global parameter should decline as the missing rate grows. Therefore, we design a truncation rate decay strategy as follows:
\begin{equation}\label{theta decay}
    \theta_{\psi} = \theta_0~\text{exp}(-\beta\times\psi),
\end{equation}
where $\theta_0$ is the initial truncation rate, $\beta$ is the decay rate, $\psi$ is the missing rate of observations defined as:
\minew{
\begin{equation}\label{missing rate}
    \psi = \frac{\Vert\mathcal{P}^{\perp}\Vert_0}{n_1\times n_2 \times n_3},
\end{equation}}
where $n_1,n_2,n_3$ are the dims of \minew{$\mathcal{P}^{\perp}$, $\Vert\cdot\Vert_0$ is the $\mathcal{L}_0$ norm}. When only a single set of data is given, let $\theta_{\psi}=\theta_0$; when a series of homogeneous datasets are given (this case often occurs when a large amount of data is divided into some relatively smaller ones along a given dim, e.g., partition along temporal dimension \citep{zhang2021customized} or according to different types of subnetworks \citep{asif2016matrix}), different $\theta_{\psi}$ are assigned to each LRTC-TSpN model accordingly. In practice, $\beta$ is recommended for $[1,3]$.

\section{Experiments}\label{experiments}

In this section, we conduct comprehensive experiments to evaluate the performance of proposed LRTC-TSpN model on four different types of spatiotemporal traffic datasets (i.e., origin-destination volume, traffic speed, parking occupancy, and link volume) under complicated missing patterns. First we give a brief description of datasets. Then we introduce four state-of-the-art imputation models as baselines and give some discussions about corresponding results in detail. \textbf{The Python source codes for this paper can be found in our GitHub repository.\footnote{from \url{https://github.com/tongnie/tensorlib}.} 
}

%\newpage
\subsection{Data preparation and experiment setup}
\subsubsection{Spatiotemporal traffic datasets}
In order to test the generalization ability of our model, we use four typical kinds of spatiotemporal traffic data from real-world transportation surveillance or measurement systems. All of the input data are arranged in a third-order tensor paradigm of \textit{time intervals}$\times$\textit{locations (sensors)}$\times$\textit{days}. The datasets are summarized as follows:
\begin{itemize}
\item \textbf{Xuhui urban arterial OD flow data}: This is an urban arterial origin-destination (OD) flow dataset which is aggregated by vehicles' travel path recorded at 15-minute interval using automatic vehicle identification (AVI) systems in Xuhui, Shanghai. This data consists of $17\times17$ OD pairs within two weeks (i.e., 14 days from January 6, 2021 to January 19, 2021). We only use the time interval from 6:00 a.m. to 22:00 p.m. of a day. The size of input tensor is: $(64\times289\times14)$. Note that we filter the OD pairs which the volume exceeds a specific threshold (i.e., 5 pairs/15 min) to avoid sparsity in the observation tensor.
\item \textbf{Guangzhou traffic speed data \footnote{from \url{https://doi.org/10.5281/zenodo.1205229}.}}: This is an urban traffic speed dataset which consists of 214 road segments within two months (i.e., 61 days from August 1, 2016 to September 30, 2016) at 10-minute interval, in Guangzhou, China. The size of input tensor is: $(144\times214\times61)$.
\item \textbf{Portland link volume data \citep{tufte2015evolution}}: This dataset consists of link volume collected from highways in Portland, which contains 1156 loop detectors within one month at 15-minute interval. The size of input tensor is: $(96\times1156\times31)$.
\item \textbf{Birmingham parking occupancy data \citep{stolfi2017predicting}}: This dataset collects the occupancy rates from 30 car parks in Birmingham from October 4, 2016 to December 19, 2016 at 30-minute interval (8:00 a.m. to 17:00 p.m.).  The size of input tensor is: $(18\times30\times77)$.
\end{itemize}

The speed and volume are typical large-scale traffic flow datasets with more fluctuation and randomness, while at the same time, appear distinct day-to-day and within-day recurrence. Imputing this kind of data \minew{entails} capturing details from time series. The occupancy data is a much simpler signal set with negligible intraday volatility, thus testing models' capability of perceiving low-rank characteristics. In terms of OD pair data, it shows less \minew{consistency} and \minew{time-dependency}, hence the recovery is more challenging without spatial topological relations. It can be used to quantify models' robustness when handling signals with a large variance.

\subsubsection{Baseline models and experiment setup}
One the one hand, previous studies have testified the tensor completion models are more effective than factorization-based methods \citep{ran2016tensor,chen2020nonconvex,chen2021Autoregressive}; on the other hand, the relatively small-scale datasets in our experiments, e.g., Xuhui and Birmingham data, are not sufficient enough to deploy a large-scale supervised leanrning model. Therefore, we compare our LRTC-TSpN model with several state-of-the-art tensor learning models as follows. All of them are based on tensor recovery with fine-designed rank approximation.
\begin{itemize}
\item \textbf{HaLRTC \citep{liu2013tensor}}: High accuracy Low Rank Tensor Completion. This is a typical benchmark used in many works, which minimizes the tensor nuclear norm via ADMM. In comparison with HaLRTC, we can see the advantages of SpN.
\item \textbf{LRTC-TNN \citep{chen2020nonconvex}}: Low Rank Tensor Completion with Truncated Nuclear Norm. This is a nonconvex minimization model based on the defined truncated tensor nuclear norm, which shows competitive performance in extensive imputation tasks.
\item \textbf{TNN-DCT \citep{lu2019low}}: Tensor Nuclear Norm minimization with Discrete Cosine Transform. This model aims to minimize a new tensor nuclear norm induced by the transform based t-product, which substitutes for a class of models using t-SVD. The discrete cosine transform used in their work is selected as the invertible linear transform operator in our experiment.
\item \textbf{SpBCD \citep{gao2020robust}}: Schatten $p$-norm minimization via Block Coordinate Descent. This model uses quadric penalty and BCD method to minimize the tensor Schatten $p$-norm. Compared with this model, we can directly show the strength of truncation operation and our iteration framework.
\end{itemize}

To measure the models' performance, we manually mask a few entries to substitute for the missing cases and compare the imputed values with masked ones. The evaluation metrics we use are mean absolute error (MAE) and root mean square error (RMSE):
\begin{equation}
    \begin{aligned}
    \text{MAE}=& \frac{1}{|\Omega_m|}\sum_{i\in\Omega_m} |x_i-\hat{x}_i|,\\
    \text{RMSE}=& \sqrt{\frac{1}{|\Omega_m|}\sum_{i\in\Omega_m} \left(x_i-\hat{x}_i\right)^2},\\
    \end{aligned}
\end{equation}
where $\Omega_m=\{(i,j,k)|~\text{when} ~x_{ijk} ~\text{is masked and not missing in original tensor}\}$.
Note that we only calculate imputation error on these masked entries whose ground-truth value is available in order to reflect the actual performance. For a fair comparison, we have carefully tuned the parameters required in the above baseline models. We set the missing rate ranging from $10\%$ to $90\%$ for all missing patterns. The convergence criterion $\epsilon$ for all models is set to $10^{-4}$. For the maximum number of iteration, we set 200 for LRTC-TSpN, LRTC-TNN, TNN-DCT and HaLRTC, 1000 for SpBCD. The weights of unfoldings $\alpha_i$ are simply set equally (i.e., $\alpha_1=\alpha_2=\alpha_3=\frac{1}{3}$) for HaLRTC, SpBCD, LRTC-TNN and LTRC-TSpN. And we set the regularization parameter $\mu_1=\mu_2=\mu_3=\mu$ with the initial value being $10^{-5}$ and updating in each iteration with $\mu=\min\{1.05\times\mu,10^5\}$ as in \citep{chen2020nonconvex}.

The two most important parameters that dominate the imputation and convergence performance for our model are the truncation rate $\theta_{\psi}$ and the Schatten $p$ value. To be fair, we calculate $\theta$ using \minew{Eq.} \eqref{theta decay} for both LRTC-TNN and LRTC-TSpN under different missing cases. The set of $\theta_0, \beta$ and $p$ are obtained by cross-validation and summarized in \minew{Table} \ref{truncation parameters}. More discussions about hyper-parameters tuning are presented in \ref{hpt}.

\begin{table}[!ht]
  \caption{Parameters setting for $\theta_0$, $\beta$ and $p$.}
  \label{truncation parameters}
  \centering
  \footnotesize
  %\scriptsize
  \begin{tabular}{ccccccc}
    \toprule
    \multicolumn{2}{c}{Datasets}
    & Xuhui & Guangzhou & Portland & Birmingham \\
    \midrule 
    \multirow{4}{*}{$\theta_0$ (LRTC-TSpN/LRTC-TNN)} 
    & RM   & 0.1/0.15 & 0.15/0.3 & 0.2/0.1 & 0.15/0.15\\
    & FM-0 & 0.1/0.1 & 0.05/0.05 & 0.1/0.1 & 0.05/0.05\\
    & FM-1 & 0.08/0.1 & 0.05/0.05 & 0.1/0.1 & 0.1/0.1\\
    & FM-2 & 0.1/0.1 & 0.05/6.04 & 0.1/0.05 & 0.05/0.05\\
    \midrule
    \multirow{4}{*}{$\beta$ (LRTC-TSpN/LRTC-TNN)}
    & RM   & 2/2 & 1.5/2 & 2.5/2 & 3/3\\
    & FM-0 & 2/2 & 2/2 & 3/3 & 2/2\\
    & FM-1 & 2.5/2.5 & 2/2 & 3/3 & 3/3\\
    & FM-2 & 2.5/2.5 & 2/2 & 3/2 & 2/2\\
    \midrule
    \multirow{4}{*}{$p$ (LRTC-TSpN/SpBCD)}
    & RM   & 0.3/0.5 & 0.3/0.3 & 0.5/0.5 & 0.7/0.7\\
    & FM-0 & 0.3/0.5 & 0.5/0.5 & 0.5/0.5 & 0.75/0.75\\
    & FM-1 & 0.35/0.3 & 0.3/0.4 & 0.5/0.5& 0.7/0.7\\
    & FM-2 & 0.3/0.3 & 0.3/0.3 & 0.5/0.5 & 0.75/0.7 \\
    \bottomrule
  \end{tabular}
\end{table}

\subsection{Algorithmic performance}
\subsubsection{Imputation results}
The imputation results are displayed in \minew{Table}~\ref{xuhui and guangzhou results} and \minew{Table}~\ref{portland and Birmingham results}. It is clear to see that the proposed LRTC-TSpN model outperforms other baseline models in most missing cases on four different spatiotemporal traffic datasets. What is noteworthy is that we pay little attention to horizontal comparisons among the imputation error of different datasets, as the value scale of four datasets vary significantly. With different \minew{settings} of parameters (e.g., $p$ and $\theta$), our model can suit various scenarios with different types of datasets. 

\begin{table}[!htbp]
\centering
  \caption{Imputation performance on Xuhui and Guangzhou dataset (MAE/RMSE).}
  \label{xuhui and guangzhou results}
  \setlength\tabcolsep{4pt}
  \centering
  %\footnotesize
  \scriptsize
  \begin{tabular}{c|ccccc|ccccc}
    \toprule
    & \multicolumn{5}{c|}{Xuhui} & \multicolumn{5}{c}{Guangzhou} \\
    \cmidrule(lr){2-6}
    \cmidrule(lr){7-11}
    Models & HaLRTC & LRTC-TNN & TNN-DCT & SpBCD & LRTC-TSpN & HaLRTC & LRTC-TNN & TNN-DCT & SpBCD & LRTC-TSpN\\
    \midrule
    10\%, RM  & 3.66/6.84 & 3.15/5.43 & 3.90/6.59 & \textbf{2.96}/\textbf{4.86} & \textbf{2.96}/4.97 & 2.15/3.04 & 1.88/2.71 & 2.21/3.05 & \textbf{1.84/2.61} & \textbf{1.84}/\textbf{2.60} \\
    30\%, RM  & 4.02/7.66 & 3.59/6.38 & 4.18/7.15 & 3.28/5.49 & \textbf{3.24}/\textbf{5.41} & 2.33/3.34 & 2.03/2.98 & 2.39/3.33 & 1.99/2.84 & \textbf{1.98}/\textbf{2.83} \\
    50\%, RM  & 4.75/9.11 & 4.02/7.17 & 4.70/8.06 & 3.48/6.09 & \textbf{3.46}/\textbf{6.08} & 2.57/3.68 & 2.21/3.29 & 2.59/3.65 & 2.15/3.10 & \textbf{2.14}\textbf{/3.09} \\
    70\%, RM  & 6.23/12.05 & 5.00/8.79 & 5.75/9.95 & 4.09/7.43 & \textbf{4.06}/\textbf{7.33} & 2.92/4.11 & 2.42/3.64 & 2.84/4.05 & 2.35/3.46 & \textbf{2.34}/\textbf{3.45} \\
    90\%, RM  & 11.01/20.68 & 7.40/12.55 & 8.60/15.28 & \textbf{5.34}/10.14 & 5.39/\textbf{10.09} & 38.65/40.04 & 2.78/4.18 & 3.28/4.73 & \textbf{2.67}/\textbf{3.99} & \textbf{2.67}/4.00 \\
    \midrule
    10\%, FM-0  & 3.77/6.83  & 3.36/5.74 & 3.90/6.42 & 3.00/4.96 & \textbf{2.99}/\textbf{4.95} & 2.93/4.09 & \textbf{2.55}/\textbf{3.90} & 3.00/4.33 & 2.63/3.97 & 2.61/3.97 \\
    30\%, FM-0  & 4.53/8.62  & 3.74/6.52 & 4.28/7.28 & \textbf{3.30}/\textbf{5.58} & \textbf{3.30}/\textbf{5.58} & 3.16/4.41 & \textbf{2.65}/4.03 & 3.21/4.69 & 2.69/4.03 & \textbf{2.65}/\textbf{4.02} \\
    50\%, FM-0  & 5.63/11.10  & 4.27/7.60 & 4.57/7.92 & 3.63/6.49 & \textbf{3.58}/\textbf{6.34} & 3.50/4.73 & 2.82/4.27 & 3.34/4.88 & 2.80/4.21 & \textbf{2.76}/\textbf{4.18} \\
    70\%, FM-0  & 7.76/15.44  & 4.99/9.07 & 5.42/9.97 & 4.34/8.06 & \textbf{4.27}/\textbf{7.82} & 4.29/5.43 & 3.05/4.56 & 3.57/5.12 & 2.95/\textbf{4.38} & \textbf{2.91}/\textbf{4.38} \\
    90\%, FM-0  & 11.47/21.63  & 7.97/15.73 & 8.58/16.49 & 6.64/13.35 & \textbf{6.12}/\textbf{11.92} & 23.53/26.09 & 5.10/9.93 & 5.27/8.61 & 4.02/6.10 & \textbf{3.44}/\textbf{5.22} \\
    \midrule
    10\%, FM-1  & 4.97/9.73 & 6.51/11.86 & \textbf{4.73}/\textbf{8.12} & 5.98/10.80 & 6.07/10.99 & 2.28/3.19 & 2.00/2.91 & 2.35/3.27 & 1.81/2.57 & \textbf{1.80}/\textbf{2.56} \\
    30\%, FM-1  & 6.35/14.00 & 5.68/11.01 & \textbf{5.38}/\textbf{9.82} & 5.53/10.67 & 5.49/10.61 & 2.63/3.62 & 2.25/3.33 & 2.62/3.64 & 1.96/2.82 & \textbf{1.95}/\textbf{2.80} \\
    50\%, FM-1  & 8.01/17.59 & 6.12/11.38 & 6.26/12.19 & \textbf{5.58}/\textbf{10.32} & 5.64/10.55 & 3.02/4.07 & 2.46/3.65 & 2.84/3.98 & 2.12/3.08 & \textbf{2.11}/\textbf{3.06} \\
    70\%, FM-1  & 8.54/18.81 & 6.85/14.18  & 6.84/13.75 & 7.53/15.96 & \textbf{6.78}/\textbf{13.17} & 4.01/5.12 & 2.83/4.22 & 3.28/4.64 & 2.44/3.62 & \textbf{2.42}/\textbf{3.60} \\
    90\%, FM-1  & 11.68/22.08 & 10.12/20.61 & 10.93/21.11 & 9.09/17.35 & \textbf{8.88}/\textbf{16.76} & 22.38/24.86 & 4.90/9.24 & 5.24/8.74 & 3.13/\textbf{4.72} & \textbf{3.10}/4.74 \\
    \midrule
    10\%, FM-2  & 3.74/6.69 & 3.34/5.77 & 3.95/6.35 & 3.12/5.16 & \textbf{3.10}/\textbf{5.11} & 2.22/3.12 & 2.00/2.90 & 2.22/3.06 & \textbf{1.88}/2.66 & \textbf{1.88}/\textbf{2.65} \\
    30\%, FM-2  & 4.57/9.59 & 3.42/5.78 & 4.12/7.12 & 3.10/5.07 & \textbf{3.08}/\textbf{5.06} & 2.45/3.44 & 2.23/3.29 & 2.41/3.34 & 2.00/2.83 & \textbf{1.99}/\textbf{2.82} \\
    50\%, FM-2  & 5.78/12.94 & 4.06/7.24 & 4.67/8.18 & 3.48/5.95 & \textbf{3.47}/\textbf{5.94} & 2.80/3.90 & 2.45/3.64 & 2.67/3.74 & 2.18/\textbf{3.13} & \textbf{2.17}/\textbf{3.13} \\
    70\%, FM-2  & 6.72/15.41 & 4.68/8.56 & 5.12/9.45 & 3.96/7.07 & \textbf{3.92}/\textbf{6.98} & 3.41/4.60 & 2.80/4.21 & 3.07/4.31 & 2.50/3.67 & \textbf{2.49}/\textbf{3.66} \\
    90\%, FM-2  & 10.21/20.92 & 8.87/24.77 & 7.07/14.73 & 5.72/11.54 & \textbf{5.40}/\textbf{10.47} & 6.51/7.90 & 3.57/5.35 & 4.30/5.81 & \textbf{3.13}/\textbf{4.68} & 3.15/4.72 \\
    \bottomrule
    \multicolumn{5}{l}{\scriptsize{Best results are bold marked.}}
  \end{tabular}
\end{table}

\begin{table}[!htbp]
  \caption{Imputation performance on Portland and Birmingham dataset (MAE/RMSE).}
  \label{portland and Birmingham results}
  \setlength\tabcolsep{2pt}
  \centering
  %\footnotesize
  \scriptsize
  \begin{tabular}{c|ccccc|ccccc}
    \toprule
    & \multicolumn{5}{c|}{Portland} & \multicolumn{5}{c}{Birmingham} \\
    \cmidrule(lr){2-6}
    \cmidrule(lr){7-11}
    Models & HaLRTC & LRTC-TNN & TNN-DCT & SpBCD & LRTC-TSpN & HaLRTC & LRTC-TNN & TNN-DCT & SpBCD & LRTC-TSpN\\
    \midrule
    10\%, RM & 13.04/18.96 & 11.94/16.96 & 14.03/19.39 & 12.10/17.39 & \textbf{11.81}/\textbf{16.73} & 10.91/17.18 & 8.48/\textbf{13.11} & \textbf{8.41}/13.37 & 11.56/16.58 & 9.78/14.23 \\
    30\%, RM & 14.07/20.56 & 12.52/18.15 & 14.93/20.62 & 12.65/18.26 & \textbf{12.45}/\textbf{17.72} & 16.73/26.63 & 13.33/19.75 & 15.30/26.79 & 13.83/19.92 & \textbf{12.55}/\textbf{18.20} \\
    50\%, RM & 15.25/22.45 & 13.52/20.08 & 15.92/22.33 & 13.43/19.64 & \textbf{13.23}/\textbf{19.29} & 24.56/39.85 & 20.07/31.07 & 25.05/45.03 & 17.13/25.08 & \textbf{16.83}/\textbf{24.48}\\
    70\%, RM & 17.59/25.87 & 15.12/22.85 & 17.79/25.20 & 14.40/21.28 & \textbf{14.29}/\textbf{21.06} & 41.90/73.43 & 32.61/53.44 & 42.93/76.90 & 23.83/37.35 & \textbf{23.65}/\textbf{36.82} \\
    90\%, RM & 37.64/50.94 & 18.90/27.96 & 23.68/33.78 & 16.22/\textbf{24.52} & \textbf{16.21}/24.58 & 648.63/925.40 & 63.69/108.02 & 90.13/158.86 & 50.24/90.32 & \textbf{45.26}/\textbf{74.61} \\
    \midrule
    10\%, FM-0 & 16.51/24.15 & 14.13/22.83 & 21.07/29.73 & 14.21/22.66 & \textbf{13.93}/\textbf{22.30} & 32.83/74.95 & 17.13/28.33 & 63.43/109.75 & 21.40/32.01 & \textbf{15.51}/\textbf{23.87} \\
    30\%, FM-0 & 17.74/26.68 & 14.92/22.97 & 22.01/31.94 & 14.93/22.51 & \textbf{14.59}/\textbf{22.02} & 57.02/123.02 & 37.04/97.14 & 65.07/107.78 & 31.72/66.69 & \textbf{30.11}/\textbf{69.09} \\
    50\%, FM-0 & 22.04/32.11 & 16.74/24.40 & 25.75/36.61 & 15.88/23.27 & \textbf{15.56}/\textbf{22.85} & 131.44/306.59 & 47.05/92.46 & 90.68/161.17 & 52.54/102.86 & \textbf{40.01}/\textbf{79.63} \\
    70\%, FM-0 & 59.44/81.86 & 20.07/31.49 & 31.40/44.27 & 18.35/27.71 & \textbf{17.84}/\textbf{26.95} & 188.41/401.44 & 69.46/135.68 & 97.99/168.26 & 98.47/229.90 & \textbf{60.46}/\textbf{116.85} \\
    90\%, FM-0 & 144.83/172.32 & 36.42/58.82 & 47.38/67.38 & 38.95/54.35 & \textbf{35.75}/\textbf{49.58} & 507.28/814.29 & 176.07/343.93 & 652.78/932.49 & 259.18/511.12 & \textbf{168.87}/\textbf{307.12} \\
    \midrule
    10\%, FM-1 & 15.86/22.96 & 13.77/19.77 & 19.71/27.07 & 14.09/20.31 & \textbf{13.62}/\textbf{19.45} & 16.76/25.26 & 15.54/23.35 & 19.98/32.78 & 15.05/22.28 & \textbf{14.62}/\textbf{21.51} \\
    30\%, FM-1 & 17.12/25.04 & 14.44/21.43 & 21.56/30.07 & 14.37/20.99 & \textbf{13.98}/\textbf{20.39} & 22.76/35.57 & 20.34/32.33 & 29.05/48.40 & 17.95/27.43 & \textbf{17.42}/\textbf{26.52} \\
    50\%, FM-1 & 21.57/30.65 & 16.15/23.82 & 24.64/34.56 & 15.26/22.18 & \textbf{14.97}/\textbf{21.85} & 37.85/64.32 & 27.43/44.16 & 45.98/79.18 & 21.91/36.53 & \textbf{21.36}/\textbf{33.55} \\
    70\%, FM-1 & 54.12/72.71 & 19.65/28.74 & 30.33/42.48 & 17.69/26.04 & \textbf{16.95}/\textbf{25.12} & 88.54/154.09 & 46.60/79.63  & 69.67/118.62 & 37.31/71.33 & \textbf{32.35}/\textbf{55.23} \\
    90\%, FM-1 & 146.05/174.28 & 42.75/58.13 & 49.05/75.46 & 35.85/49.69 & \textbf{32.58}/\textbf{45.41} & 529.80/790.23 & 219.00/448.25 & 647.47/924.04 & 228.90/392.54 & \textbf{118.11}/\textbf{215.04} \\
    \midrule
    10\%, FM-2 & 14.15/20.76 & 12.90/18.74 & 14.45/20.11 & 12.77/18.61 & \textbf{12.50}/\textbf{17.98} & 18.17/33.87 & 16.87/27.49 & 10.41/26.41 & 16.42/23.95 & \textbf{15.77}/\textbf{22.67} \\
    30\%, FM-2 & 15.94/23.35 & 13.45/19.79 & 15.69/21.92 & 13.04/18.77 & \textbf{12.89}/\textbf{18.53} & 36.55/69.71 & 19.15/28.02 & 14.96/28.01 & 17.08/24.58 & \textbf{16.50}/\textbf{23.39} \\
    50\%, FM-2 & 17.62/25.55 & 15.27/22.66 & 16.75/23.56 & 14.26/20.65 & \textbf{14.13}/\textbf{20.48} & 117.89/259.50 & 46.11/82.12 & 48.18/99.19 & 64.06/186.37 & \textbf{27.33}/\textbf{45.83} \\
    70\%, FM-2 & 34.79/51.09 & 18.43/27.25 & 19.96/28.57 & 16.38/24.07 & \textbf{16.05}/\textbf{23.57} & 262.56/492.35 & 71.98/123.41 & 119.40/216.55 & 156.16/368.95 & \textbf{57.15}/\textbf{101.35} \\
    90\%, FM-2 & 121.90/150.87 & 32.63/51.16 & 36.37/54.46 & 35.00/48.08 & \textbf{32.36}/\textbf{44.07} & 566.07/846.91 & 364.16/660.04 & 497.14/777.94 & 393.02/699.45 & \textbf{262.99}/\textbf{499.51} \\
    \bottomrule
    \multicolumn{5}{l}{\scriptsize{Best results are bold marked.}}
  \end{tabular}
\end{table}

In order to directly presents the effect of proposed model, we give some visualization examples of the imputation performance with different missing scenarios ($50\%$ missing rate) from four datasets in \minew{Figure} \ref{X-EX}, \ref{G-EX}, \ref{P-EX} and \ref{B-EX}. From these figures we can see that our model achieve high accuracy in imputing plausible values from partially observed data, even if the signals are fluctuating and the missing cases are continuous and non-random.

\begin{figure}[!htbp]
\centering
\subfigure[Xuhui, $50\%$ RM]{
\centering
\includegraphics[scale=0.45]{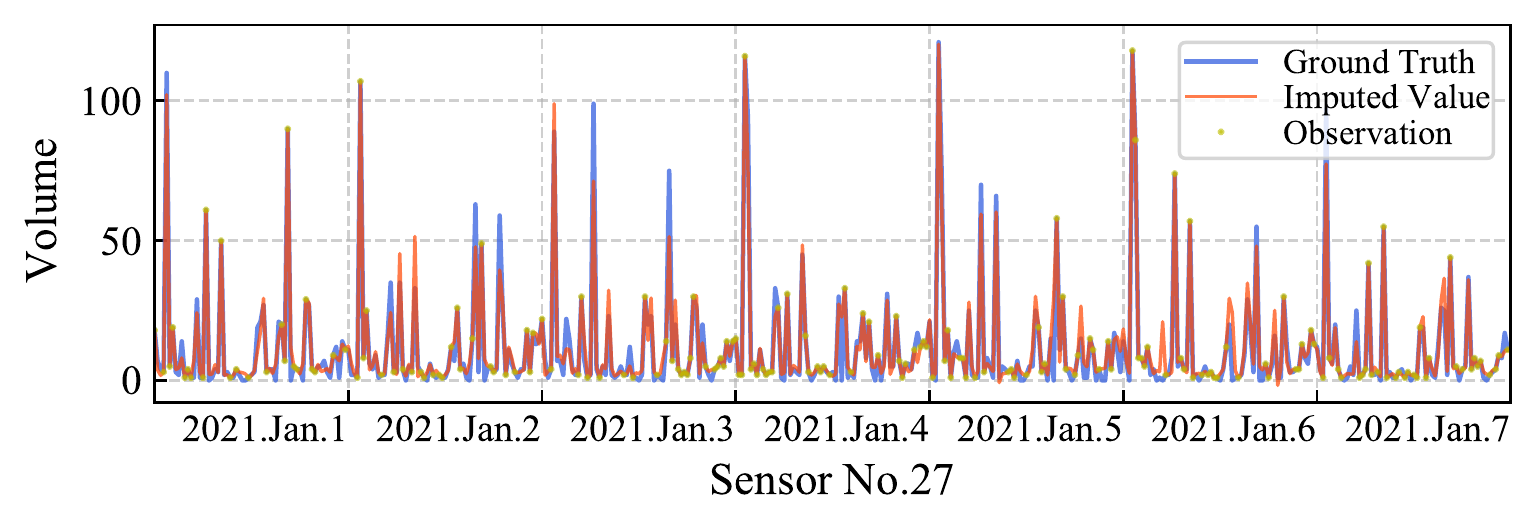}
}
\subfigure[Xuhui, $50\%$ FM-0]{
\centering
\includegraphics[scale=0.45]{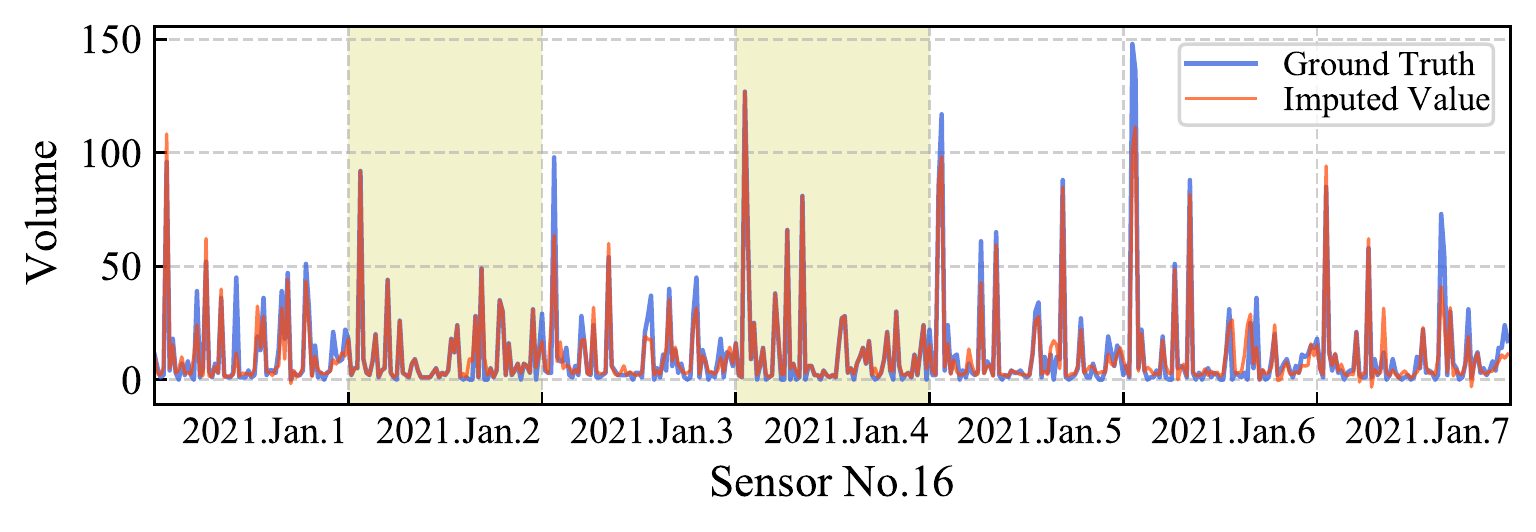}
}
\subfigure[Xuhui, $50\%$ FM-1]{
\centering
\includegraphics[scale=0.45]{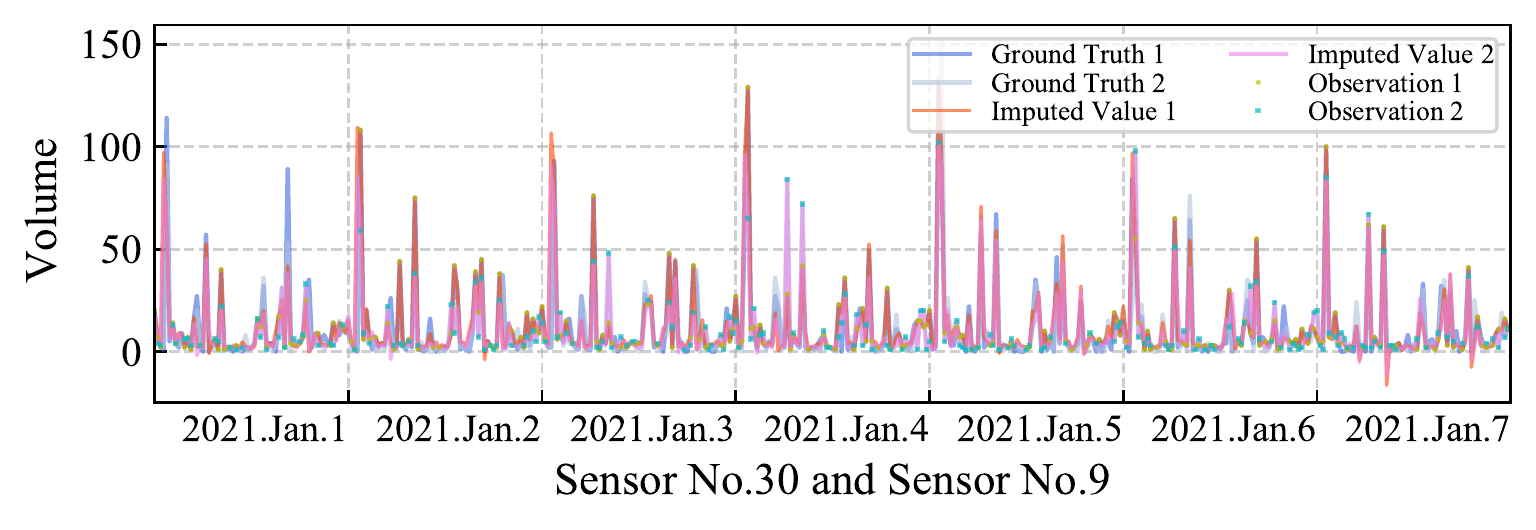}
}
\subfigure[Xuhui, $50\%$ FM-2]{
\centering
\includegraphics[scale=0.45]{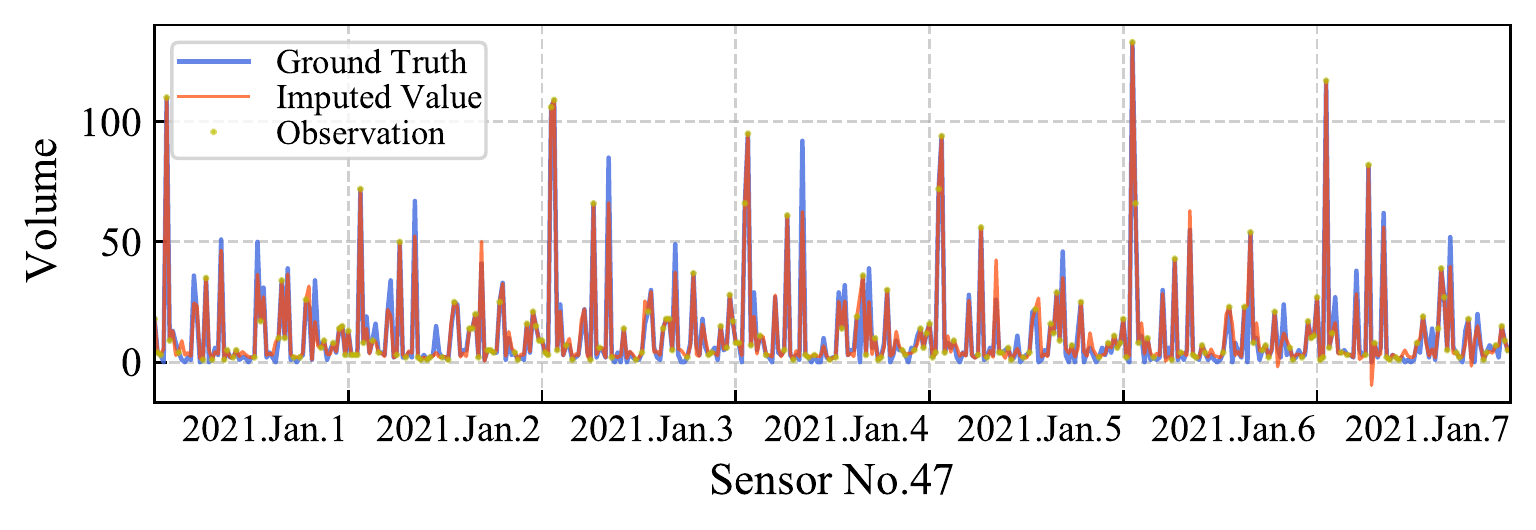}
}
\caption{Visualization examples of imputation for Xuhui ODs data. (a) $50\%$ random missing. (b) $50\%$ fiber mode-0 missing. The green window denotes the partial observations in this missing case. (c) $50\%$ mode-1 missing. Two different sensors who share the same missing time points are plotted for comparison(d) $50\%$ mode-2 missing.}
\label{X-EX}
\end{figure}

\begin{figure}[!htbp]
\centering
\subfigure[Guangzhou, $50\%$ RM]{
\centering
\includegraphics[scale=0.45]{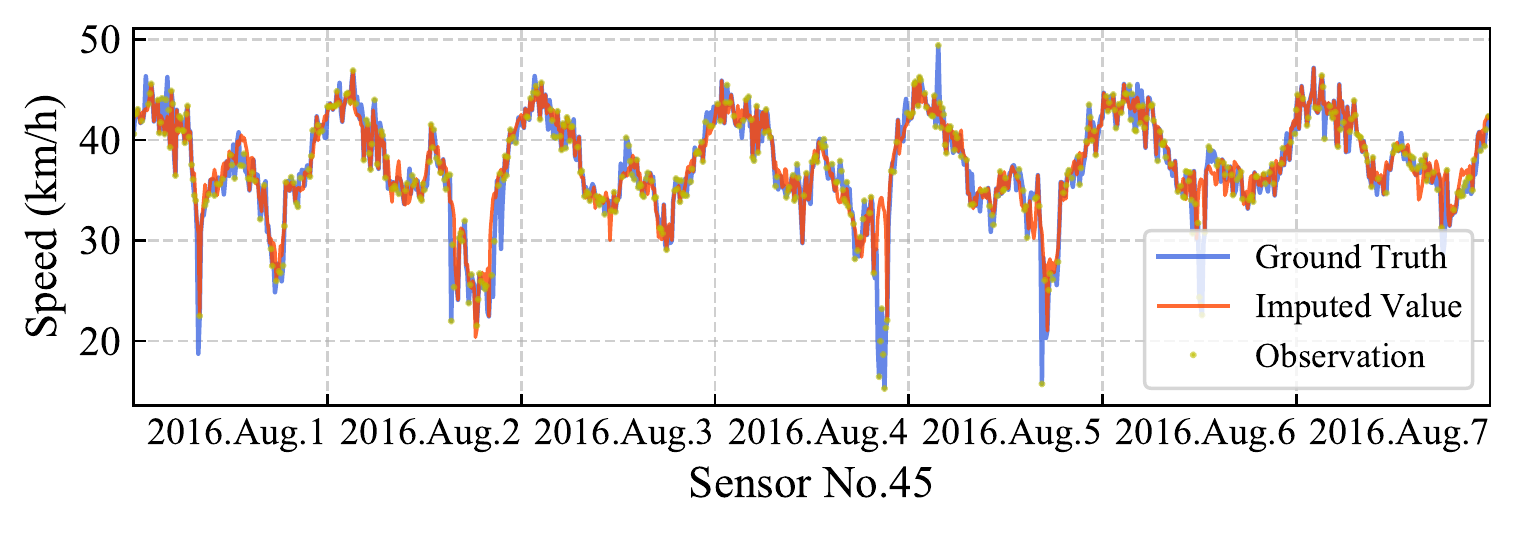}
}
\subfigure[Guangzhou, $50\%$ FM-0]{
\centering
\includegraphics[scale=0.45]{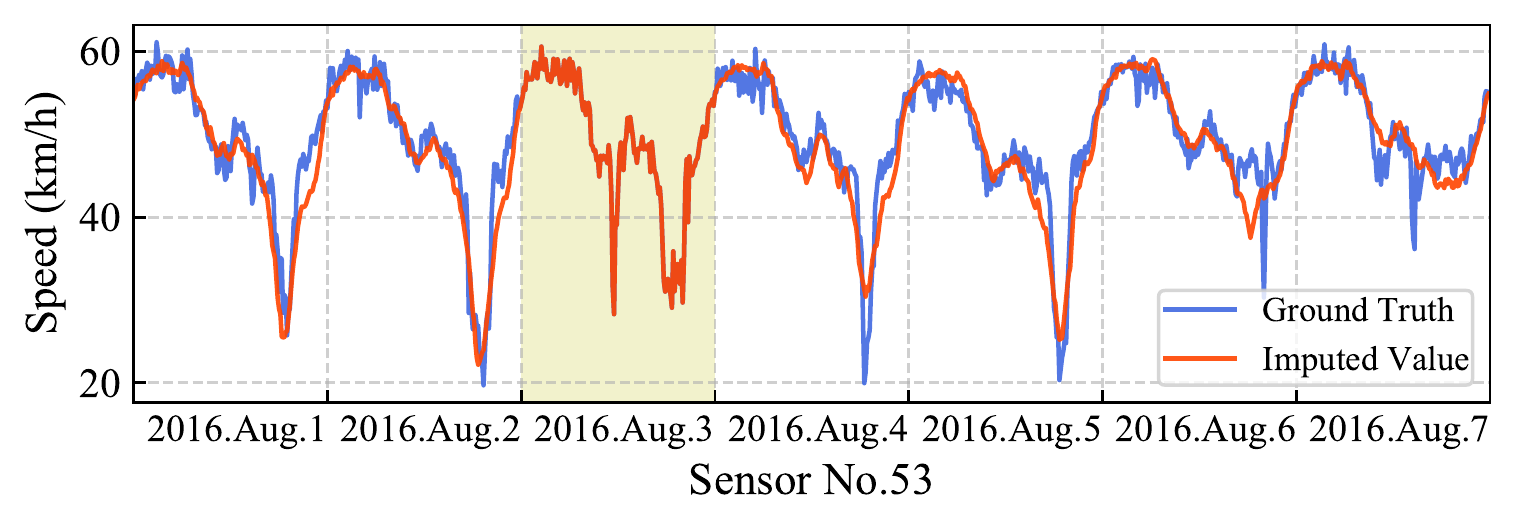}
}
\subfigure[Guangzhou, $50\%$ FM-1]{
\centering
\includegraphics[scale=0.45]{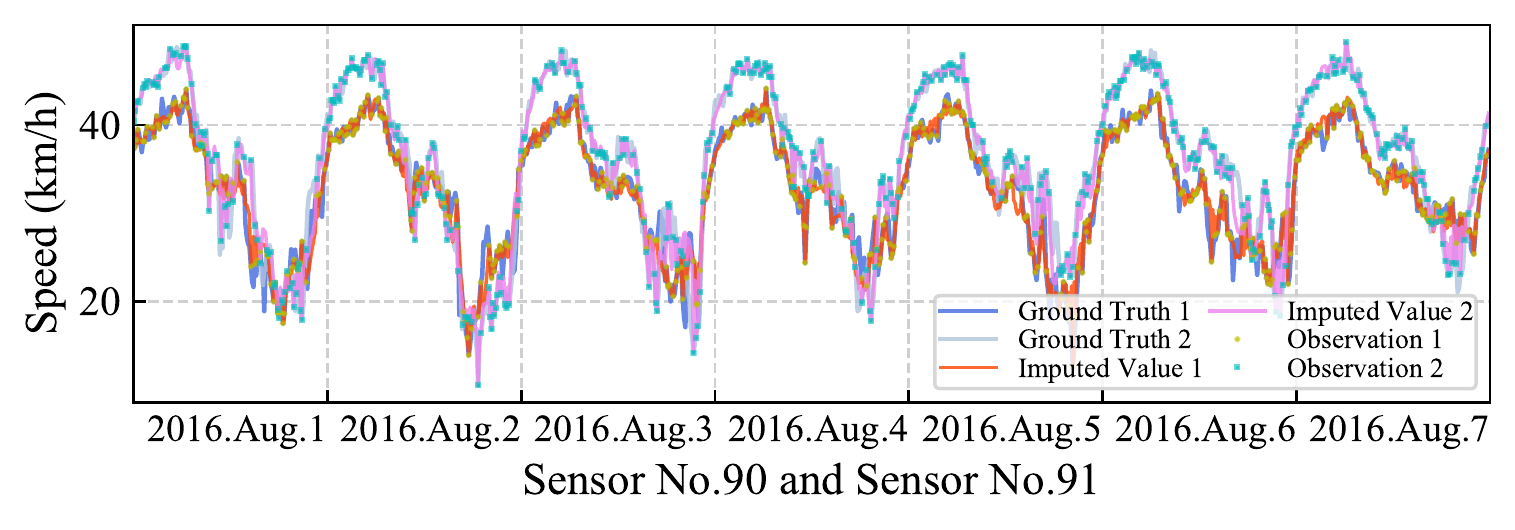}
}
\subfigure[Guangzhou, $50\%$ FM-2]{
\centering
\includegraphics[scale=0.45]{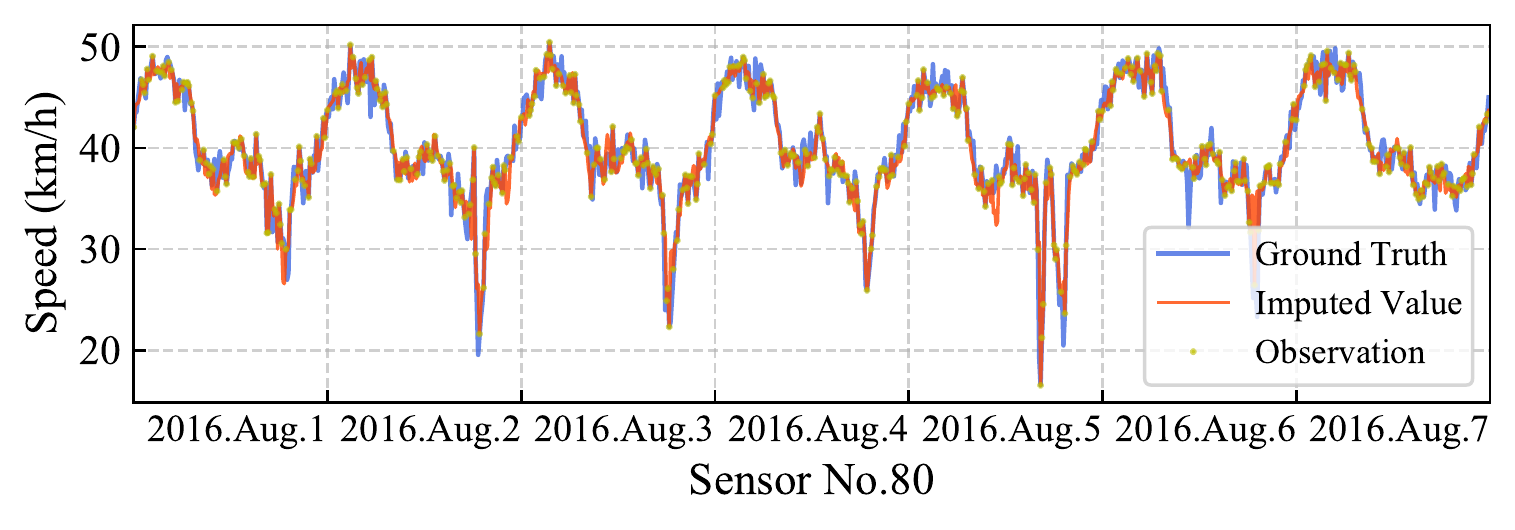}
}
\caption{Visualization examples of imputation for Guangzhou speed data. (a) $50\%$ random missing. (b) $50\%$ fiber mode-0 missing. The green window denotes the partial observations in this missing case. (c) $50\%$ mode-1 missing. Two different sensors who share the same missing time points are plotted for comparison(d) $50\%$ mode-2 missing.}
\label{G-EX}
\end{figure}

\begin{figure}[!htbp]
\centering
\subfigure[Portland, $50\%$ RM]{
\centering
\includegraphics[scale=0.45]{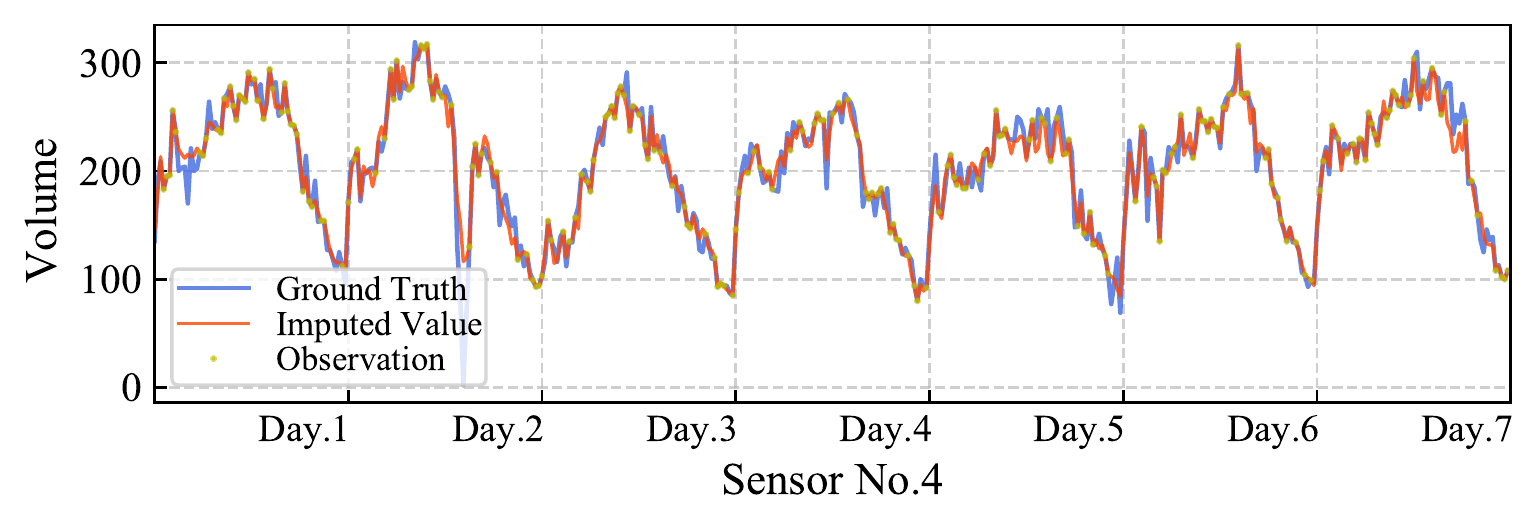}
}
\subfigure[Portland, $50\%$ FM-0]{
\centering
\includegraphics[scale=0.45]{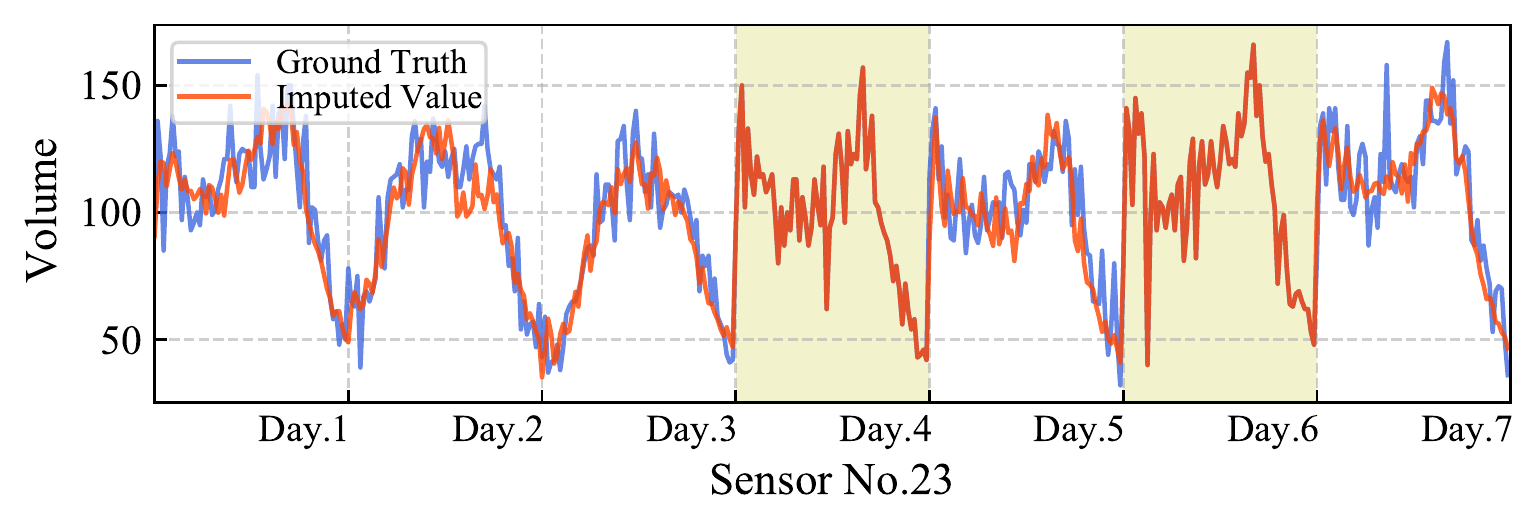}
}
\subfigure[Portland, $50\%$ FM-1]{
\centering
\includegraphics[scale=0.45]{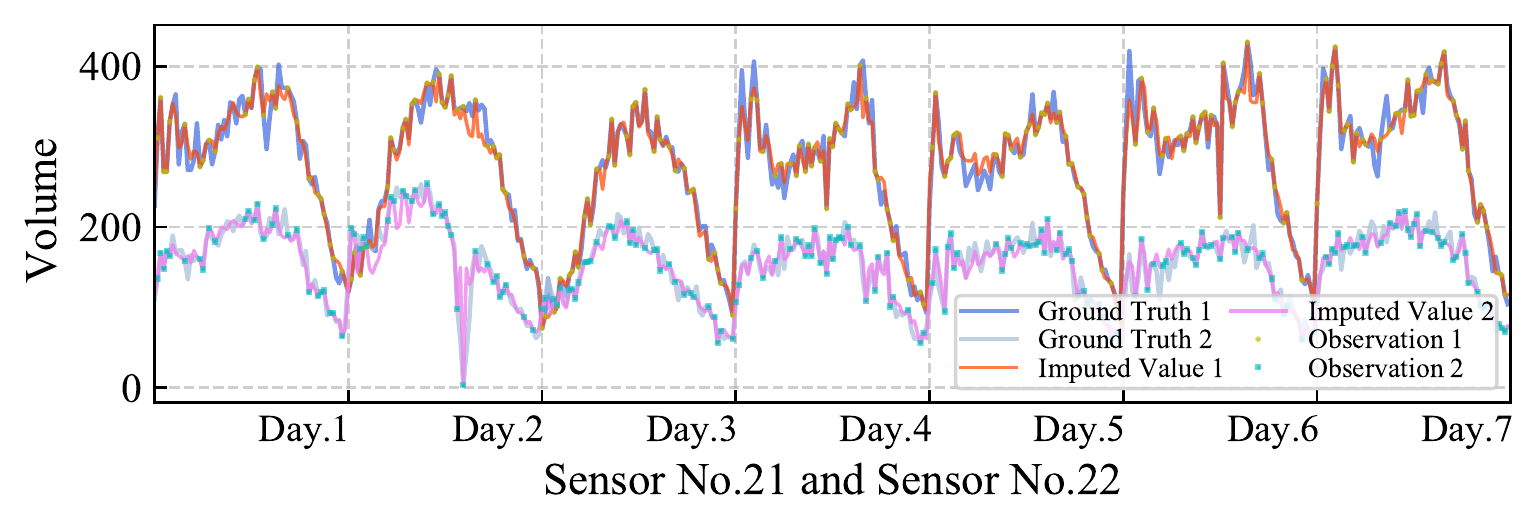}
}
\subfigure[Portland, $50\%$ FM-2]{
\centering
\includegraphics[scale=0.45]{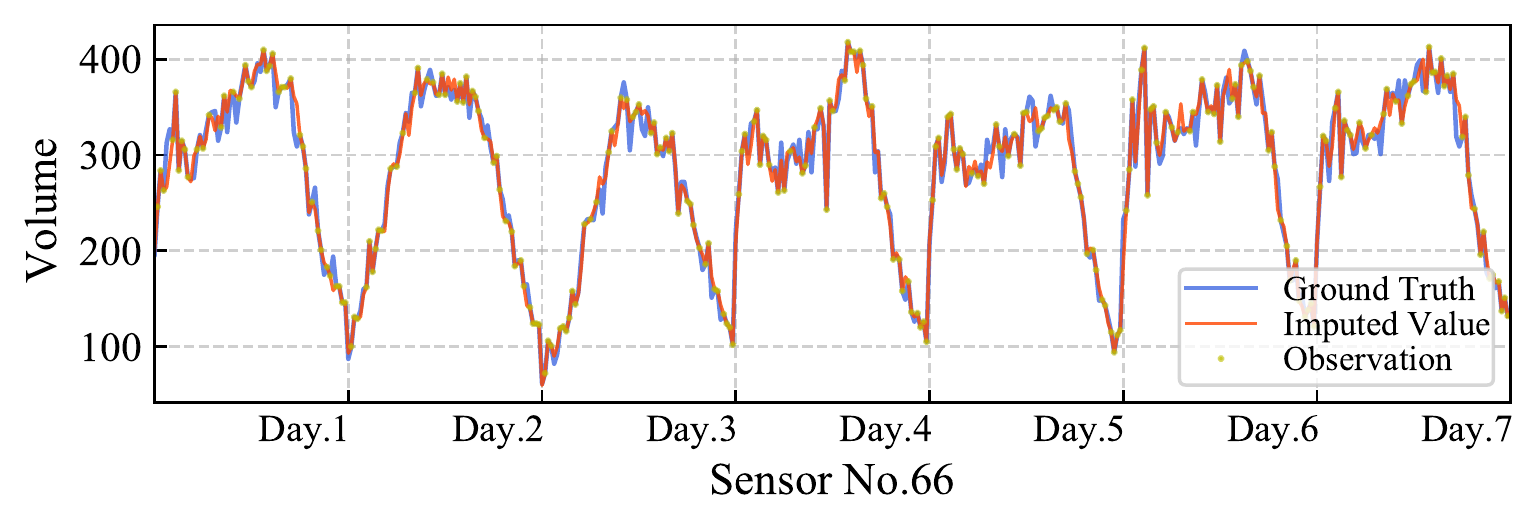}
}
\caption{Visualization examples of imputation for Portland volume data. (a) $50\%$ random missing. (b) $50\%$ fiber mode-0 missing. The green window denotes the partial observations in this missing case. (c) $50\%$ mode-1 missing. Two different sensors who share the same missing time points are plotted for comparison(d) $50\%$ mode-2 missing.}
\label{P-EX}
\end{figure}

\begin{figure}[!htbp]
\centering
\subfigure[Birmingham, $50\%$ RM]{
\centering
\includegraphics[scale=0.43]{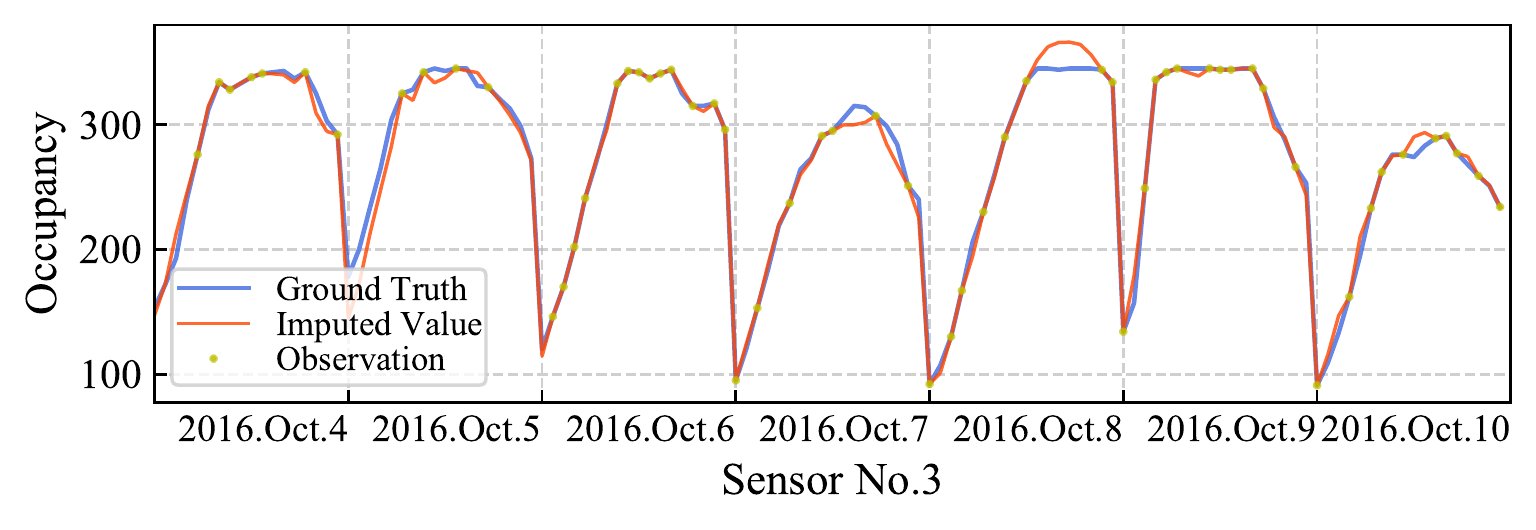}
}
\subfigure[Birmingham, $50\%$ FM-0]{
\centering
\includegraphics[scale=0.43]{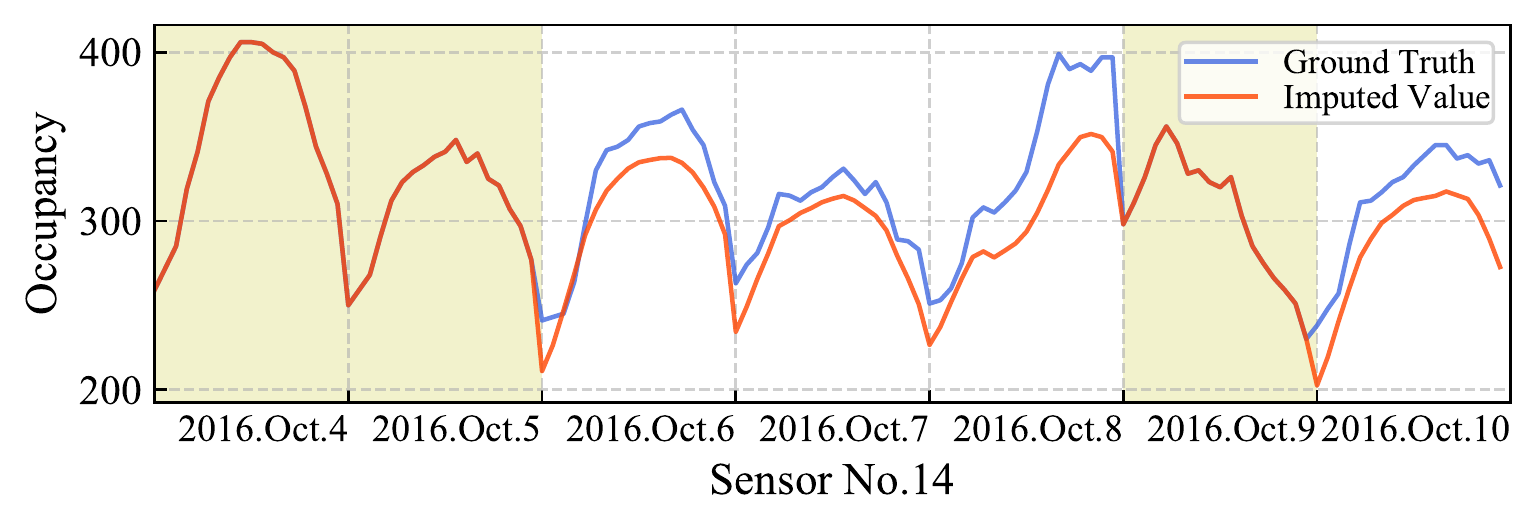}
}
\subfigure[Birmingham, $50\%$ FM-1]{
\centering
\includegraphics[scale=0.43]{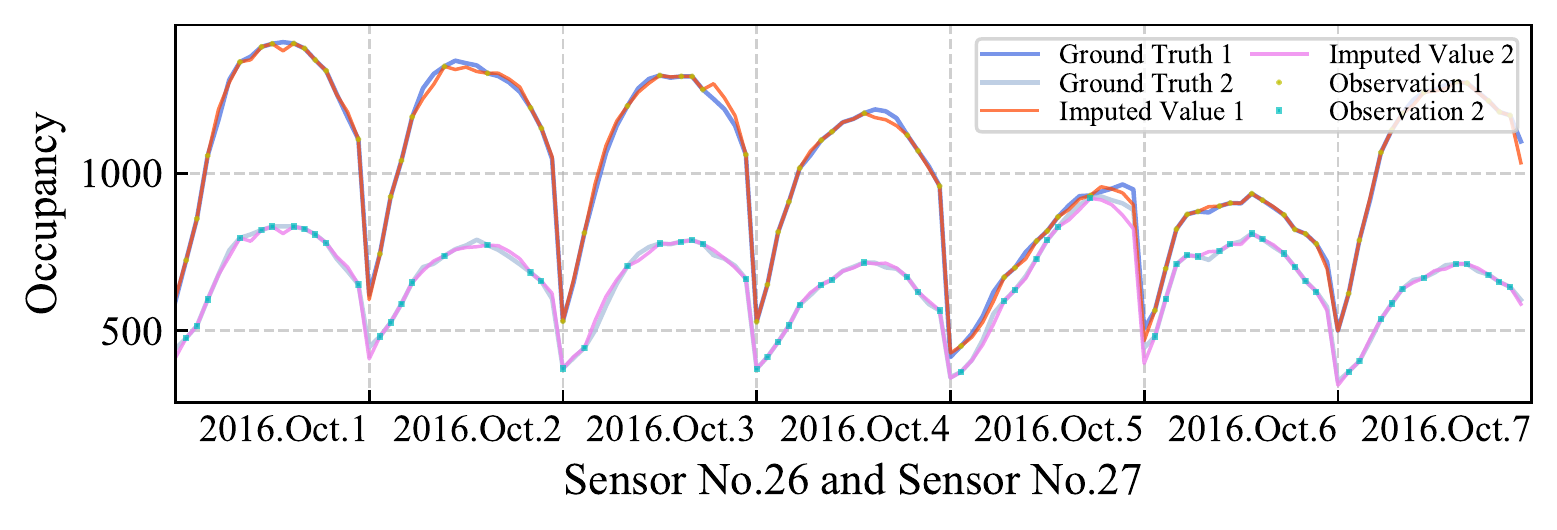}
}
\subfigure[Birmingham, $50\%$ FM-2]{
\centering
\includegraphics[scale=0.43]{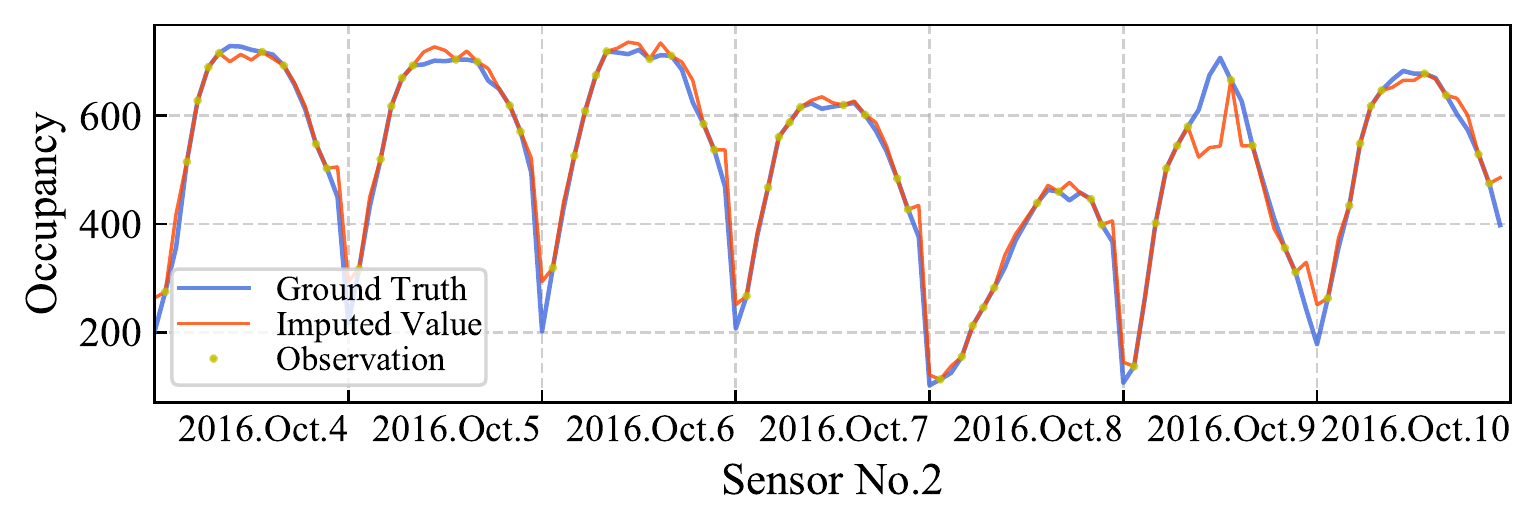}
}
\caption{Visualization examples of imputation for Birmingham occupancy data. (a) $50\%$ random missing. (b) $50\%$ fiber mode-0 missing. The green window denotes the partial observations in this missing case. (c) $50\%$ mode-1 missing. Two different sensors who share the same missing time points are plotted for comparison(d) $50\%$ mode-2 missing.}
\label{B-EX}
\end{figure}

For all these missing modes, we can clearly see that the structured fiber-like missing cases are much more difficult to impute than simple random missing. All models become less effective with the increasing of missing rate, but our model is more robust than other baselines generally. For Xuhui data, all models perform worst in 'spatial' FM-1 case. This is because without data of their neighbor traffic analysis zones, imputing OD flows has to exploit the time dimension, where shows slight within-day consistency \citep{cao2021day}, thus the information loss along this axis has a significant impact. One possible response is to establish the space topology as a additional prior knowledge, e.g., Graph Laplacian regularization in \citep{yang2021real}. 
As for Guangzhou and Portland data, since traffic flow dynamics like speed and volume have strong temporal dependencies, this FM-0 missing impedes accurate completion because there is no referable information on the whole missing days. The imputation results can be improved for this scenario by incorporating time series method, e.g. autoregressive model \citep{chen2021Autoregressive}. In the case of Birmingham data, both the 'temporal' FM-0 and FM-2 missing cases are more challenging because this parking data highly depends on day-to-day recurrence and consistency. This indicates that in element-wise and spatial missing cases, purely low-rank models can achieve adequate results for similar kinds of not complex signals; while in extreme temporal loss, LRTC models may generate biased turning points as can be seen in \minew{Figure} \ref{B-EX}. Therefore, structured data loss along each mode of tensor has different degrees of impacts on diverse kinds of data. This means that when dealing with practical missing traffic data, one should not treat different missing patterns equally and deploy a single model, instead, combining with prior knowledge and setting targeted parameters (e.g., $p$ values) will make more sense.

Moreover, the superiority of LRTC-TSpN over SpBCD denotes the effect of truncation and the decay strategy. When missing rate is high, only a \minew{few} \minew{proportions} of biggest singular values are truncated, but our model still achieves considerable improvement over the full Schatten $p$-norm model. Meanwhile, although the quadric penalty in \citep{gao2020robust} promotes its convergence, it also deteriorates the imputation precision, compared with our ADMM-GST algorithm.
The advantages of truncation and nonconvex approximation can also be found by comparison among LRTC-TNN, HaLTRC and TNN-DCT. When the missing rate is low, HaLRTC and TNN-DCT also show good performance, however, the two convex models become unstable as there is higher proportion of data unavailability. 
In general, both LRTC-TSpN and SpBCD have better performance than LRTC-TNN. This result clearly implies the dominant advantages of the generalized Schatten $p$-norm over the nuclear norm for capturing the underlying low-rank pattern, especially for these extreme missing cases. For example, according to \minew{Table} \ref{portland and Birmingham results} the proposed method achieves up to a 52\% MAE improvement compared to LRTC-TNN in Birmingham data even when the FM missing ratio is as high as 90\%.

\minew{Another common measurement is the multi-linear rank \citep{kolda2009tensor}. While in real-world traffic datasets, there exists noises that make the rank of data not numerically low (this will be discussed in \ref{Effect of truncation}). In other words, a few singular values are relatively very small but not zeros, and these residual values are normally included in the procedure of rank calculation. This impedes direct utilization of multi-linear rank to reflect true distribution of singular values. In this work, the 'low-rankness' is established in a sense that a low number of large singular values dominate the pattern of the matrix. In this context, comparing multi-linear ranks might not shed light on the evaluation of the imputed results.}

\begin{table}[htbp]
  \centering
  \caption{\minew{The Schatten $p$-norm of each unfolding matrix achieved by LRTC-TSpN on four datasets}}
  \footnotesize
    \begin{tabular}{cccc|cccc}
    \toprule
    \multicolumn{4}{c|}{Guangzhou} & \multicolumn{4}{c}{Portland} \\
    \midrule
    Missing pattern & Mode-0 & Mode-1 & Mode-2 & Missing pattern & Mode-0 & Mode-1 & Mode-2 \\
    \midrule
    \textbf{Complete} & \textbf{257.67}  & \textbf{386.59}  & \textbf{119.38}  & \textbf{Complete} & \textbf{217.76}  & \textbf{2096.79}  & \textbf{79.19}  \\
    RM    & 252.62  & 373.59  & 115.95  & RM    & 215.54  & 1821.11  & 77.08  \\
    FM-0   & 249.43  & 371.75  & 115.48  & FM-0   & 215.35  & 1834.63  & 78.59  \\
    FM-1   & 252.52  & 373.85  & 115.68  & FM-1   & 215.72  & 1826.31  & 77.10  \\
    FM-2   & 252.62  & 373.59  & 115.95  & FM-2   & 215.51  & 1822.70  & 77.29  \\
    \midrule
    \multicolumn{4}{c|}{Birmingham} & \multicolumn{4}{c}{Xuhui} \\
    \midrule
    Missing pattern & Mode-0 & Mode-1 & Mode-2 & Missing pattern & Mode-0 & Mode-1 & Mode-2 \\
    \midrule
    \textbf{Complete} & \textbf{40.68}  & \textbf{62.62}  & \textbf{147.73}  & \textbf{Complete} & \textbf{103.37}  & \textbf{86.09}  & \textbf{26.86}  \\
    RM    & 37.58  & 60.46  & 137.06  & RM    & 99.18  & 82.87  & 26.32  \\
    FM-0   & 37.29  & 59.58  & 131.75  & FM-0   & 99.94  & 80.38  & 26.40  \\
    FM-1   & 37.87  & 55.96  & 131.48  & FM-1   & 97.49  & 82.85  & 25.94  \\
    FM-2   & 37.32  & 59.73  & 137.59  & FM-2   & 98.99  & 82.69  & 26.37  \\
    \bottomrule
    \multicolumn{5}{l}{\scriptsize{Results of the complete tensor are bold marked.}}
    \end{tabular}%
  \label{norm}%
\end{table}%

\minew{
Instead, we give the achieved Schatten $p$-norm calculated by Eq. \eqref{sp_matrix} of each mode-$n$ unfolding matrix in Table \ref{norm}, taking the $50\%$ missing rate as an example. As our objective is to minimize the sum of unfolding-based norm, the norm of all unfoldings in all instances reported in Table \ref{norm} are lower but close to the complete ones, demonstrating the effectiveness of our method.}

\subsubsection{Model comparisons in typical scenarios}
The Guangzhou and Portland traffic flow datasets are both million-level, which are relatively large-scale for many transportation applications. Therefore, the advantages of our method over other baselines are not remarkable due to the size of data. To further evaluate the performance in more intractable situations, in this subsection we conduct imputation tasks on small subsets of the two traffic flow datasets mentioned above with typical missing scenarios (i.e., fiber-like missing cases with loss rate higher than $75\%$), and they are briefly described as:
\begin{itemize}
    \item \textbf{Guangzhou-small}: Speed data with the first 50 locations and the first 15 days. The size is $(144\times50\times15)$.
    \item \textbf{Portland-small}: Volume data with the first 80 locations and the first 15 days. The size is $(96\times80\times15)$.
\end{itemize}

In these tests, we set the missing rate as $75\%, 85\%$, and $95\%$ for three FM-$n$ missing situations. For the two small-scale traffic flow datasets, the latent patterns in the origin data are more prominent and more susceptible to noise and missing values, so that the variance may become larger than the results of whole datasets. We conduct these experiments in five times and report the average results, see \minew{Figure} \ref{model compare}. The parameter settings for this subsection are given in \minew{Table} \ref{params subsec}.

\begin{table}[!htb]
  \centering
  \caption{Parameter settings for Guangzhou-small and Portland-small data}
  \footnotesize
    \begin{tabular}{c|ccc|ccc|ccc}
    \toprule
          & \multicolumn{3}{c|}{FM-0} & \multicolumn{3}{c|}{FM-1} & \multicolumn{3}{c}{FM-2} \\
    \midrule
    Dataset & $p$     & $\theta_0$ & $\beta$  & $p$     & $\theta_0$ & $\beta$  & $p$     & $\theta_0$ & $\beta$ \\
    \midrule
    Guangzhou-small & 0.6   & 0.05  & 2     & 0.7   & 0.05  & 2     & 0.8   & 0.05  & 2 \\
    \midrule
    Portland-small & 0.8   & 0.05  & 4     & 0.8   & 0.05  & 2     & 0.6   & 0.05  & 4 \\
    \bottomrule
    \end{tabular}%
    \label{params subsec}
\end{table}%

% \begin{figure}[!htb]
% \centering
% \subfigure[Guangzhou-small, FM-0]{
% \centering
% \includegraphics[scale=0.7]{model compare-G0.pdf}
% }
% \subfigure[Guangzhou-small, FM-2]{
% \centering
% \includegraphics[scale=0.7]{model compare-G2.pdf}
% }
% \subfigure[Portland-small, FM-0]{
% \centering
% \includegraphics[scale=0.7]{model compare-P0.pdf}
% }
% \subfigure[Portland-small, FM-1]{
% \centering
% \includegraphics[scale=0.7]{model compare-P1.pdf}
% }

% \caption{\minew{Model comparison in Guangzhou-small and Portland-small data with different fiber-like missing situations.}}
% \label{model compare}
% \end{figure}

\begin{figure}[!ht]
  \centering
  \includegraphics[scale=0.7]{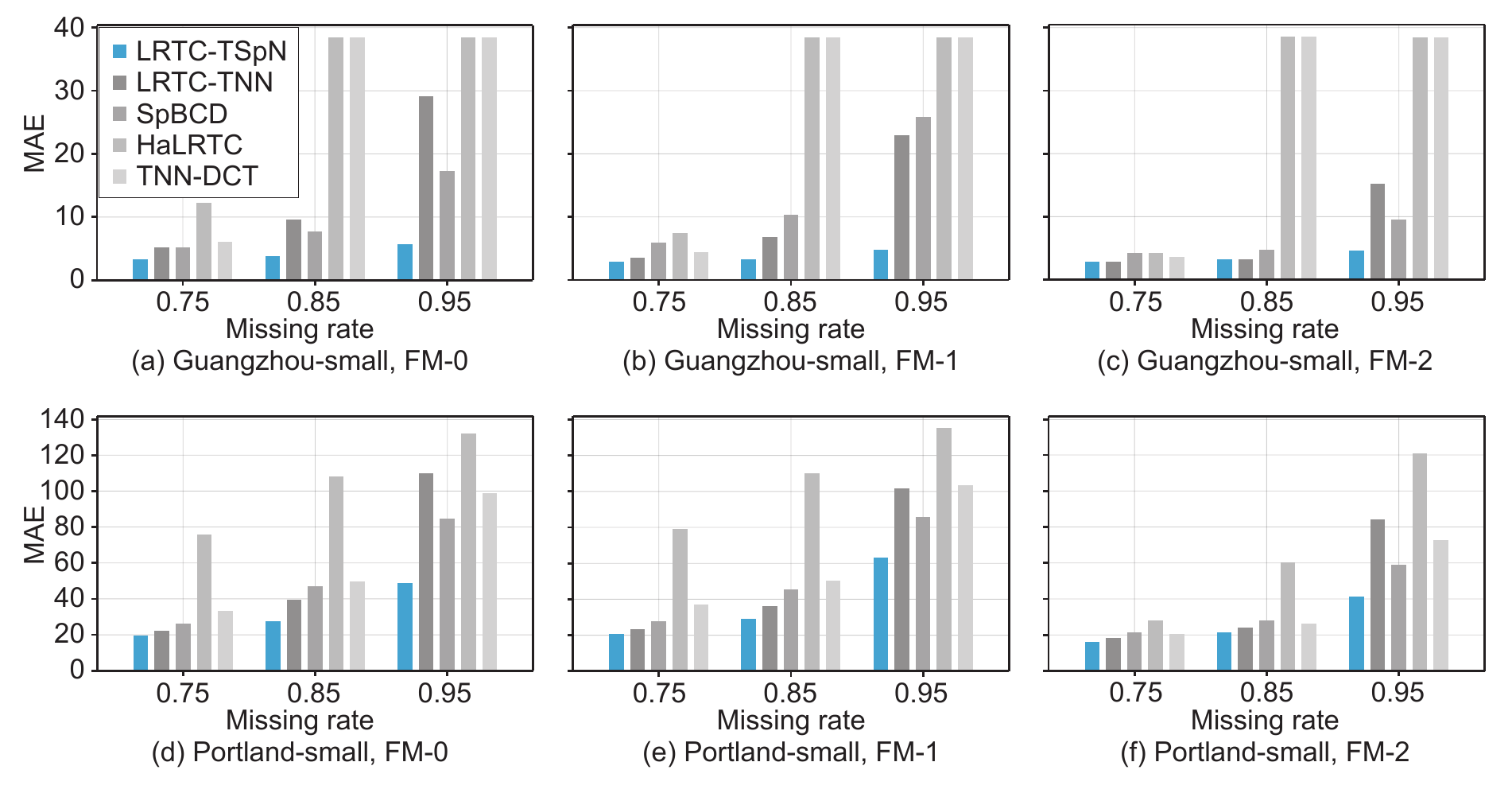}
  \caption{Model comparison in Guangzhou-small and Portland-small data with different fiber-like missing situations.}
  \label{model compare}
\end{figure}

From \minew{Figure} \ref{model compare}, it is obvious to find that the proposed method tends to be robust and much more stable than other approaches. The two convex model HaLRTC and TNN-DCT appear to have numerical instability when the missing ratio is as high as $85\%$. In this context, the superiority of LRTC-TSpN with respect to the accuracy is more distinct than in the large-scale datasets. The notable performance in relative small-size data with high proportion of data loss can be mainly ascribed to the truncation decay scheme and the strictly non-convexity \minew{characteristic} of TSpN that substitutes for a better representation ability.

%\subsubsection{Computational performance}
Apart from imputation precision, we also compare the computational performance with other baselines by recording the CPU running time, total iteration times and the convergence curves. The recorded results on Guangzhou-small data are given in \minew{Table} \ref{computational performances1} and \minew{Table} \ref{computational performances2}. Note that some results are invalid for HaLRTC and TNN-DCT since they are actually disabled to work in these extreme scenarios (see \minew{Figure} \ref{model compare}(a) and (b)), so we do not take them into considerations.

\begin{table}[!htbp]
  \centering
  \caption{Computational performances on Guangzhou-small data}
  \label{computational performances1}
  \setlength\tabcolsep{2pt}
  \footnotesize
    \begin{tabular}{c|c|cccccccccc}
    \toprule
    \multicolumn{2}{c|}{Models} & \multicolumn{2}{c}{LRTC-TNN} & \multicolumn{2}{c}{LRTC-TSpN} & \multicolumn{2}{c}{SpBCD} & \multicolumn{2}{c}{HaLRTC} & \multicolumn{2}{c}{TNN-DCT} \\
\cmidrule{3-12}    Missing cases & Missing rates & CPU time (s)& Epoch & CPU time (s)& Epoch & CPU time (s)& Epoch & CPU time (s)& Epoch & CPU time (s)& Epoch \\
    \midrule
    \multirow{3}[2]{*}{FM-0} & 75\%  & 12.4  & 185   & 7     & \textbf{56}    & 14.7  & 199   & \textbf{4.6}   & 76    & 6.6   & 119 \\
          & 85\%  & 12.5  & 184   & \textbf{7.1}   & \textbf{57}    & 15.5  & 199   & \st{0.1}   & \st{1}     & \st{0.2}   & \st{3} \\
          & 95\%  & 12.3  & 183   & \textbf{6.8}   & \textbf{56}    & 15.5  & 199   & \st{0.1}   & \st{1}     & \st{0.2}   & \st{3} \\
    \midrule
    \multirow{3}[2]{*}{FM-1} & 75\%  & 10.9  & 155   & 7.8   & \textbf{63}    & 14.3  & 199   & \textbf{3.9}   & 66    & 6.6   & 120 \\
          & 85\%  & 11.4  & 160   & \textbf{8.3}   & \textbf{66}    & 14.6  & 199   & \st{0.1}   & \st{1}     & \st{0.2}   & \st{3} \\
          & 95\%  & 11.8  & 171   & \textbf{7.8}   & \textbf{64}    & 14.8  & 199   & \st{0.1}   & \st{1}     & \st{0.2}   & \st{3} \\
    \midrule
    \multirow{3}[2]{*}{FM-2} & 75\%  & 10.6  & 156   & 8.2   & \textbf{64}    & 13.6  & 199   & \textbf{3.9}   & 65    & 6.6   & 120 \\
          & 85\%  & 12.1  & 163   & \textbf{8.6}   & \textbf{65}    & 15.9  & 199   & \st{0.1}   & \st{1}     & \st{0.2}   & \st{3} \\
          & 95\%  & 13.8  & 185   & \textbf{9.1}   & \textbf{68}    & 16.5  & 199   & \st{0.1}   & \st{1}     & \st{0.2}   & \st{3} \\
    \bottomrule
    \multicolumn{5}{l}{\scriptsize{Best results are bold marked and invalid results are marked with a strikeout.}}
    \end{tabular}%
\end{table}%

\begin{table}[!htbp]
  \centering
  \caption{Computational performances on Portland-small data}
  \label{computational performances2}
  \setlength\tabcolsep{2pt}
  \footnotesize
    \begin{tabular}{c|c|cccccccccc}
    \toprule
    \multicolumn{2}{c|}{Models} & \multicolumn{2}{c}{LRTC-TNN} & \multicolumn{2}{c}{LRTC-TSpN} & \multicolumn{2}{c}{SpBCD} & \multicolumn{2}{c}{HaLRTC} & \multicolumn{2}{c}{TNN-DCT} \\
\cmidrule{3-12}    Missing cases & Missing rates & CPU time (s)& Epoch & CPU time (s)& Epoch & CPU time (s)& Epoch & CPU time (s)& Epoch & CPU time (s)& Epoch \\
    \midrule
    \multirow{3}[2]{*}{FM-0} & 75\%  & 10    & 131   & 9.2    & \textbf{82}    & 15.9  & 199   & \textbf{7.6}   & 108   & 8.6   & 102 \\
          & 85\%  & 10.4  & 117   & 9.3    & \textbf{82}    & 18.3  & 199   & \textbf{7.2}   & 98    & 8.3   & 111 \\
          & 95\%  & 10    & 123   & 8.5  & 76    & 17.1  & 199   & \textbf{5.3}   & \textbf{73}    & 6.7   & 117 \\
    \midrule
    \multirow{3}[2]{*}{FM-1} & 75\%  & 10.8  & 134   & 9.5  & \textbf{86}    & 16.3  & 199   & 7.7   & 110   & \textbf{7.1}   & 103 \\
          & 85\%  & 10.5  & 128   & 9.6  & \textbf{88}    & 17.3  & 199   & \textbf{7.2}   & 101   & 7.6   & 112 \\
          & 95\%  & 12.9  & 165   & 9.3  & 83    & 17    & 199   & \textbf{5.5}   & \textbf{70}    & 8.1   & 124 \\
    \midrule
    \multirow{3}[2]{*}{FM-2} & 75\%  & 10    & 128   & 10.1  & \textbf{93}    & 15.2  & 199   & \textbf{8}     & 109   & 8.3   & 101 \\
          & 85\%  & 10.6  & 133   & 10.2  & \textbf{96}    & 17.7  & 199   & 8.5   & 119   & \textbf{7.8}   & 112 \\
          & 95\%  & 12.1  & 145   & 10.4  & 97    & 16.3  & 199   & \textbf{7.5}   & \textbf{96}    & 9.8   & 138 \\
    \bottomrule
    \multicolumn{5}{l}{\scriptsize{Best results are bold marked and invalid results are marked with a strikeout.}}
    \end{tabular}%
\end{table}%

From the two tables we can find that our method has good computational performance: the proposed LRTC-TSpN can converge with fewer iterations in most testing situations and is competitive in running time consumption, so that it is cost-effective for saving computational sources, although the GST algorithm could lead to some more computing burden. At the meantime, SpBCD reaches the maximum iterations in all cases because the block coordinate descend optimization framework becomes hard to calculate the next proximal variable when facing such a nonconvex problem.
HaLRTC also displays promising performances in some cases owing to the simplified convex model, however, both TNN-DCT and HaLRTC even do not operate when the fiber-missing rate is over $75\%$ in the speed dataset.
This improvement over other baselines mainly attributes to the flexibility of model parameters and appropriate iteration strategy with theoretical optimum and convergence guaranteed.

We then take the $95\%$ missing rate as an example to plot the curve of MAE versus iterations, which gives a straightforward reflect of the convergence capability of our method, as shown in \minew{Figure} \ref{convergence curve}. %(see more in \nameref{Appendix B.} ).

\begin{figure}[!htbp]
\centering
\subfigure[Guangzhou-small, $95\%$ FM-0]{
\centering
\includegraphics[scale=0.45]{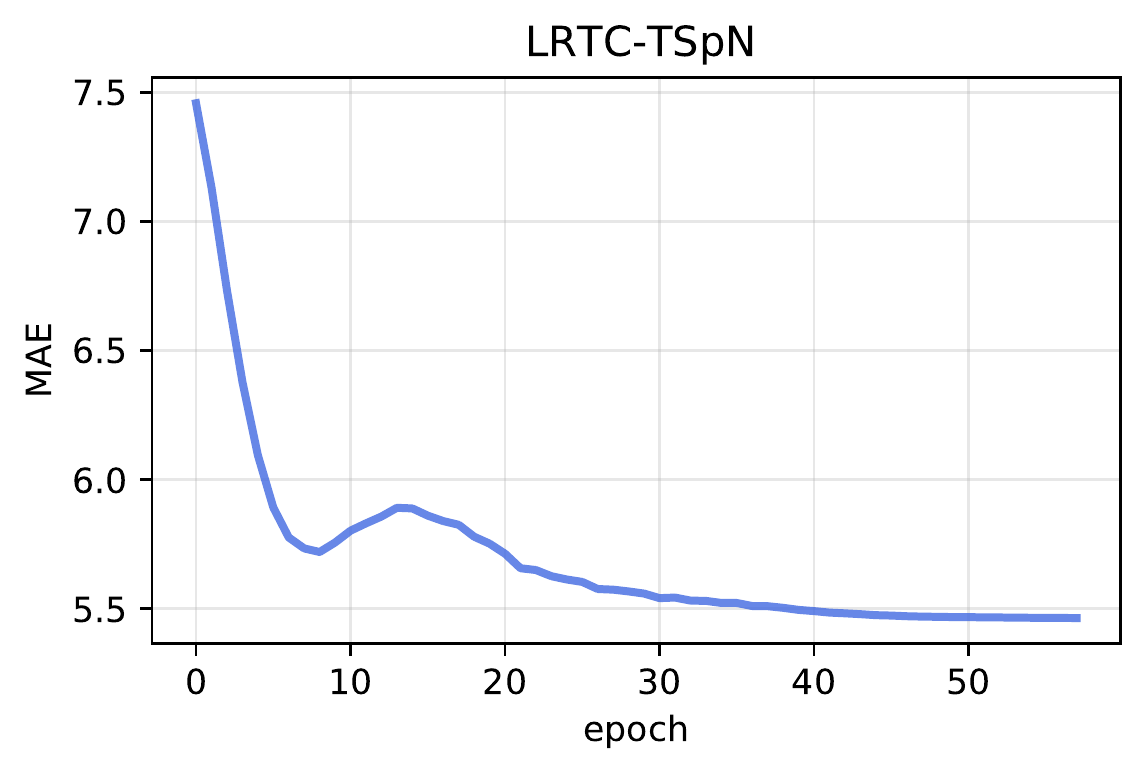}
}
\subfigure[Guangzhou-small, $95\%$ FM-1]{
\centering
\includegraphics[scale=0.45]{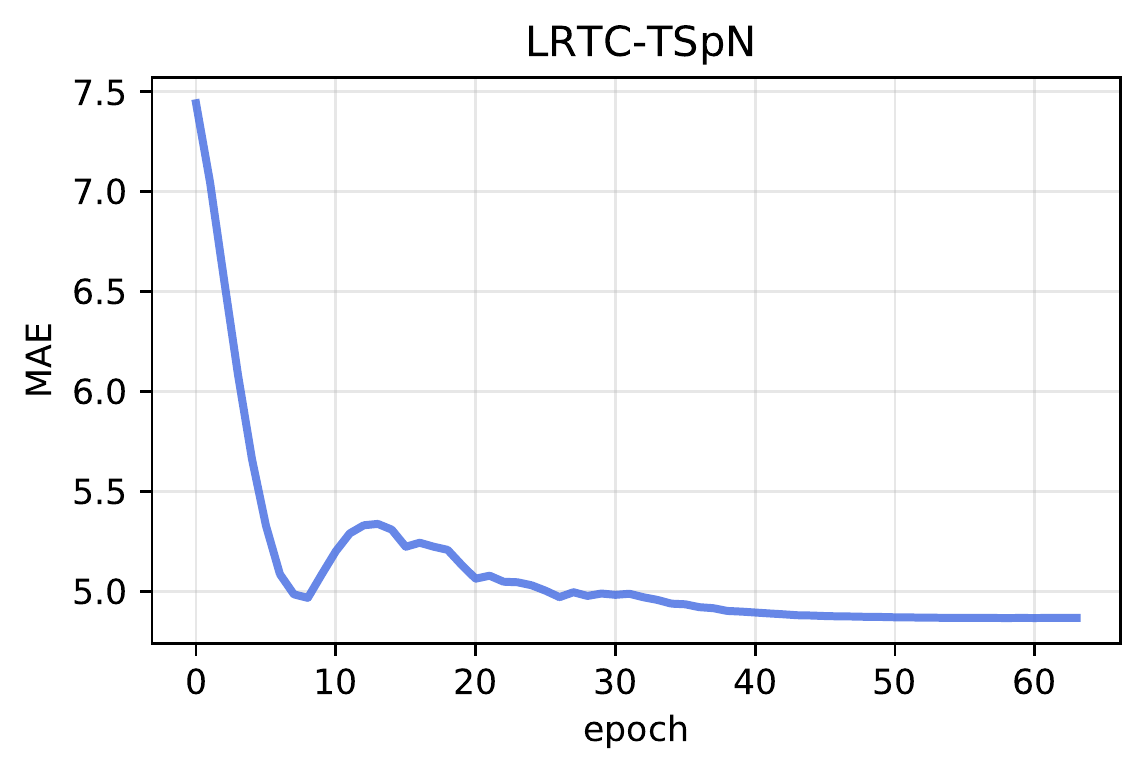}
}
\subfigure[Guangzhou-small, $95\%$ FM-2]{
\centering
\includegraphics[scale=0.45]{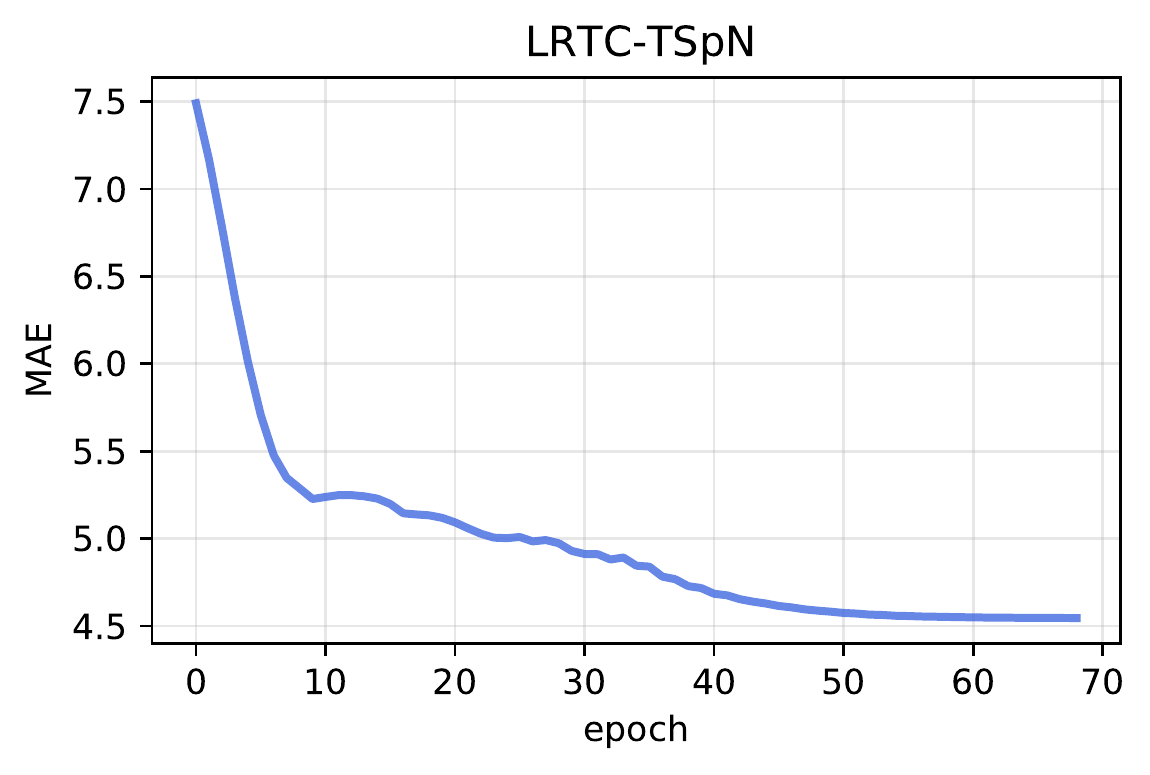}
}
\subfigure[Portland-small, $95\%$ FM-0]{
\centering
\includegraphics[scale=0.45]{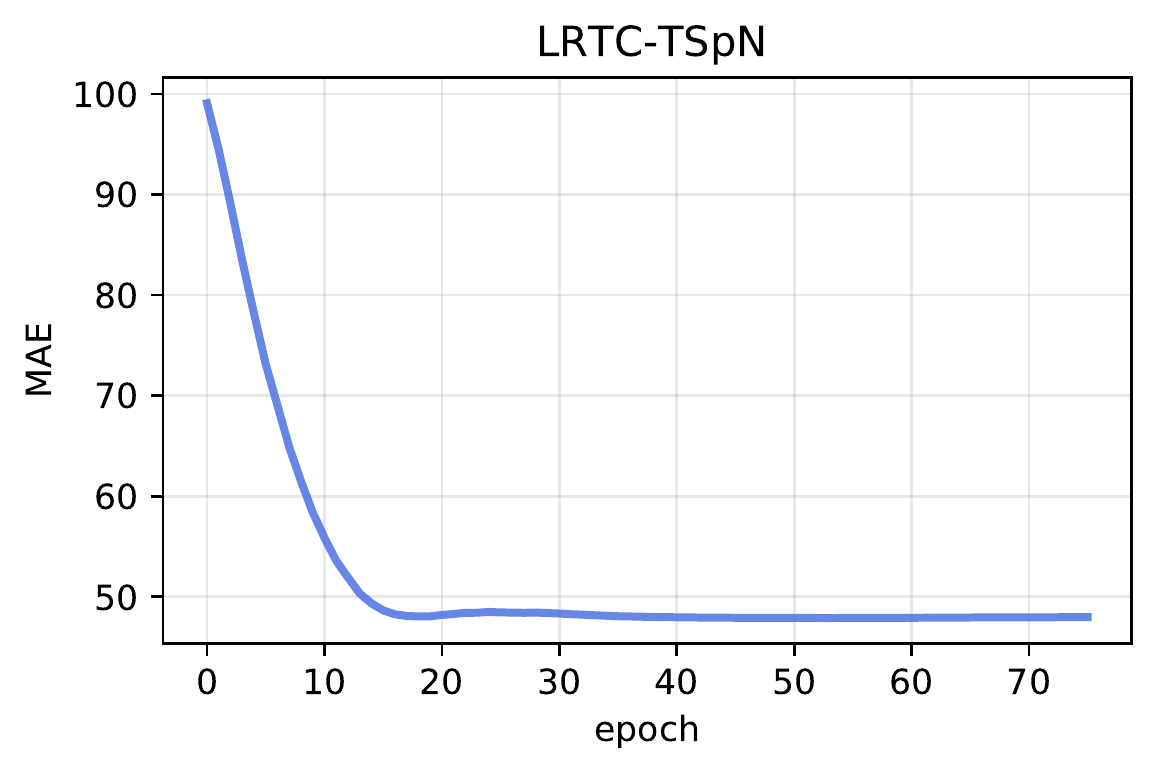}
}
\subfigure[Portland-small, $95\%$ FM-1]{
\centering
\includegraphics[scale=0.45]{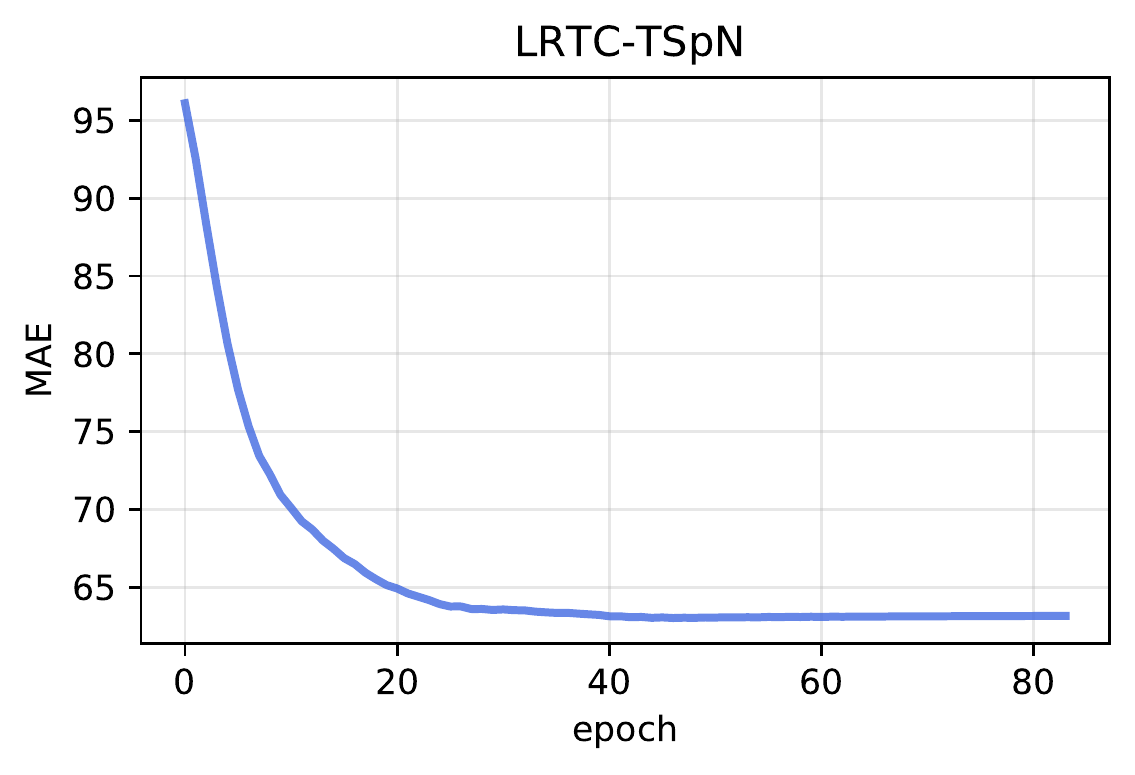}
}
\subfigure[Portland-small, $95\%$ FM-2]{
\centering
\includegraphics[scale=0.45]{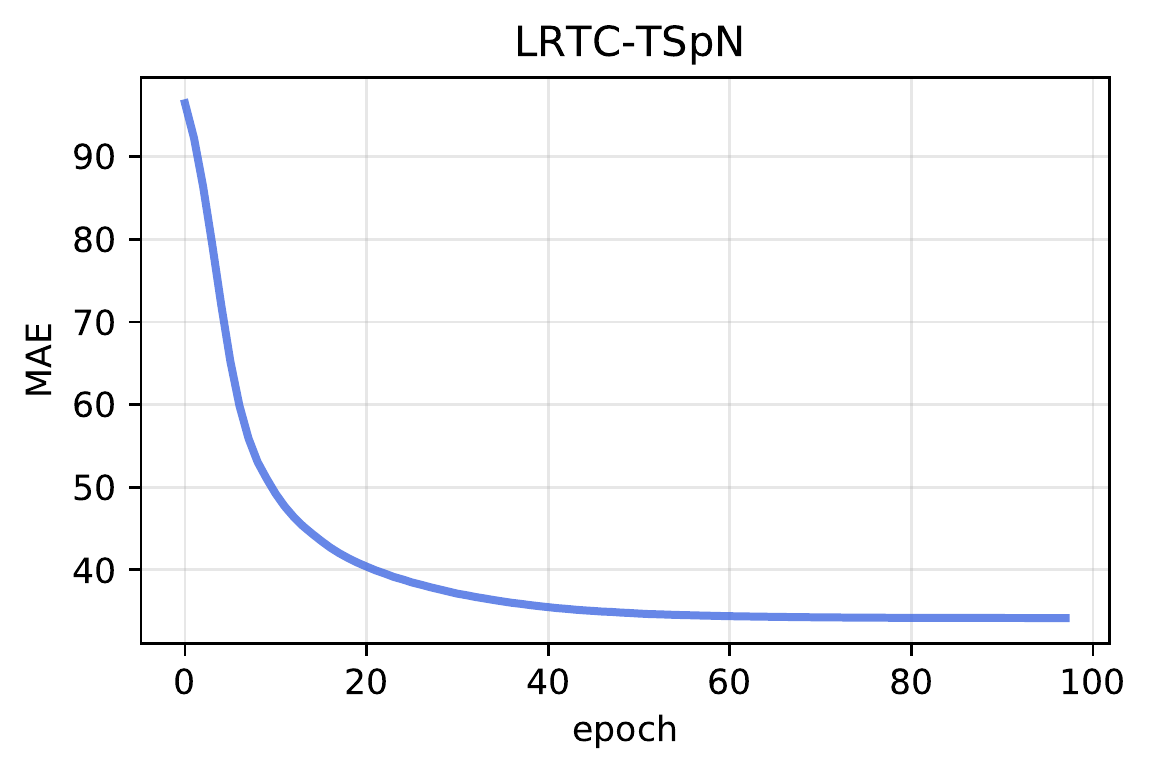}
}
\caption{The MAE variation curve with the epoch increasing until convergence.}
\label{convergence curve}
\end{figure}

These figures denote that the proposed \minew{Algorithm} \ref{algorithm2} generally decreases the objective function (or error measurements) stably and shows trivial fluctuation, even in such extreme cases. This means that the ADMM scheme can still be effective, in spite of a hard nonconvex problem, which is consistent with the conclusion in \minew{Subsection} \ref{Convergence analysis}. Therefore, our LRTC-TSpN method not only aims to improve the overall imputation accuracy \minew{to varying degrees}, but also provides a viable option for robust and efficient implementations of flexible scenarios.

\subsubsection{Hyper-parameter tuning and sensitivity analysis}
\label{hpt}
Compared with previous works, the proposed method has three additional parameters, i.e., truncation $\theta_0, \beta$ and Schatten $p$ value. To survey the impact of these hyper-parameters on model and give some insights into the hyper-parameter tuning procedures, in this subsection we investigate the influences of parameters on LRTC-TSpN on Birmingham data.

%\subsubsection{Sensitivity analysis of p value}
We keep the truncation parameters unchanged and set $p$ varying from 0.1 to 0.9 for four missing scenarios. Figure \ref{p influence} reports the MAE of LRTC-TSpN with different $p$ values under different missing rates. %(only parts of the results are plotted here to save space, see more in \nameref{Appendix B.} ). 

\begin{figure}[!htb]
\centering
\subfigure[Birmingham, RM]{
\centering
\includegraphics[scale=0.5]{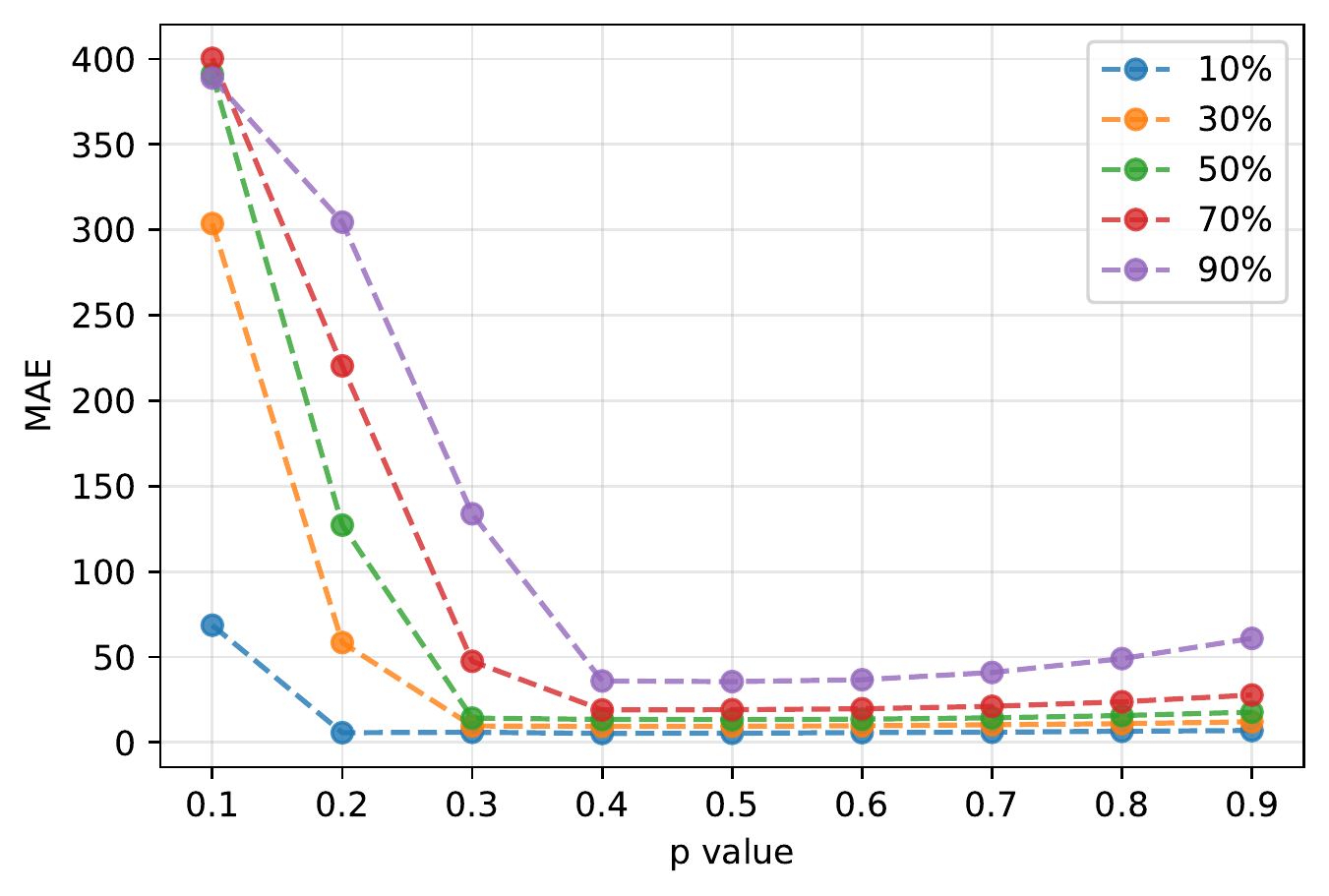}
}
\subfigure[Birmingham, FM-0]{
\centering
\includegraphics[scale=0.5]{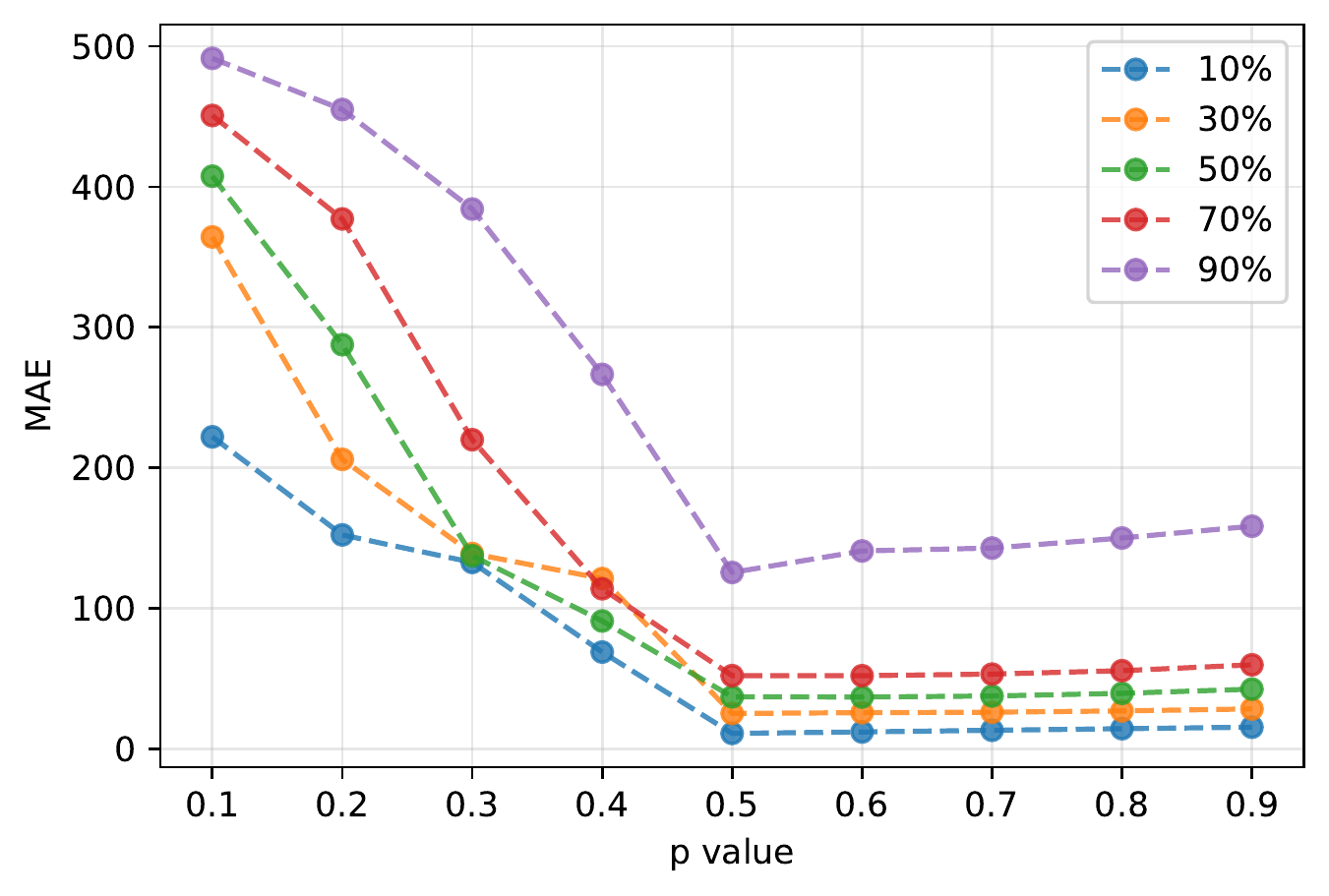}
}
\subfigure[Birmingham, FM-1]{
\centering
\includegraphics[scale=0.5]{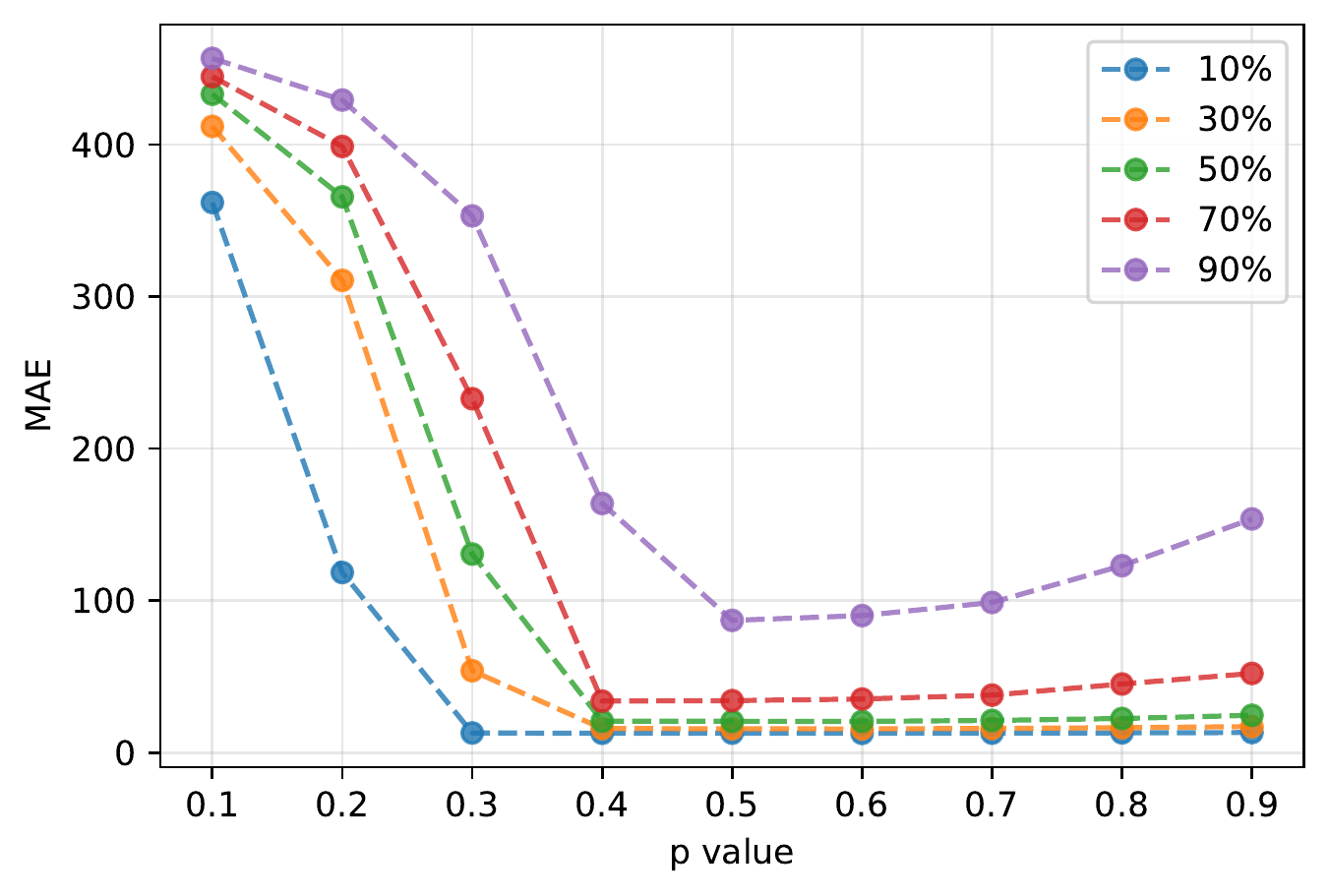}
}
\subfigure[Birmingham, FM-2]{
\centering
\includegraphics[scale=0.5]{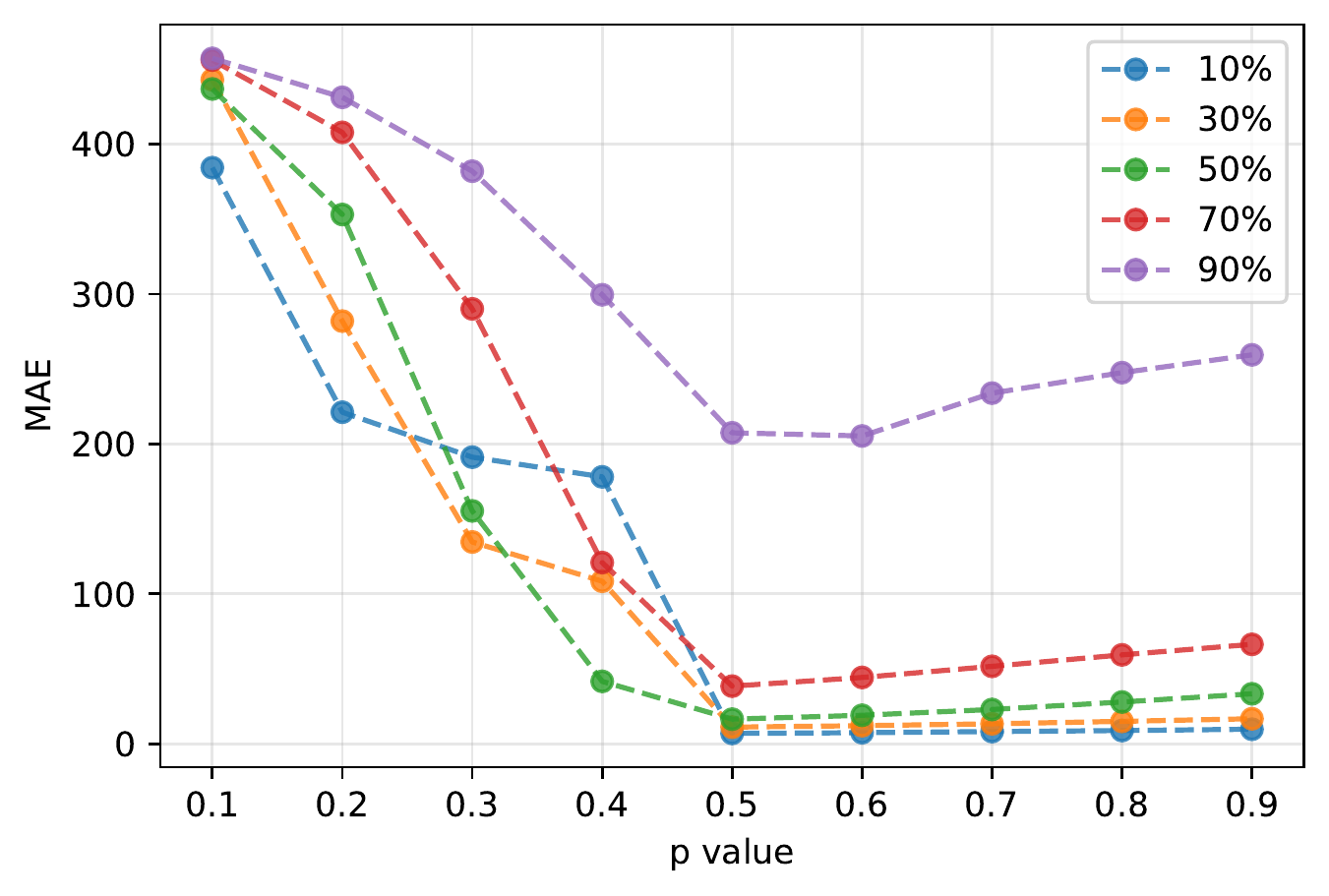}
}
\caption{Model performance under varying $p$ values on Birmingham dataset with different missing rates.}
\label{p influence}
\end{figure}

From this figure we observe that curves of different degrees of missing have similar trends with respect to $p$ values, and in this dataset the best performance occurs around $p=0.5$. When $p$ is less than this turning point, the MAE decreases dramatically; while there are slight impacts when $p$ exceeds it. At the same time, $p$ controls the level of nonconvexity of model in some extent. This enlightens us that we can find a proper $p$ value in the range after the inflection point to balance the accuracy and complexity. Look at another facet of this figure, when curves of different missing rates approximately have the same inflection point like \ref{p influence}(a), one can set a certain $p$ for all imputation tasks. On the contrary, one can improve the performance by dynamically increasing it like \ref{p influence}(b).

Figure \ref{p influence2} \minew{shows} the results cross different missing patterns by selecting a fixed missing rate (e.g., $50\%$ and $80\%$ are used herein). Similar conclusions can be drawn as in \minew{Figure} \ref{p influence}. For a simple realization, one can set a fixed $p$ value for all missing patterns.

\begin{figure}[!htb]
%\begin{minipage}[t]{0.48\textwidth}
\centering
\subfigure[Birmingham, $50\%$ missing]{
\centering
\includegraphics[scale=0.5]{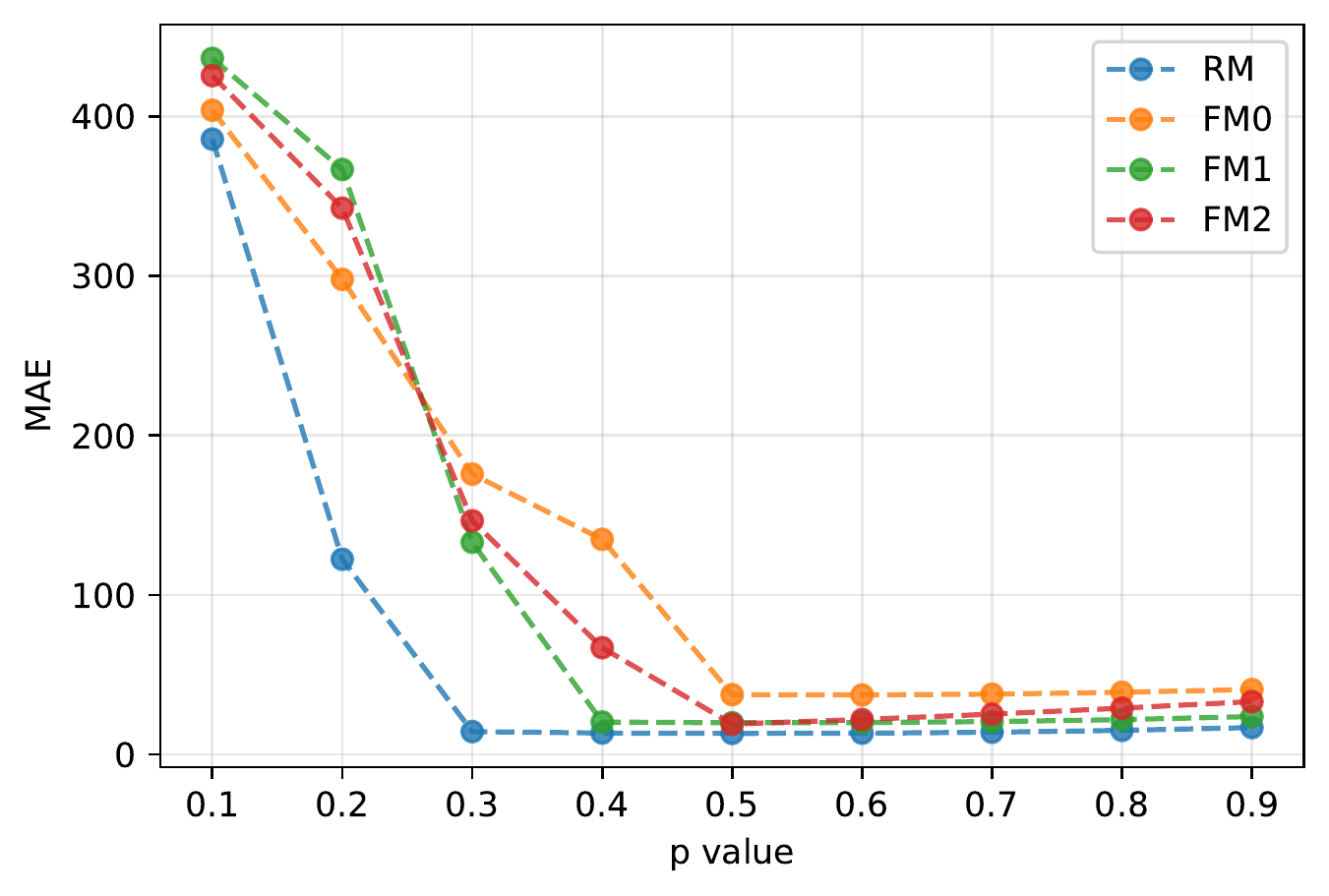}
}
%\end{minipage}
%\begin{minipage}[t]{0.48\textwidth}
\centering
\subfigure[Birmingham, $80\%$ missing]{
\centering
\includegraphics[scale=0.5]{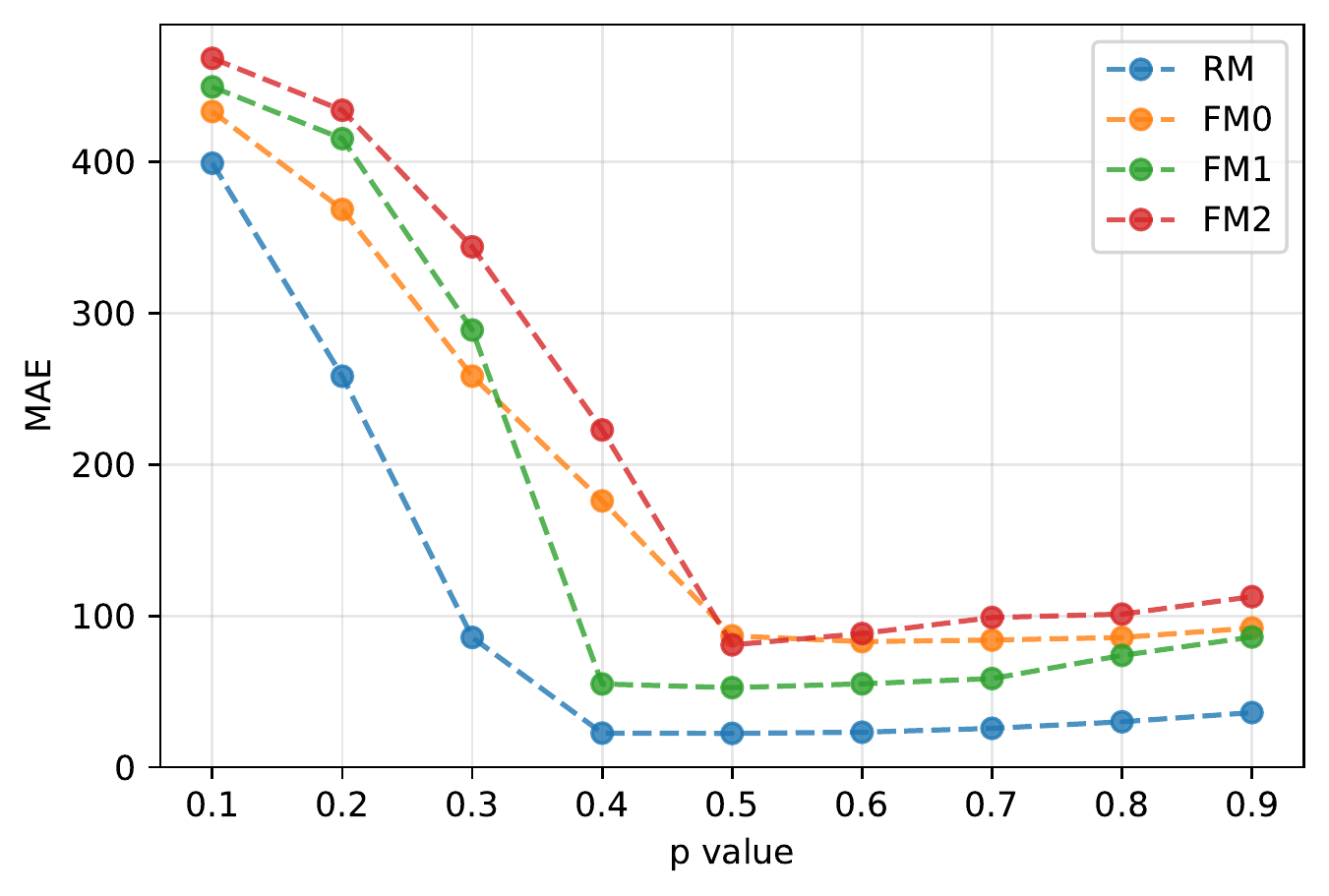}
}
%\end{minipage}
\caption{Model performance under different $p$ values on Birmingham dataset with different missing patterns.}
\label{p influence2}
\end{figure}

%\subsubsection{Sensitivity analysis of truncation parameters}
We survey the truncation $\theta$ parameter settings by testing the parameter combinations s1-s8 given in \minew{Table} \ref{theta setting}. In each combination we conduct imputation tasks with missing rates ranging from $10\%$ to $80\%$ ($10\%$ as step) on four missing patterns.
Figure \ref{boxplot} shows the box plot for the performances of 
each parameter setting group under 8 missing scenarios of each missing pattern. Each error box in \minew{Figure} \ref{boxplot} reports the imputation MAE of 8 missing rate scenarios with a specific $\theta$ setting.

\begin{table}[htbp]
  \centering
  \caption{Parameter settings for $\theta$}
  \label{theta setting}%
  \footnotesize
    \begin{tabular}{c|c|cccccccc}
    \toprule
    \multicolumn{2}{c|}{Parameter setting groups} & s1    & s2    & s3    & s4    & s5    & s6    & s7    & s8 \\
    \midrule
    \multirow{2}[2]{*}{RM} & $\theta_0$ & 0.2   & 0.2   & 0.2   & 0.2   & 0.1   & 0.3   & 0.4   & 0.5 \\
          & $\beta$  & 1     & 2     & 3     & 4     & 4     & 4     & 4     & 4 \\
    \midrule
    \multirow{2}[2]{*}{FM-0} & $\theta_0$ & 0.2   & 0.2   & 0.2   & 0.2   & 0.1   & 0.3   & 0.4   & 0.5 \\
          & $\beta$  & 2.5   & 3     & 3.5   & 4     & 4     & 4     & 4     & 4 \\
    \midrule
    \multirow{2}[2]{*}{FM-1} & $\theta_0$ & 0.2   & 0.2   & 0.2   & 0.2   & 0.1   & 0.3   & 0.4   & 0.5 \\
          & $\beta$  & 2.5   & 3     & 3.5   & 4     & 4     & 4     & 4     & 4 \\
    \midrule
    \multirow{2}[2]{*}{FM-2} & $\theta_0$ & 0.1   & 0.1   & 0.1   & 0.1   & 0.05  & 0.15  & 0.2   & 0.25 \\
          & $\beta$  & 1     & 1.5   & 2     & 2.5   & 2     & 2     & 2     & 2 \\
    \bottomrule
    \end{tabular}%
\end{table}%

\begin{figure}[!htb]
\centering
\subfigure[Birmingham, RM]{
\centering
\includegraphics[scale=0.6]{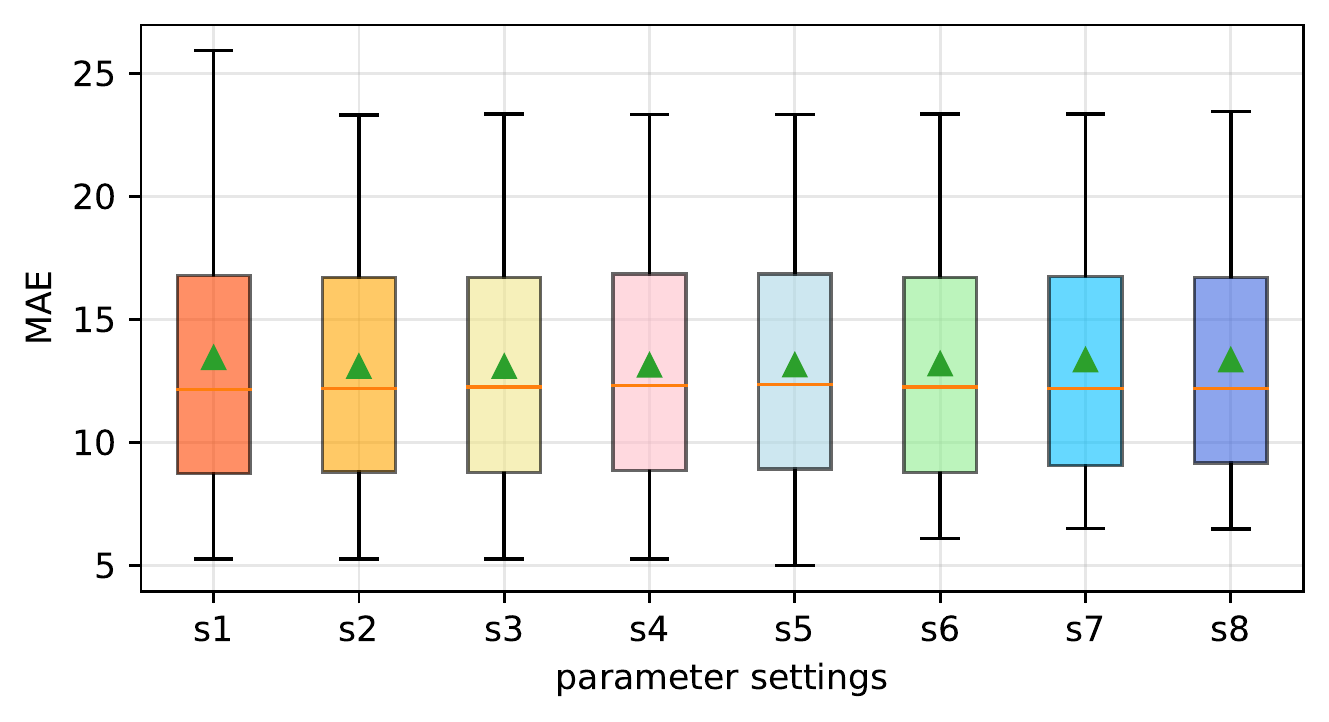}
}
\subfigure[Birmingham, FM-0]{
\centering
\includegraphics[scale=0.6]{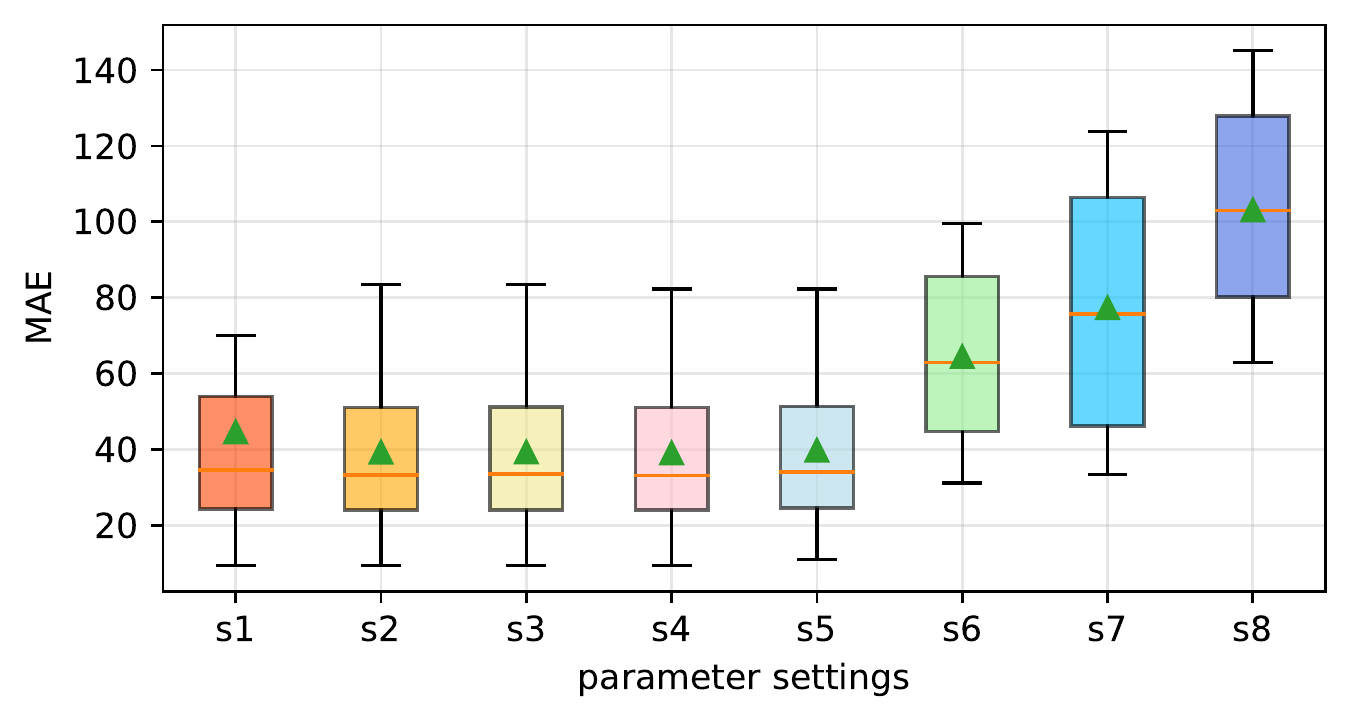}
}
\subfigure[Birmingham, FM-1]{
\centering
\includegraphics[scale=0.6]{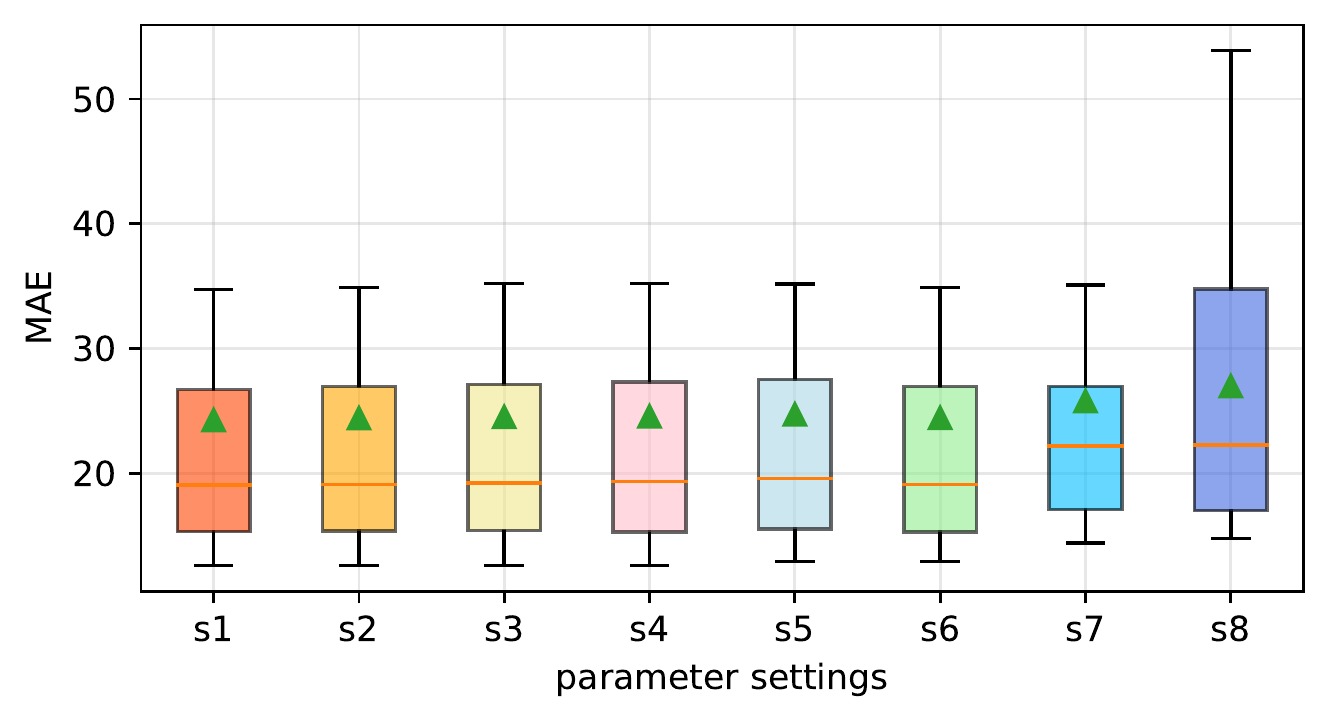}
}
\subfigure[Birmingham, FM-2]{
\centering
\includegraphics[scale=0.6]{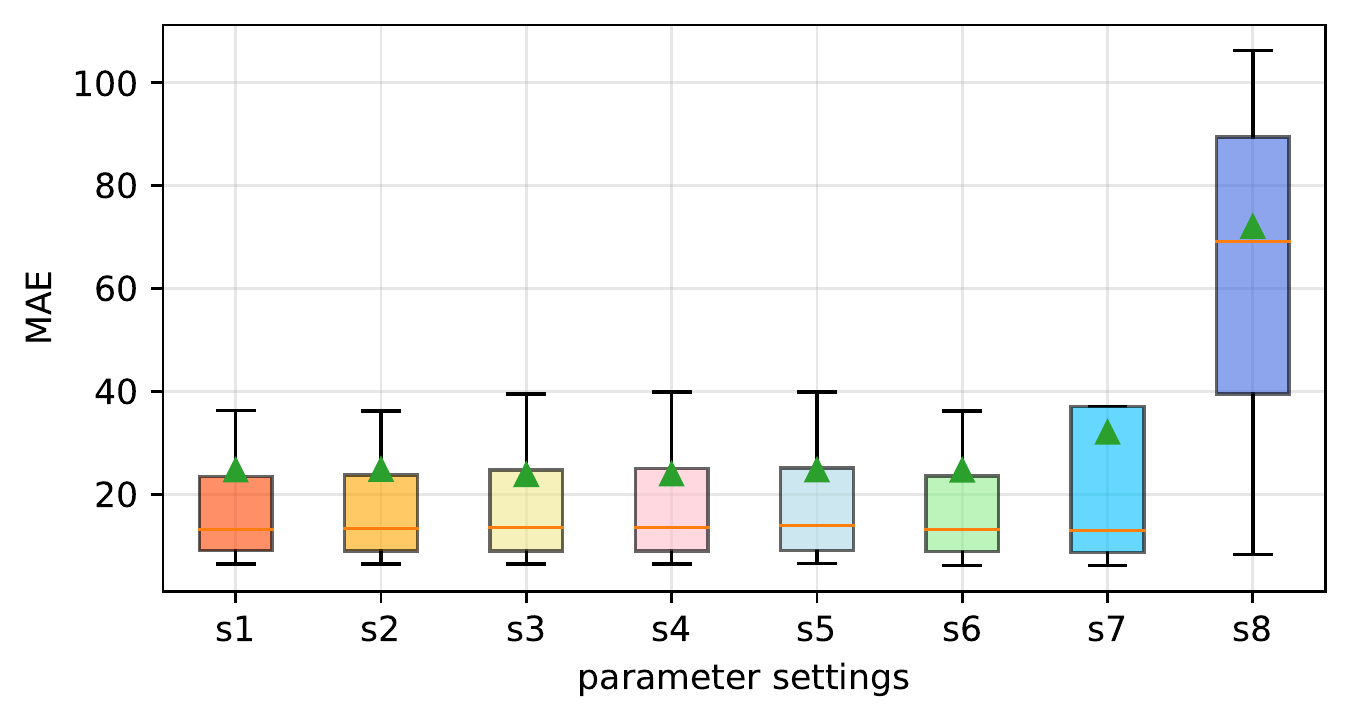}
}

\caption{Model performance under different $\theta$ parameter combinations on Birmingham data.}
\label{boxplot}
\end{figure}

There are mainly two observations from \minew{Figure} \ref{boxplot}: (1) the performances under groups s1, s2, s3, and s4 indicate that LRTC-TSpN is not sensitive to $\beta$, and an inappropriate $\beta$ setting could finally affect the upper bound of MAE while have minor impacts on the average or median performance; (2) the performances under setting s4, s5, s6, s7, and s8 reveal that the proposed method is more sensitive to $\theta_0$, which influences the lower bound of the MAE directly and the precise will significantly degrade if a too high $\theta_0$ is given like s8; however, through the decaying effect of $\beta$, the impacts of $\theta_0$ gradually diminish with increasing missing rate.

Generally, the truncation $\theta$ parameters in our method are basically not laborious to tune but effective to implement, as finding an appropriate $\theta_0$ is much easier than tuning a series of $\theta$ when dealing with varying missing rate. In practice, $\beta$ can be selected according to the actual within the given recommendation in \minew{Section} \ref{decay strategy}, and $\theta_0$ can be easily obtained by cross-validation or grid-search, etc.

\subsection{Further interpretation}
Although our method is a purely data-driven imputation model on the basis of optimization, it has good model interpretability, which is a noteworthy topic of imputation model recently \citep{lyngdoh2022prediction}. To better reveal the mechanism of truncation operation and the effects of model components, in this subsection we give further interpretations and discussions about the proposed method.

\subsubsection{Effect of truncation}
\label{Effect of truncation}
To give a intuitive explanation of the effect of truncation operator, we randomly select a partial observation matrix of five days from the Guangzhou speed dataset as an example in \minew{Figure} \ref{truncation example}. We generate $50\%$ missing data and recover it with LRTC-TSpN. Then for each of incomplete and recovered matrix, we calculate the SVD of it and accordingly restore it with a few largest singular values and remaining ones, resulting in a principal and a residual matrix, respectively. 

\begin{figure}[!htb]
\centering
\subfigure[\minew{Incomplete matrix}]{
\centering
\includegraphics[scale=0.26]{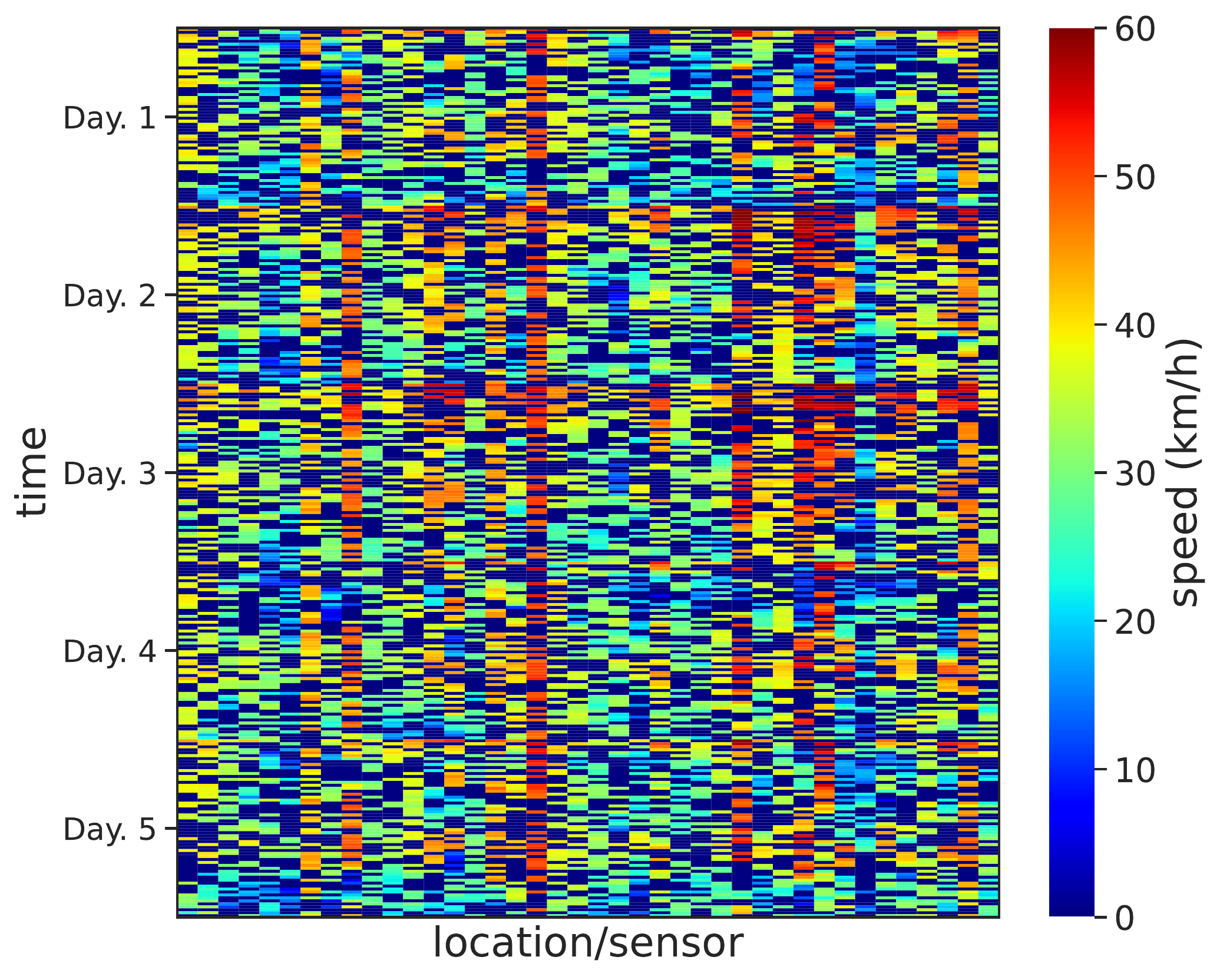}
}
\subfigure[Principal parts of incomplete matrix]{
\centering
\includegraphics[scale=0.26]{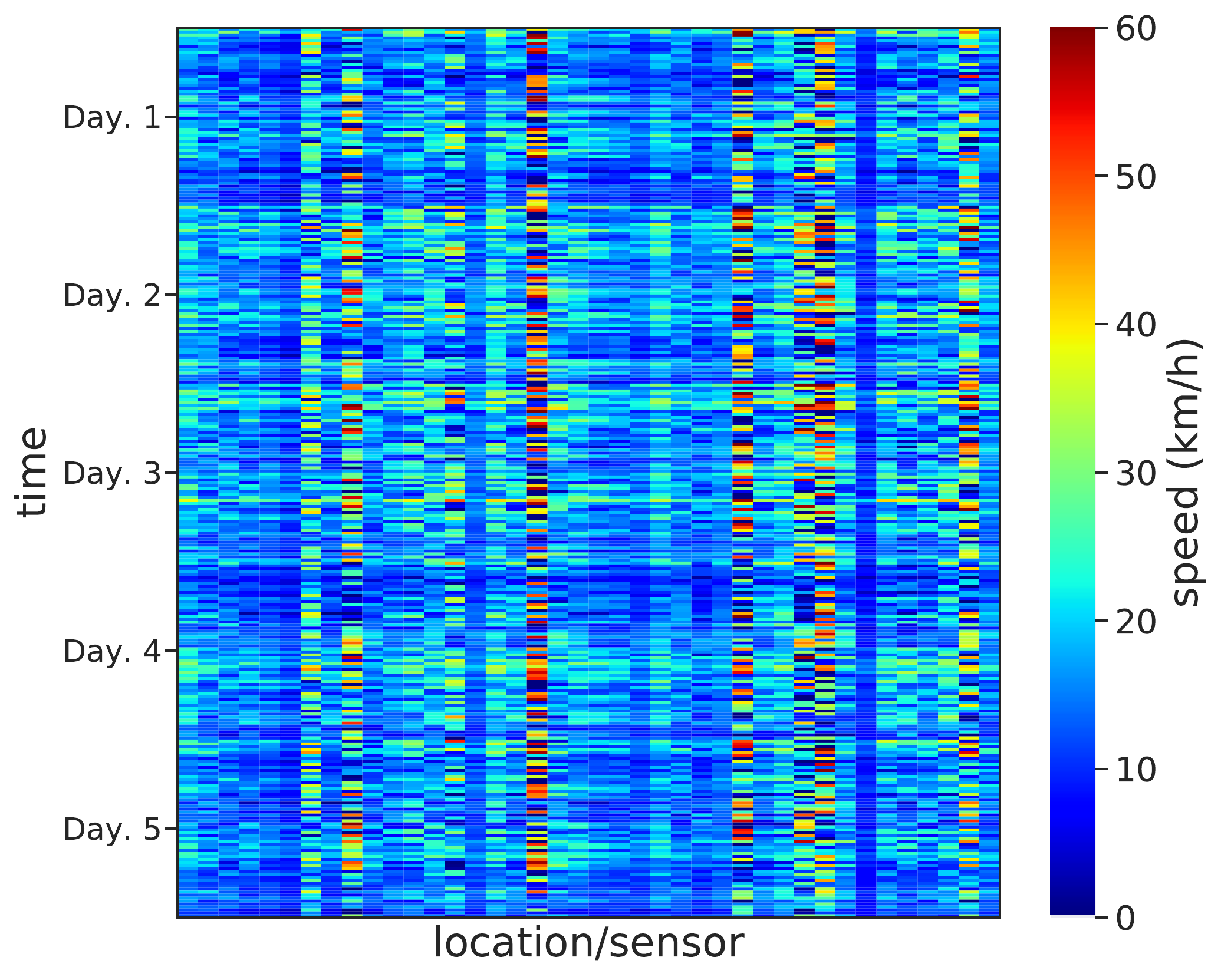}
}
\subfigure[Residual parts of incomplete matrix]{
\centering
\includegraphics[scale=0.26]{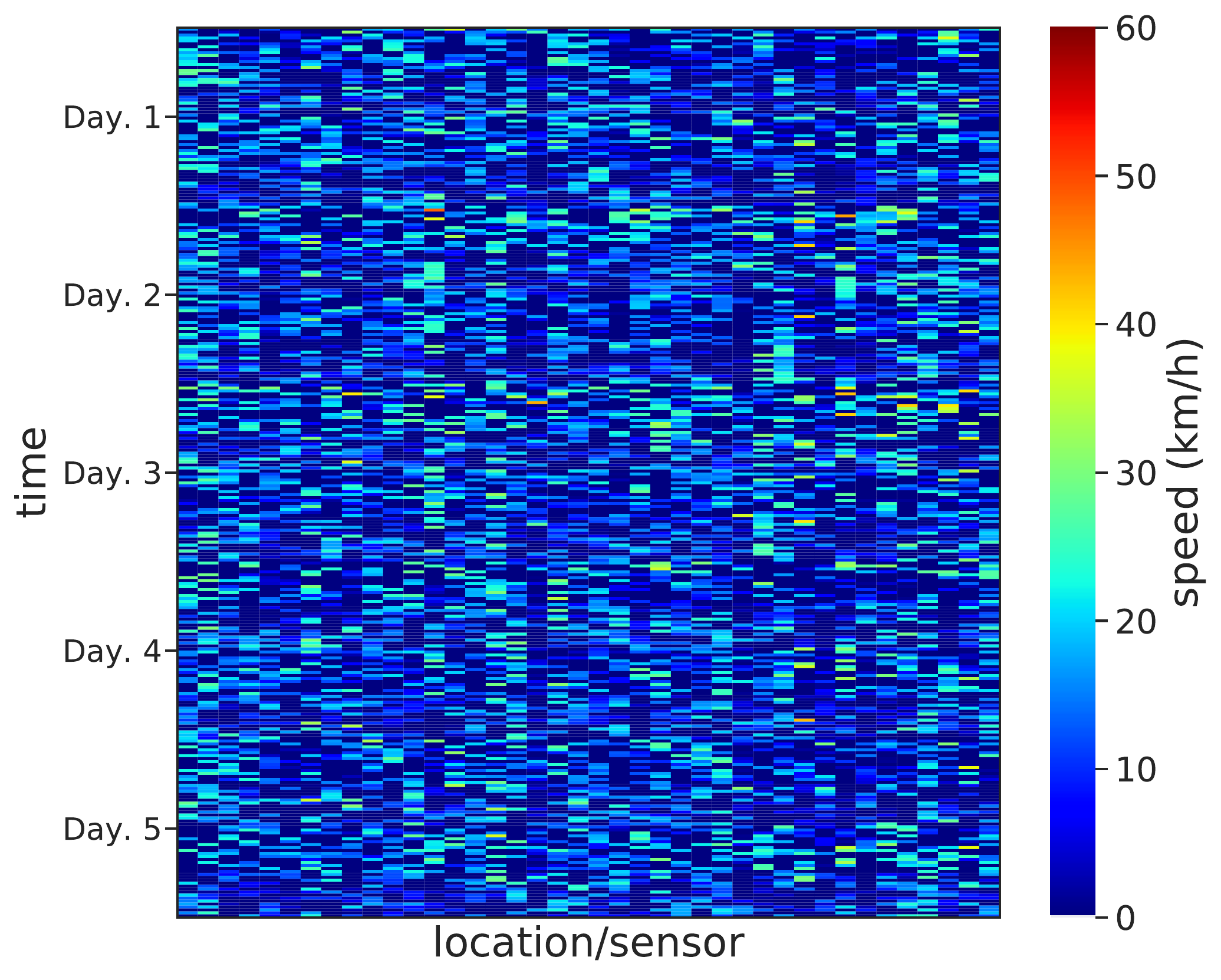}
}
\centering
\subfigure[Recovered matrix]{
\centering
\includegraphics[scale=0.26]{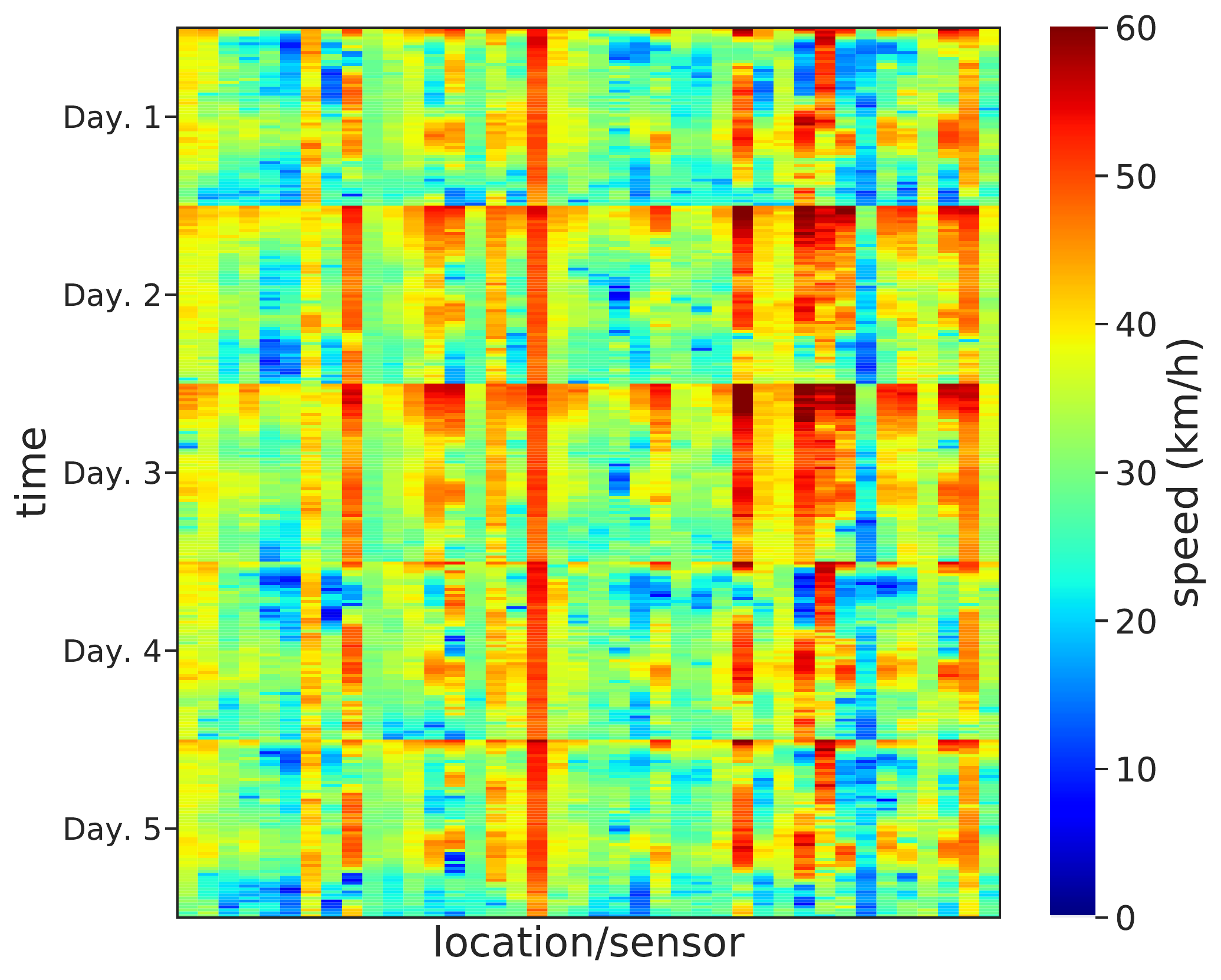}
}
\subfigure[Principal parts of recovered matrix]{
\centering
\includegraphics[scale=0.26]{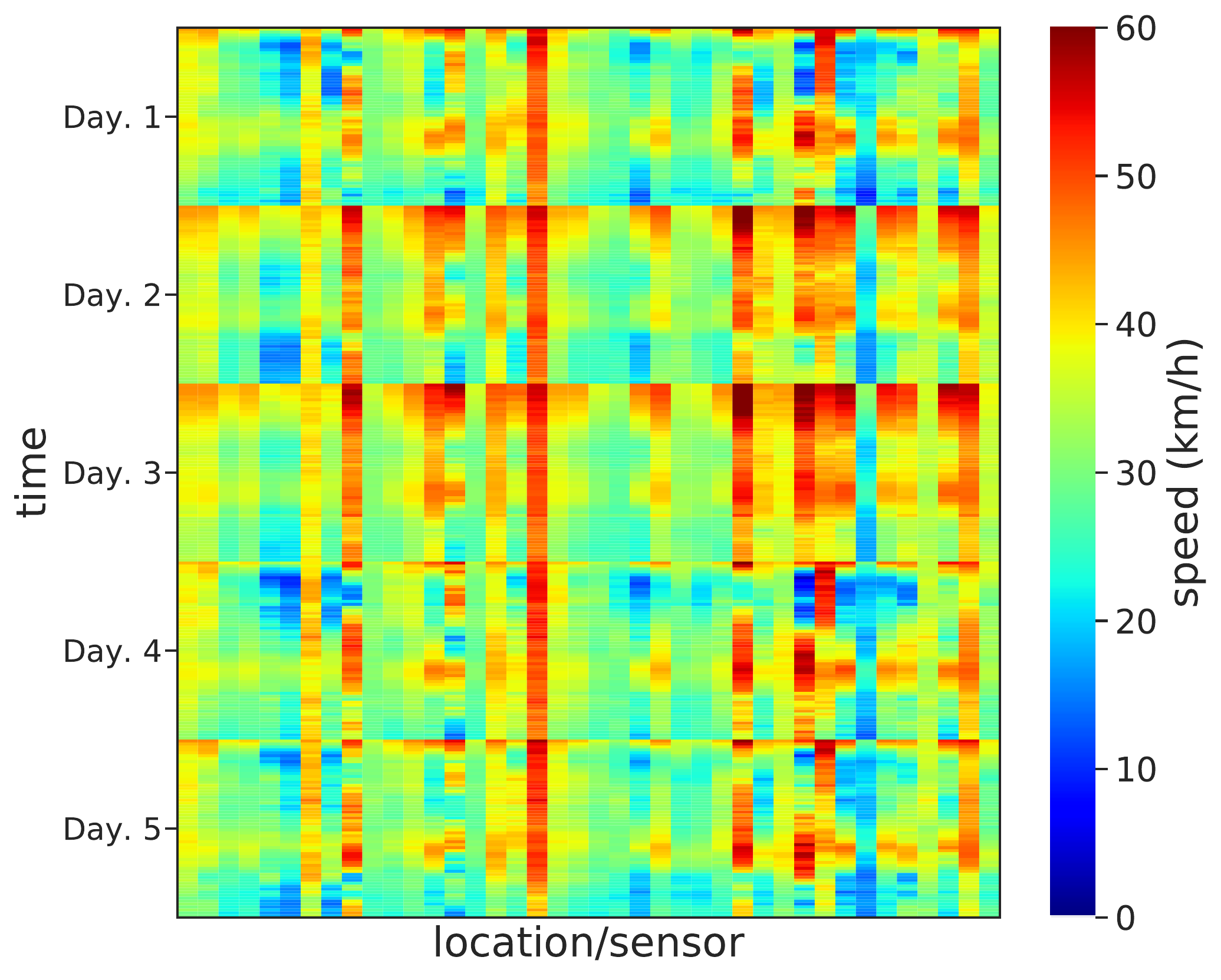}
}
\subfigure[Residual parts of recovered matrix]{
\centering
\includegraphics[scale=0.26]{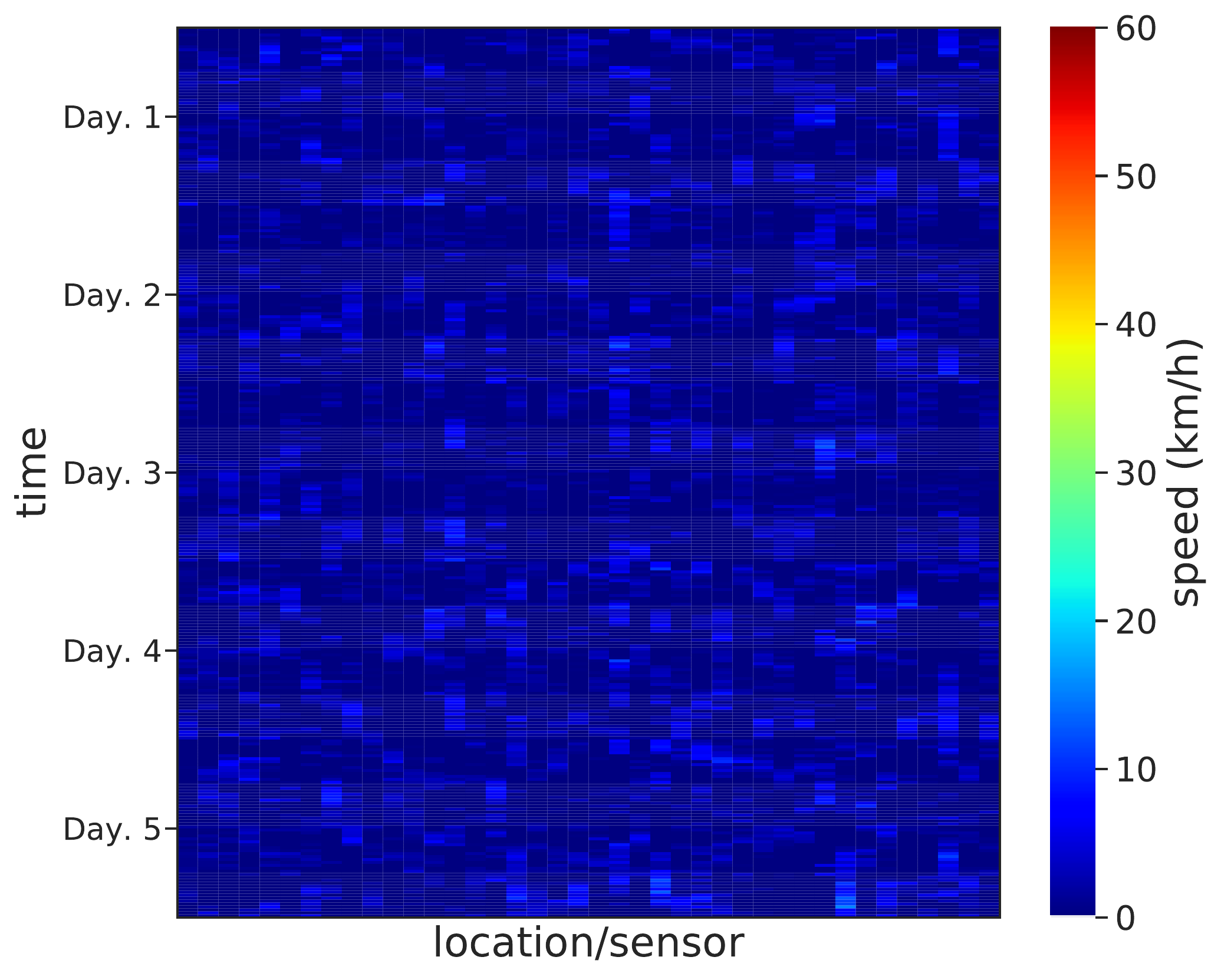}
}
\caption{\minew{Illustration of the truncation using five days' Guangzhou speed data. (a) Incomplete data observations organized in a time-location matrix with $50\%$ missing data. (b) Principal matrix reconstructed by the largest 5 singular values of the incomplete matrix. (c) Residual matrix reconstructed by the remaining singular values of the incomplete matrix. (d) Fully recovered matrix by LRTC-TSpN. (e) Principal matrix reconstructed by the largest 5 singular values of the recovered matrix. (f) Residual matrix reconstructed by the remaining singular values of the recovered matrix.} }
\label{truncation example}
\end{figure}

In \minew{Figure} \ref{truncation example}, it is obvious to find that the principal matrix of the incomplete matrix in \minew{Figure} \ref{truncation example}(b) still preserves the low-rank patterns to some extent, while the residual parts in \minew{Figure} \ref{truncation example}(c) construct a sparse residual matrix like noise. By keeping the major components unchanged and only incorporating the residual parts into the optimization process, LRTC-TSpN forces these residual parts to be 'low-rank' as much as possible and \minew{reconstructs} the principal components (i.e., the underlying low-rank pattern) of entire speed matrix using only a few largest singular values. The superposition of the reduced residual parts in (f) and the low-rank parts in (e) \minew{produces} the final recovered matrix in (d). Therefore, by introducing the truncation operator, we are capable of \minew{preventing the principal parts from over-shrinking, and reducing the residuals evidently, so as to construct a 'low-rank' output.}

\subsubsection{Comparison of the imputed singular values}
To give a further explanation of the strength of TSpN, we compare the imputed singular values of LRTC-TNN and LRTC-TSpN using Guangzhou dataset (50\% missing rate) and the results are reported in \minew{Figure} \ref{G-SVD}. We can see that for these singular values in the top of the order, the imputed ones by our model are closer to the real ones in all missing scenarios. This over-shrinkage problem of TNN makes the results deviate from the true singular values while the proposed TSpN model achieves a better approximation of larger singular values and keeps the low-rank components.

\begin{figure}[!ht]
\centering
\subfigure[$50\%$ RM]{
\centering
\includegraphics[scale=0.5]{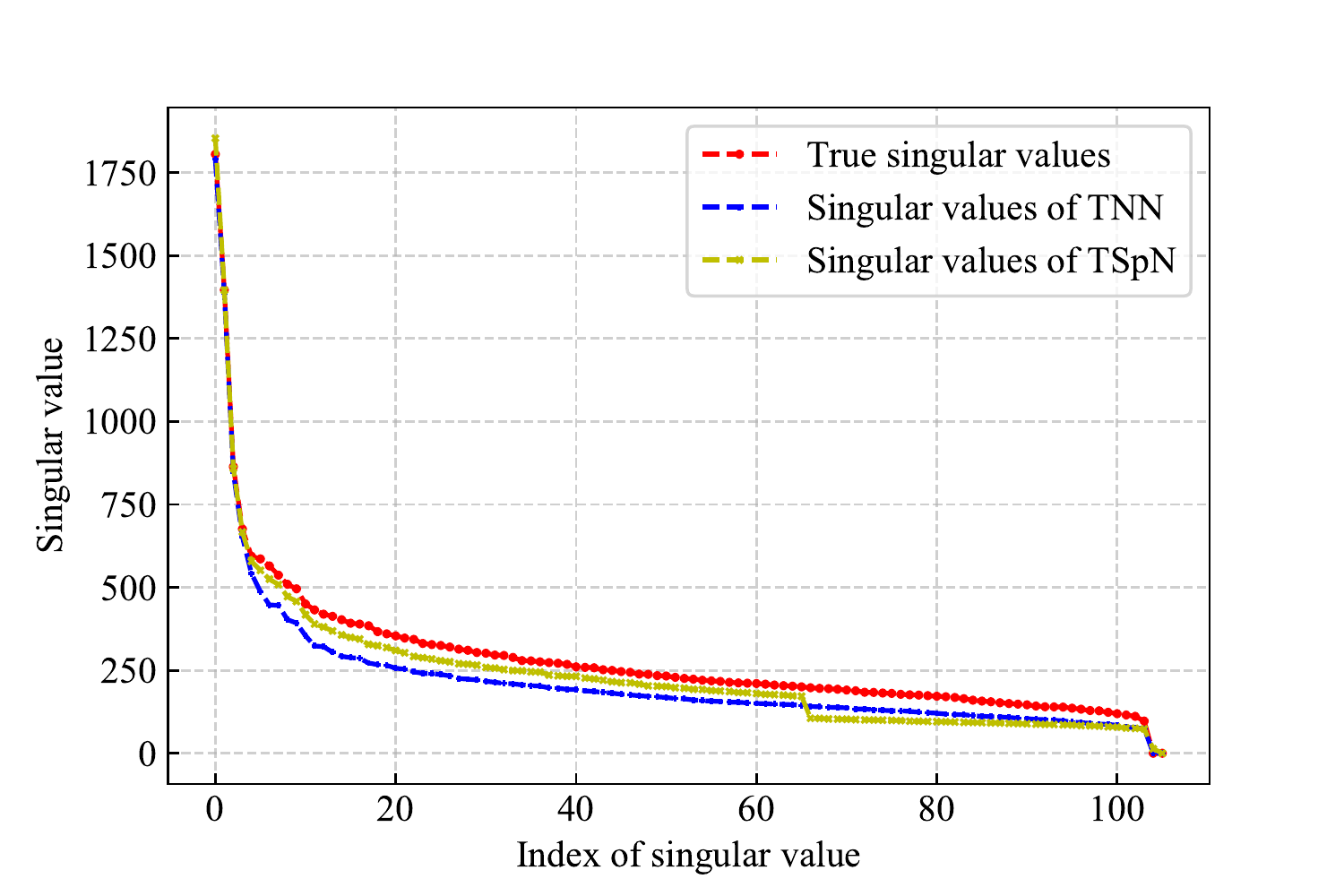}
}
\subfigure[$50\%$ FM-0]{
\centering
\includegraphics[scale=0.5]{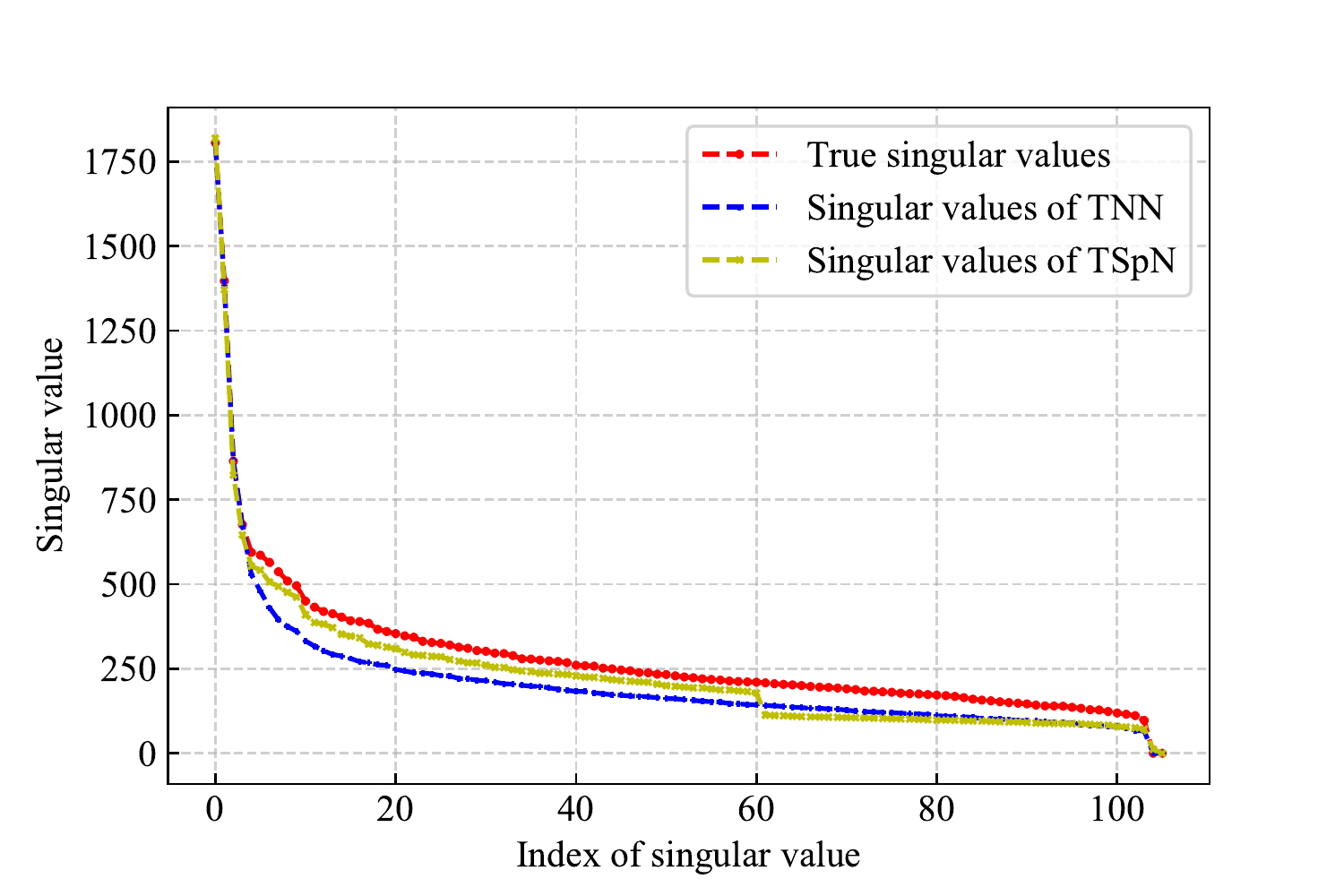}
}
\subfigure[$50\%$ FM-1]{
\centering
\includegraphics[scale=0.5]{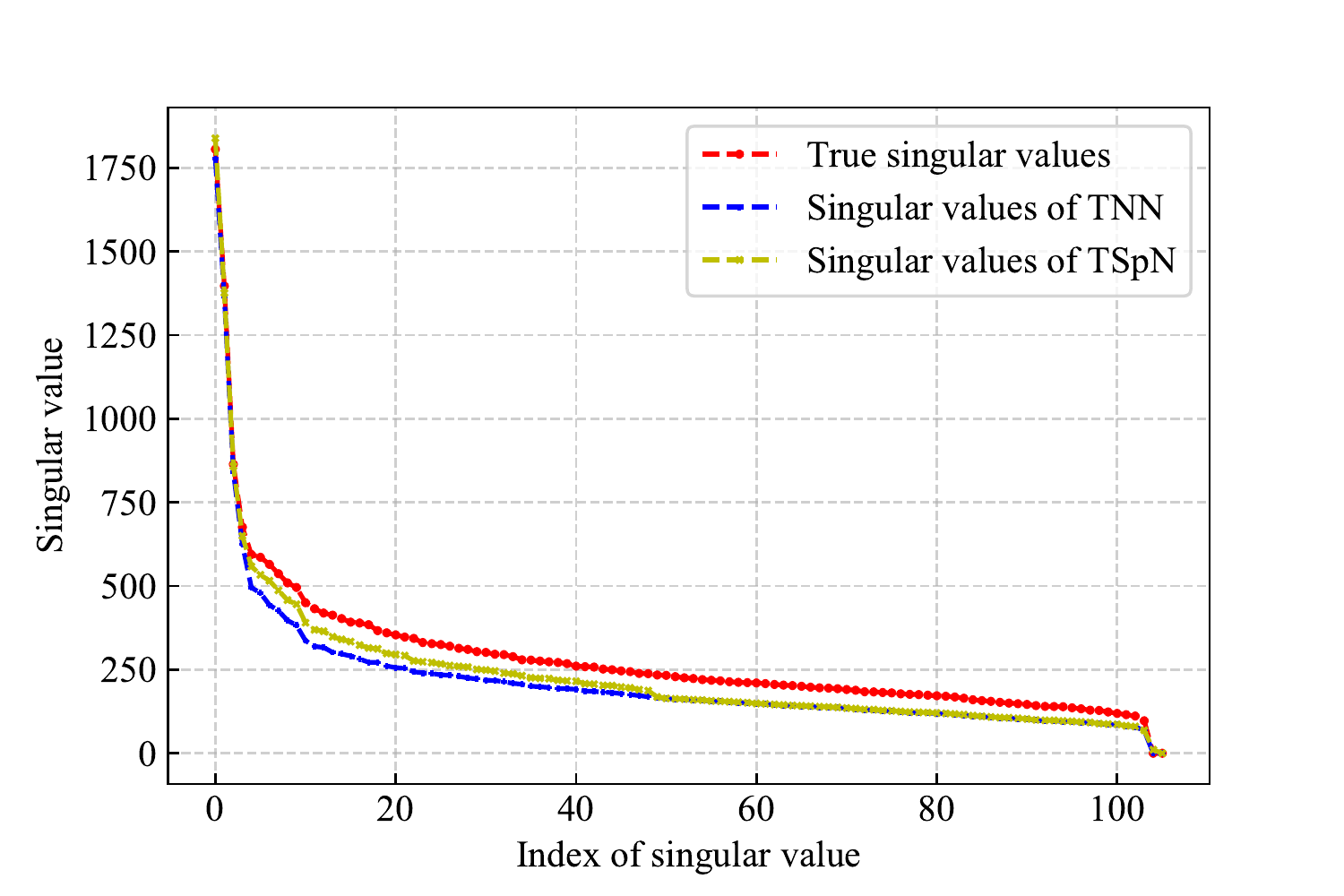}
}
\subfigure[$50\%$ FM-2]{
\centering
\includegraphics[scale=0.5]{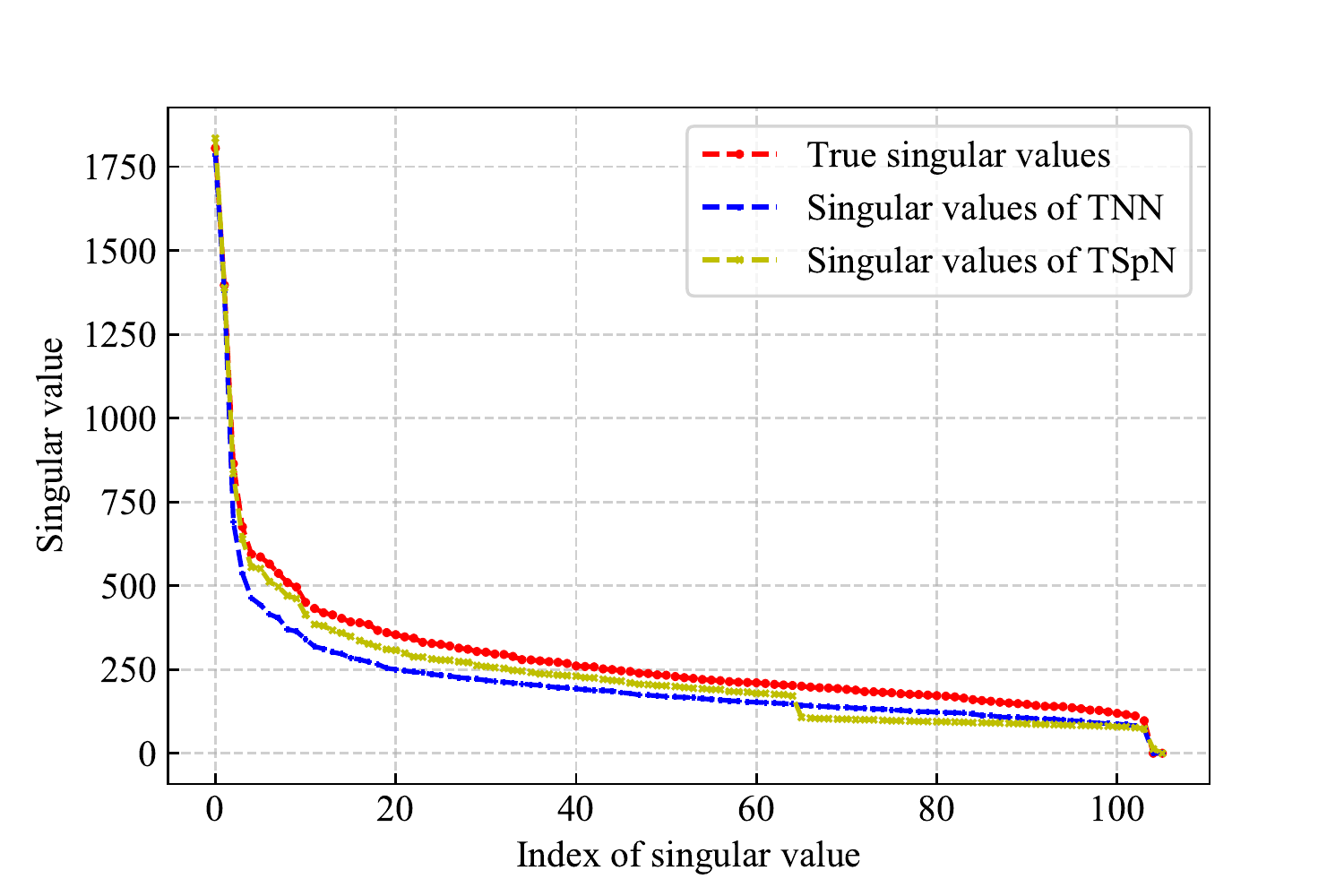}
}
\caption{Real and imputed singular values of Guangzhou speed data (the first 107 sensors). Note that the largest one singular value is not plotted on these figures for demonstration purposes. (a) $50\%$ random missing. (b) $50\%$ fiber mode-0 missing. (c) $50\%$ mode-1 missing. (d) $50\%$ mode-2 missing.}
\label{G-SVD}
\end{figure}

\subsubsection{Effect of truncation decay strategy}
To verify the effectiveness of truncation rate decay strategy as described in \minew{Section} \ref{decay strategy}, we further make a comparison between the constant truncation rate and decaying truncation rate scheme, named 'TSpN-fixed' and 'TSpN-decay', respectively. We take the Birmingham occupancy data as an example for demonstration with missing rates varying from $10\%$ to $90\%$ in four missing patterns. For 'TSpN-fixed', we keep $\theta = \theta_0$ by setting $\beta=0$, and keep other parameters the same as 'TSpN-decay'. Results are shown in \minew{Figure} \ref{decay compare plot}.

\begin{figure}[!htb]
\centering
\subfigure[Birmingham, RM]{
\centering
\includegraphics[scale=0.6]{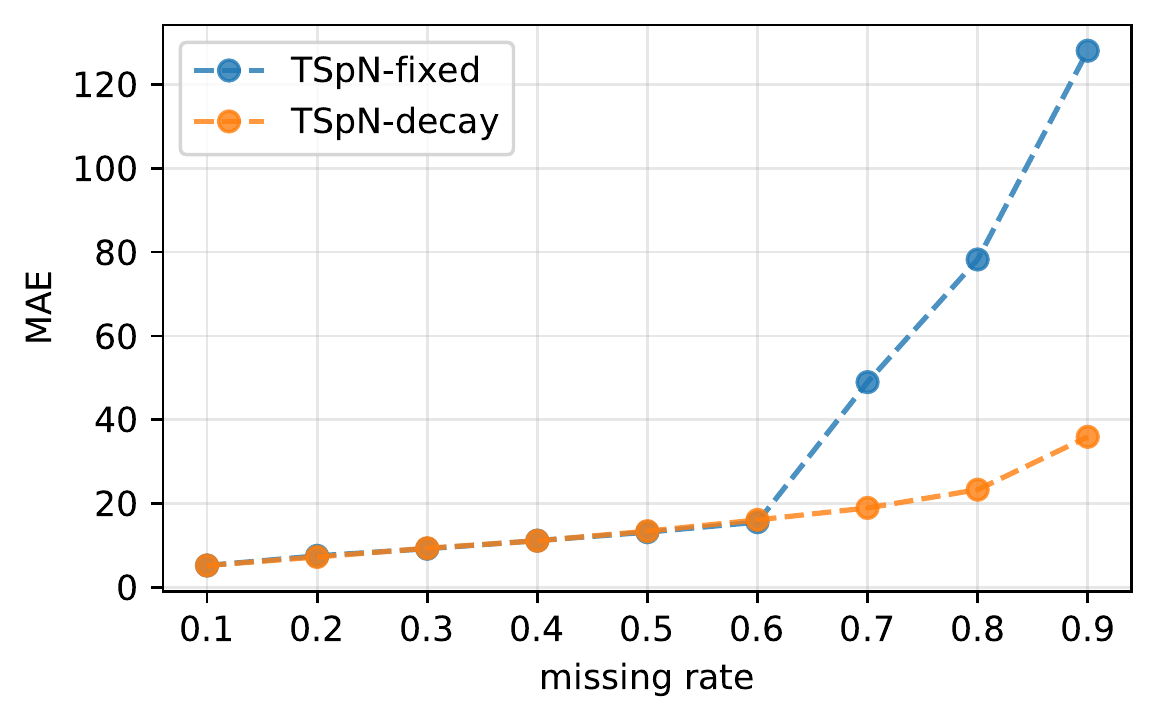}
}
\subfigure[Birmingham, FM-0]{
\centering
\includegraphics[scale=0.6]{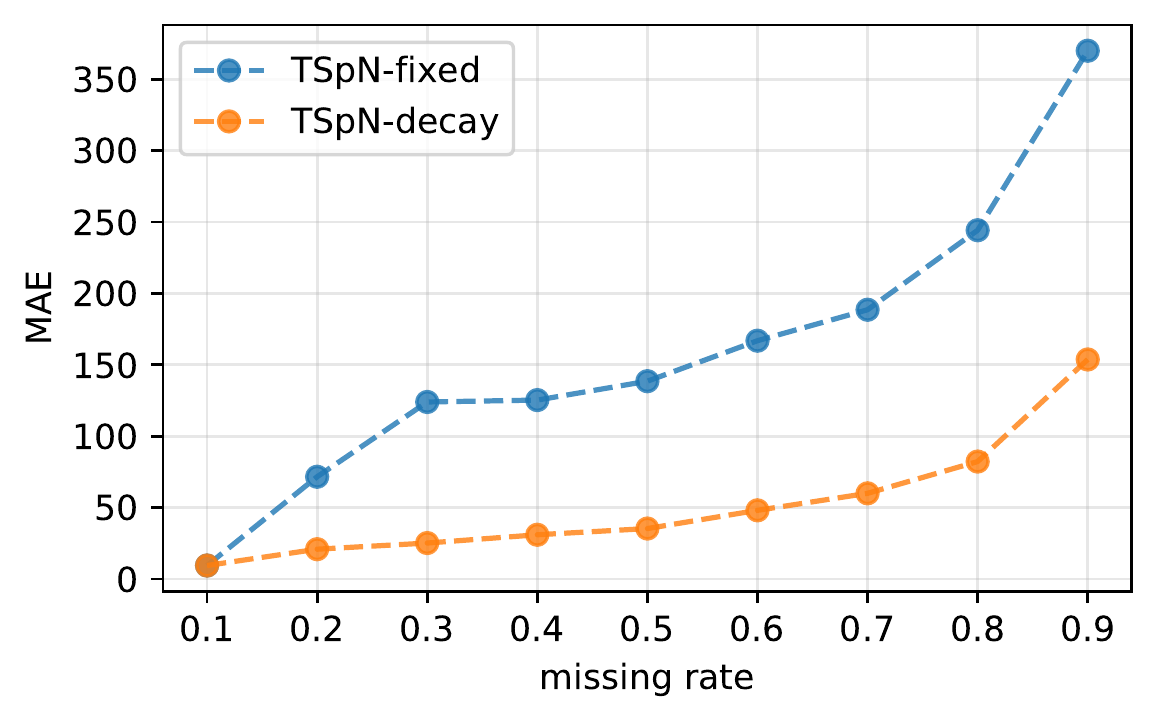}
}
\subfigure[Birmingham, FM-1]{
\centering
\includegraphics[scale=0.6]{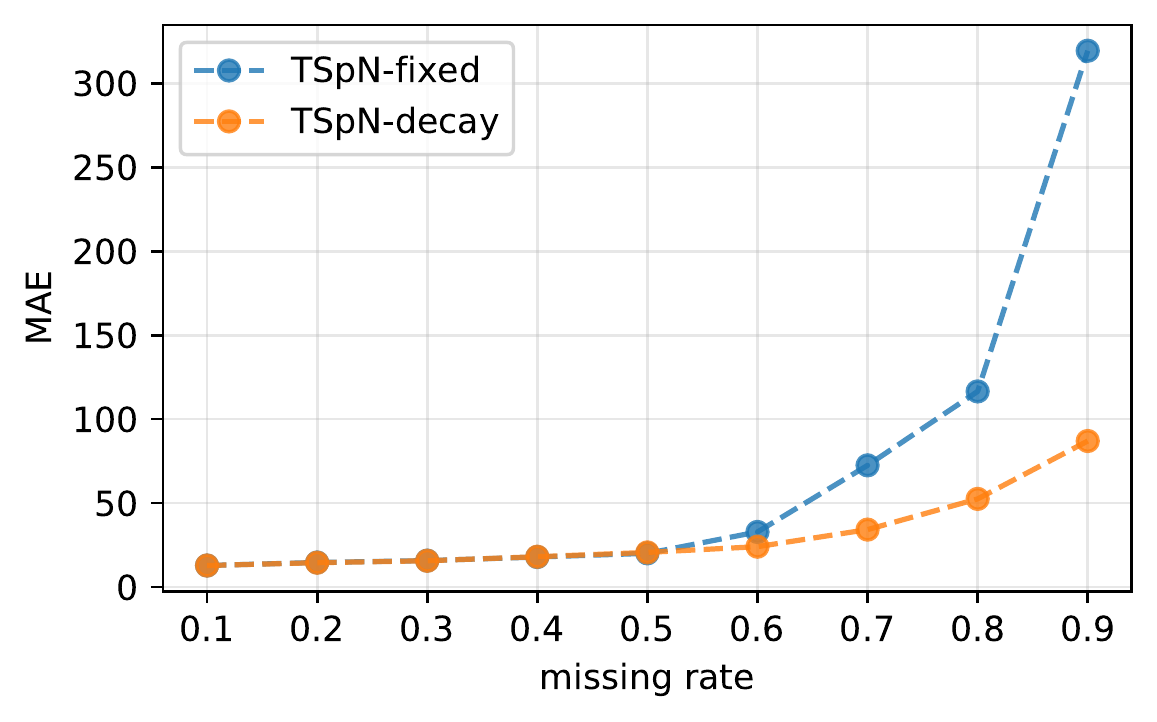}
}
\subfigure[Birmingham, FM-2]{
\centering
\includegraphics[scale=0.6]{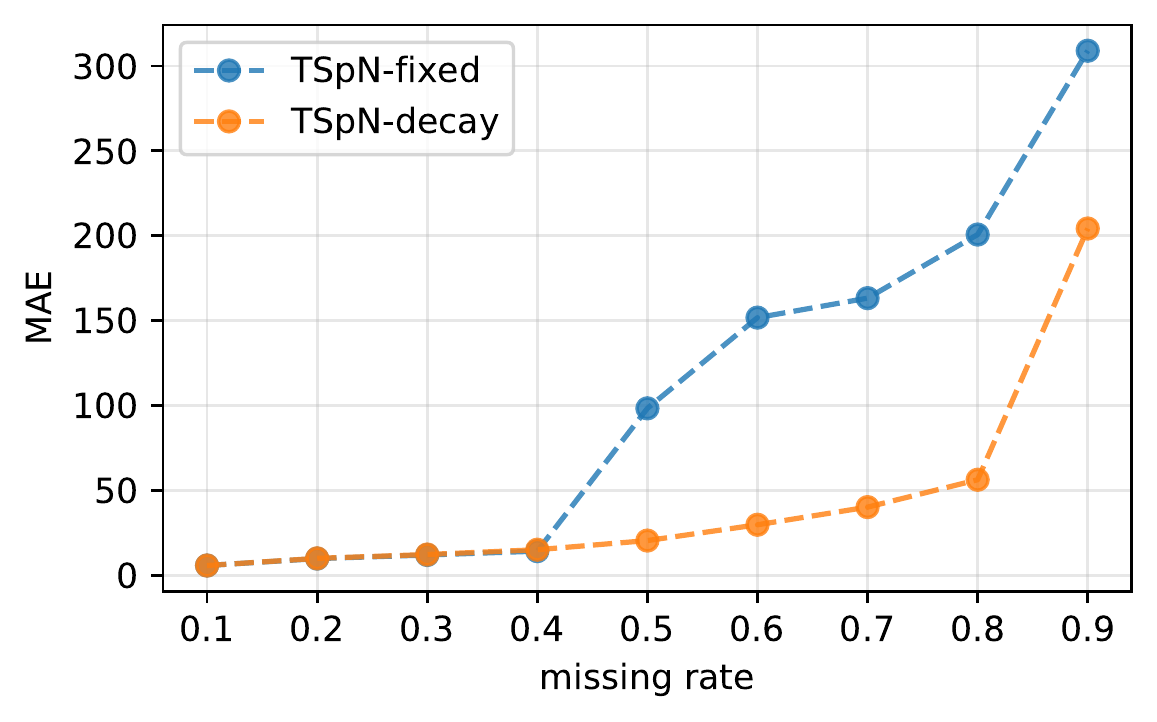}
}

\caption{Truncation scheme comparison in Birmingham data with different missing situations.}
\label{decay compare plot}
\end{figure}

These variation studies display that when loss rate is low (e.g., lower than $40\%$), there are sufficient observed data to model the latent patterns so that less information is conserved in the small singular values. This is reflected in \minew{Figure} \ref{decay compare plot} where the differences between the two variants are not significant at first, whereas the error of 'TSpN-fixed' surges thereafter. Therefore, this truncation decay strategy is an indispensable component to stabilize our method when missing rate becomes high.

\minew{
\subsubsection{Missing slices recovery}
A more challenging structural missing case is the recovery of missing 'slices', i.e., several entire columns or rows in the 'location' $\times$ 'time' matrix are not available. One can find that FM-1 missing in Figure \ref{FM} denotes the case that the 'time' rows are missing. When these missing rows occur at the same time interval every day, this becomes the 'time slices' missing scenario in the tensor. Experimental results of FM-1 cases reveal that LRTC-TSpN is capable of handling this kind of tasks. Moreover, \cite{yokota2018missing} have shown that multi-way delay embedding transform, aka., Hankelization, could be useful for augmenting shift-invariant features, thereby benefiting the missing 'time slices' recovery. While at the same time, this procedure increases the computational burden greatly.

Nevertheless, both our method and other LRTC approaches (even augmented by Hankelization) are not capable of dealing with the missing "spatial slices" situations (entire 'location' columns are missing), e.g., some locations do not equip with sensors at all. Unlike the inherent consistency of the time dimension, spatial consistency entails extra information. A spatial Laplacian regularization \citep{yang2021real} can be incorporated into the objective function of LRTC to tackle this issue. 
}

\section{Conclusion and future directions}\label{conclusions}

In this paper, we propose a novel truncated tensor Schatten $p$-norm minimization model  that preserves the advantages of both truncated nuclear norm and tensor Schatten $p$-norm for missing value imputation in spatiotemporal traffic data. Extensive experiments are conducted on four different types of traffic data under complicated missing patterns with the missing rates ranging from 10\% to 90\%, including random missing and three fiber mode-$n$ missing cases. Results reveal that: 1) our LRTC-TSpN model shows competitive performance compared with some state-of-the-art models and is robust under increasing missing rates and different missing cases, for instance, the proposed method achieves up to a 52\% MAE improvement in Birmingham dataset compared to the most competitive baseline even when the extreme missing ratio is as high as 90\%; 2) the Schatten $p$-norm serves as a better nonconvex surrogate for rank than nuclear norm so that the proposed LRTC-TSpN can capture the low-rank components of tensors more effective than other NN or TNN based models; 3) missing patterns have significant impacts on imputation accuracy, and the degree of influence is susceptible to data type; 4) truncation operation plays a key role in keeping the larger singular values from being over-shrinking, and the truncation rate is supposed to be lower when dealing with higher missing rate. Moreover, theoretical and empirical evidences also demonstrate that our method has robust and efficient computational performances, especially for convergence capability.

Despite promising results the proposed LRTC-TSpN model has shown, there are still some directions worth investigation in our future research. First, the truncation rate parameter, which is set globally to make an easy realization, may need to adjust to each unfolding matrix by setting various degrees of truncation to improve performance. \minew{Second, the kind of unfolding-based norm, including sum of nuclear norm and sum of Schatten $p$-norm, etc., is dependent on the SVD performed on each unfolding matrix. Although the proposed method has promising computational performances, the scalability of Schatten $p$-norm based model on massive data scenes could be further improved by accelerating the computation of SVD. In the future, we will explore the potential of randomized techniques for fast computation, such as randomized SVD \citep{halko2011finding},  to further reduce the computational cost.}

\section*{Acknowledgement}

This research was sponsored by the National Natural Science Foundation of China (52125208) and the Shanghai Municipal Science and Technology Major Project (No. 2021SHZDZX0100).

%\section*{Appendix A.}
%\label{Appendix A.}

% \section*{Appendix B.}
% \label{Appendix B.}

% \begin{figure}[!htb]
% \centering
% \begin{minipage}[t]{0.48\textwidth}
% \subfigure[Birmingham, RM]{
% \centering
% \includegraphics[scale=0.55]{p-experiment1-B-RM.pdf}
% }
% \end{minipage}
% \begin{minipage}[t]{0.48\textwidth}
% \subfigure[Birmingham, FM-2]{
% \centering
% \includegraphics[scale=0.55]{p-experiment1-B-FM2.pdf}
% }
% \end{minipage}
% \caption{\minew{Model performance under varying $p$ values on Birmingham dataset with different missing rates.}}
% \end{figure}

% \begin{figure}[!htb]
% \centering
% \begin{minipage}[t]{0.48\textwidth}
% \subfigure[Guangzhou-small, FM-1]{
% \centering
% \includegraphics[scale=0.65]{model compare-G1.pdf}
% }
% \end{minipage}
% \begin{minipage}[t]{0.48\textwidth}
% \subfigure[Portland-small, FM-2]{
% \centering
% \includegraphics[scale=0.65]{model compare-P2.pdf}
% }
% \end{minipage}
% \caption{\minew{Model comparison in Guangzhou-small and Portland-small data with different fiber-like missing situations.}}
% \end{figure}

% \begin{figure}[!htb]
% \centering
% \begin{minipage}[t]{0.48\textwidth}
% \subfigure[Guangzhou-small, $95\%$ FM-2]{
% \centering
% \includegraphics[scale=0.65]{convergence-G-FM2.pdf}
% }
% \end{minipage}
% \begin{minipage}[t]{0.48\textwidth}
% \subfigure[Portland-small, $95\%$ FM-2]{
% \centering
% \includegraphics[scale=0.65]{convergence-P-FM2.pdf}
% }
% \end{minipage}
% \caption{\minew{The MAE variation curve with the epoch increasing until convergence.}}
% \end{figure}

%\newpage
% \bibliographystyle{elsarticle-harv}
% \bibliography{references}

\end{document}